\theoremstyle{plain}
\newtheorem{theorem}{Theorem}[section]
\newtheorem{lemma}[theorem]{Lemma}
\newtheorem{corollary}[theorem]{Corollary}
\theoremstyle{definition}
\theoremstyle{remark}
\newtheorem{remark}[theorem]{Remark}
\def\eqref#1{(\ref{#1})}
\DeclareMathAlphabet\mathbfcal{OMS}{cmsy}{b}{n}
\icmltitlerunning{Emergent Equivariance in Deep Ensembles}
\begin{document}

\twocolumn[
\icmltitle{Emergent Equivariance in Deep Ensembles}

% It is OKAY to include author information, even for blind
% submissions: the style file will automatically remove it for you
% unless you've provided the [accepted] option to the icml2024
% package.

% List of affiliations: The first argument should be a (short)
% identifier you will use later to specify author affiliations
% Academic affiliations should list Department, University, City, Region, Country
% Industry affiliations should list Company, City, Region, Country

% You can specify symbols, otherwise they are numbered in order.
% Ideally, you should not use this facility. Affiliations will be numbered
% in order of appearance and this is the preferred way.
\icmlsetsymbol{equal}{*}

\begin{icmlauthorlist}
\icmlauthor{Jan E. Gerken}{equal,chalmers}
\icmlauthor{Pan Kessel}{equal,prescientdesign}
\end{icmlauthorlist}

\icmlaffiliation{chalmers}{Department of Mathematical Sciences, Chalmers University of Technology and the University of Gothenburg, SE-412 96 Gothenburg, Sweden}
\icmlaffiliation{prescientdesign}{Prescient Design, Genentech Roche, Basel, Switzerland}

\icmlcorrespondingauthor{Jan Gerken}{gerken@chalmers.se}
\icmlcorrespondingauthor{Pan Kessel}{pan.kessel@roche.com}

% You may provide any keywords that you
% find helpful for describing your paper; these are used to populate
% the "keywords" metadata in the PDF but will not be shown in the document
\icmlkeywords{Machine Learning, ICML}

\vskip 0.3in
]

% this must go after the closing bracket ] following \twocolumn[ ...

% This command actually creates the footnote in the first column
% listing the affiliations and the copyright notice.
% The command takes one argument, which is text to display at the start of the footnote.
% The \icmlEqualContribution command is standard text for equal contribution.
% Remove it (just {}) if you do not need this facility.

%\printAffiliationsAndNotice{}  % leave blank if no need to mention equal contribution
\printAffiliationsAndNotice{\icmlEqualContribution} % otherwise use the standard text.

\begin{abstract}
  We show that deep ensembles become equivariant for all inputs and at all training times by simply using full data augmentation. Crucially, equivariance holds off-manifold and for any architecture in the infinite width limit. The equivariance is emergent in the sense that predictions of individual ensemble members are not equivariant but their collective prediction is. Neural tangent kernel theory is used to derive this result and we verify our theoretical insights using detailed numerical experiments.
\end{abstract}

\section{Introduction}
Deep ensembles are a standard workhorse of deep learning practitioners \cite{lakshminarayanan2017simple}. They operate by averaging the prediction of several networks and therefore offer a straightforward way of estimating the uncertainty  of the prediction. For example, deep ensembles are widely used in the medical domain such as in cancer cell detection in pathology or in protein folding in drug design as quantifying the confidence of the output is critical in these fields~\cite{saib2020hierarchical, ruffolo2023fast}.

The main message of this paper is that deep ensembles offer a novel and straightforward way to enforce equivariance with respect to symmetries of the data. Specifically, we show that upon full data augmentation, deep ensembles become equivariant \emph{at all training steps and for any input} in the large width limit. While this statement would be trivial for a fully trained model and on the data manifold, our results are significantly more powerful in that they also hold off-manifold and even at initialization. A deep ensemble is thus indistinguishable from a fully equivariant network. It is important to emphasize that this manifest equivariance is emergent: while the prediction of the ensemble is equivariant, the predictions of its members are not. In particular, the ensemble members are not required to have an equivariant architecture.

We rigorously derive this surprising emergent equivariance by using the duality between neural networks and kernel machines in the large width limit~\citep{neal1996,lee2018,yang2020a}. The neural tangent kernel (NTK) describes the evolution of deep neural networks during training \citep{jacot2018}. In the limit of infinite width, the neural tangent kernel is frozen, i.e., it does not evolve during training and the training dynamics can be solved analytically. As a random variable over initializations, the output of the neural network after arbitrary training time follows a Gaussian distribution whose mean and covariance are available as closed form expressions \citep{lee2019a}. In this context, deep ensembles can be interpreted as a Monte-Carlo estimate of the corresponding expected network output. This insight allows us to theoretically analyze the effect of data augmentation throughout training and show that the deep ensemble is fully equivariant.

In practice, this emergent equivariance of deep ensemble cannot be expected to hold perfectly and exact equivariance will be broken, since neural networks are not infinitely wide and the expectation value over initalizations is estimated by Monte-Carlo. Furthermore, in the case of a continuous symmetry group, data augmentation cannot cover the entire group orbit and is thus approximate. We analyze the resulting breaking of equivariance and demonstrate empirically that deep ensembles nevertheless show a competetively high degree of equivariance even with a low number of ensemble members.

The main contributions of our work are:\\[-2em]
\begin{itemize}
    \item We prove in our main theorem~\ref{th:main} that infinitely wide deep ensembles are equivariant at all stages of training and any input if trained with full data augmentation using the theory of neural tangent kernels.\\[-1.5em]
    \item We derive bounds for deviations from equivariance due to finite size as well as data augmentation for a continuous group.\\[-1.5em]
    \item We empirically demonstrate the emergent equivariance in three settings: Ising model, FashionMNIST, and a high-dimensional medical dataset of histological slices.
\end{itemize}

\section{Related Works}
\paragraph{Deep Ensembles, Equivariance.} There is a too extensive body of literature on both equivariance and deep ensembles to be summarized here, reflecting their central importance to modern deep learning. We refer to~\citet{gerken2023} and~\citet{ganaie2022} for reviews, respectively. The relation between manifest equivariance and data augmentation concerning model performance was studied by~\citet{gerken2022} and for training dynamics by~\citet{flinth2023}.
\paragraph{Equivariance without architecture constraints.} Equivariance can also be achieved by symmetrizing the network output over (an appropriately chosen subset of) the group orbit \cite{puny2021frame, basu2023equivariant, basu2023equi}. This approach is orthogonal to ours: instead of an ensemble of models, an ensemble of outputs is considered. Note that the memory footprint of the symmetrization depends on the size of the group orbit while, for deep ensembles, it depends on the number of ensemble members. Another architecture-agnostic method to reach equivariance is to homogenize inputs using a canonicalization network \cite{kaba2023equivariance, mondal2023equivariant}. Canonicalization and symmetrization lead to exact equivariance (up to possible discretization effects) while deep ensembles naturally allow for uncertainty estimation and increased robustness.
\paragraph{Neural Tangent Kernel.} That Bayesian neural networks behave as Gaussian processes was first discovered by~\citet{neal1996}, this result was extended to deep neural networks by~\citet{lee2018}. Neural tangent kernels (NTKs), which capture the evolution of wide neural networks under gradient descent training, were introduced by~\citet{jacot2018}. The literature on this topic has since expanded considerably so that we can only cite some selected works, a review on the topic is given by~\citet{golikov2022}. The NTK for CNNs was computed by~\citet{arora2019}. \citet{lee2019a} used the NTK to show that wide neural networks trained with gradient descent become Gaussian processes and \citet{yang2020a} introduced a comprehensive framework to study scaling limits of wide neural networks rigorously. This framework was used by~\citet{yang2022d} to find a parametrization suitable for scaling networks to large width. NTKs were used to study GANs~\cite{franceschi2022}, PINNs~\cite{wang2022b}, backdoor attacks~\cite{hayase2022} as well as pruning~\cite{yang2023a}, amongst other applications. Corrections to the infinite-width limit, in particular in connection to quantum field theory, have been investigated as well~\citep{huang2020,yaida2020,halverson2021,erbin2022}. 
\paragraph{Data augmentation and kernel machines.} \citet{mroueh2015}, \citet{raj2017} and \citet{mei2021} study properties of kernel machines using group-averaged kernels but they do not consider wide neural networks. \citet{dao2019} use a Markov process to model random data augmentations and show that an optimal Bayes classifier in this context becomes a kernel machine. It is also shown that training on augmented data is equivalent to using an augmented kernel. \citet{li2019a} introduce new forms of pooling to improve kernel machines. As part of their analysis, they derive the analogous augmented kernel results as \citet{dao2019} for the NTK at infinite training time. In contrast, we focus on the symmetry properties of the resulting (deep) ensemble of infinitely wide neural networks. In particular, we analyze the behavior of the ensemble at finite training time, show that their assumption of an ``equivariant kernel'' is satisfied under very mild assumptions on the representation (cf.\ Theorem~\ref{lemma1}), include equivariance on top of invariance and derive a bound for the invariance error accrued by approximating a continuous group with finitely many samples.

\section{Deep Ensembles and Neural Tangent Kernels}
In this section, we give a brief overview over deep ensembles and their connection to NTKs.

\vspace{-0.25em}
\paragraph{Deep Ensemble.} Let $f_w: X \to \mathbb{R}$ be a neural network with parameters $w$ which are initialized by sampling from the density $p$, i.e.\ $w \sim p$. For notational simplicity, we consider only scalar-valued networks in the main part of the paper unless stated otherwise. Our results however hold also for vector-valued networks. The output of the deep ensemble $\bar{f}_t$ of the network $f_w$ is then defined as the expected value over initializations of the trained ensemble members
\begin{align}
    \bar{f}_t(x) = \mathbb{E}_{w \sim p} \left[ f_{\mathcal{L}_t w}(x) \right] \,, \label{eq:deepensemble}
\end{align}
where the operator $\mathcal{L}_t$ maps the initial weight $w$ to its corresponding value after $t$ steps of gradient descent.
In practice, the deep ensemble is approximated by a Monte-Carlo estimate of the expectation value using a finite number $M$ of initializations
\begin{align}
    \bar{f}_t(x) \approx \hat{f}_t(x) =  \frac{1}{M} \sum_{i=1}^M f_{\mathcal{L}_t w_i}(x) \,, \label{eq:deepensemble_mc}
\end{align}
where $w_i \sim p$.
This amounts to performing $M$ training runs with different initializations and averaging the outputs of the resulting models. It is worthwhile to note that in the literature, the average $\hat{f}_t$ as defined in \eqref{eq:deepensemble_mc} is often referred to as the deep ensemble~\citep{lakshminarayanan2017simple}. In this work, we will however use the term deep ensemble to refer to the expectation value $\bar{f}_t$ of \eqref{eq:deepensemble}. Analogously, we refer to $\hat{f}_t$ as the MC estimate of the deep ensemble $\bar{f}_t$. 

\vspace{-0.25em}
\paragraph{Relation to NTK.}
In the infinite width limit, a deep ensemble follows a Gaussian distribution described by the neural tangent kernel \citep{jacot2018}
\begin{align}
    \Theta(x, x')  = \sum_{l=1}^L \mathbb{E}_{w \sim p} \left[ \left(\frac{\partial f_w(x)}{\partial w^{(l)}}\right)^\top \frac{\partial f_w(x')}{\partial w^{(l)}}\right]  \,,
    \label{eq:ntk_def}
\end{align}
where $w^{(l)}$ denotes the parameters of the $l$\textsuperscript{th} layer and we have assumed that the network has a total of $L$ layers. Here, the width is taken to infinity, resulting in Gaussian distributions, whose mean and covariance over the initialization distribution is then studied. In general, $\Theta$ has additional axes for dimensions not taken to infinity, e.g.\ pixels in CNNs and output channels in MLPs, which we will keep implicit in most of the main part. In the following, we use the notation
\begin{align}
\Theta_{ij} = \Theta(x_i, x_j)
\end{align}
for the Gram matrix, i.e.\ the kernel evaluated on two elements $x_i$ and $x_j$ of the training set
\begin{align}
 \mathcal{T} = (\mathcal{X}, \mathcal{Y}) = \{(x_i, y_i)\,|\,i=1,\dots,|\mathcal{T}|\} \,.
\end{align}
Using the NTK, we can analytically calculate the distribution of ensemble members in the large width limit for a given input $x$ at any training time $t$ for learning rate $\eta$: networks trained with the MSE loss follow a Gaussian process distribution with mean function $\mu_t$ and covariance function $\Sigma_t$ which are given in terms of the NTK by~\citet{lee2019a}
\begin{align}
 \mu_t(x) &= \Theta(x, x_i) \, \left[ \Theta^{-1} \, T_{t} \right]_{ij} \,y_j \,, \label{eq:output_mean}\\
\Sigma_t(x,x') &= \mathcal{K}(x, x') + \Sigma^{(1)}_t(x,x') - ( \Sigma^{(2)}_t(x,x') + \textrm{h.c.} ) \,,\label{eq:output_var}
\end{align}
where $T_t = (\mathbb{I} - \exp( - \eta \Theta t))$ and all sums over the training set are implicit by the Einstein summation convention and we have defined
\begin{align*}
    \Sigma_t^{(1)}(x,x') &= \Theta(x, x_i) \left[\Theta^{-1} \, T_t \; \mathcal{K}  T_t \, \Theta^{-1} \right]_{ij} \, \Theta(x_j, x') \,,\\
    \Sigma_t^{(2)}(x,x') &= \Theta(x, x_i) \, \left[ \Theta^{-1} \, T_t \right]_{ij} \, \mathcal{K}(x_j, x') \,,
\end{align*}
with the NNGP kernel
\begin{align}
\mathcal{K}(x, x')=\mathbb{E}_{w \sim p} \left[f_w(x) \, f_w(x') \right] \,.
\label{eq:nngp_def}
\end{align}
The Gram matrix of the NNGP is given by $\mathcal{K}_{ij}=\mathcal{K}(x_i,x_j)$. For $\Sigma_t$ in a less compact notation, see~\eqref{eq:cov_fct_long} in Appendix~\ref{app:ntk-intro}.

\begin{remark}
The function $\mu_t$ in \eqref{eq:output_mean} captures the mean output of networks trained on different initializations for time $t$. Therefore, it is just the expected ensemble output \eqref{eq:deepensemble}, $\bar{f}_t(x)=\mu_t(x)$. The variance of this quantity is given by the covariance function evaluated at identical arguments $\Sigma_t(x):=\Sigma_t(x,x)$.
\end{remark}

In Appendix~\ref{app:ntk-intro} we provide a brief review of NTK theory for readers unfamiliar with it.

In practice, the cost of inverting the Gram matrix is prohibitive. Therefore, one typically estimates the deep ensemble by \eqref{eq:deepensemble_mc} using $M$ trained models with different random initalizations. Nevertheless, the dual NTK description allows us to reason about the properties of the exact deep ensemble. In the following, we will use this duality to theoretically investigate the effect of data augmentation on deep ensembles.

\section{Equivariance and Data Augmentation}
In this section, we summarize basics facts about representations of groups, equivariance, and data augmentation and establish our notation.

\paragraph{Representations of Groups.} Groups abstractly describe symmetry transformations. In order to describe how a group transforms a vector, we use group representations. A (linear) representation of a group $G$ is a map $\rho: G \to \textrm{GL}(V)$ where $V$ is a vector space and $\rho$ is a group homomorphism, i.e.\ $\rho(g_1) \rho(g_2) = \rho(g_1 g_2)$ for all $g_1, g_2 \in G$. A representation is called orthogonal if $\rho(g^{-1}) = \rho(g)^\top$, i.e., if it has orthogonal representation matrices. 

\vspace{-0.25em}
\paragraph{Equivariance.} For learning tasks in which data $x$ and labels $y$ transform under group representations, the map $x\mapsto y$ has to be compatible with the symmetry group; this property is called equivariance. Formally, let $f: X \to Y$ denote a (possibly vector valued) model with input space $X$ and output space $Y$ on which the group $G$ acts with representations $\rho_X$ and $\rho_Y$, respectively. Then, $f$ is equivariant with respect to the representations $\rho_X$ and $\rho_Y$ if it obeys
\begin{align}
    \rho_Y(g) f(x) = f( \rho_X(g) x)   \qquad \forall x \in X, g \in G \,.\label{eq:def_equiv}
\end{align}
Similarly, a model $f$ is invariant with respect to the representation $\rho_X$ if it satisfies the above relation with $\rho_Y$ being the trivial representation, i.e. $\rho_Y(g)=\mathbb{I}$ for all $g \in G$.
Considerable work has been done to construct manifestly equivariant neural networks with respect to specific, practically important special cases of \eqref{eq:def_equiv}. It has been shown both empirically (e.g.\ in \citet{thomas2018}, \citet{bekkers2018}) and theoretically (e.g.\ in \citet{sannai2021}, \citet{elesedy2021a}) that equivariance can lead to better sample efficiency, improved training speed and greater robustness. A downside of equivariant architectures is that they need to be purpose-built for symmetry properties of the problem at hand since standard well-established architectures are mostly not equivariant.

\vspace{-0.25em}
\paragraph{Data Augmentation.} An alternative approach to incorporate information about the symmetries of the data into the model is data augmentation. Instead of using the original training set $\mathcal{T}$, we use a set which is augmented by all elements of the group orbit, i.e.
\begin{align}
\mathcal{T}_{\mathrm{aug}} = \{ (\rho_X(g) x, \rho_Y(g) y) | g \in G, (x,y) \in \mathcal{T} \} \,.
\end{align}
In stochastic gradient descent, we randomly draw a minibatch from this augmented training set to estimate the gradient of the loss.
If the group has finite order, data augmentation has the immediate consequence that the action of any group element $g \in G$ on a training sample can be written as a permutation $\pi_g$ of the indices of the augmented training set $\mathcal{T}_{\mathrm{aug}}$, i.e.
\begin{align}
    \rho_X(g) x_i = x_{\pi_g(i)}   \qquad \textrm{and} \qquad \rho_Y(g) y_i = y_{\pi_g(i)} \,, \label{eq:perm_group_action}
\end{align}
where $i \in \{1, \dots, |\mathcal{T}_{\mathrm{aug}}|\}$. 
Data augmentation has the advantage that it does not impose any restrictions on the architecture and is hence straightforward to implement. However, the symmetry is only learned and it can thus be expected that the model is only (approximately) equivariant towards the end of training and on the data manifold. Furthermore, the model cannot benefit from the restricted function space which the symmetry constraint specifies.

\begin{figure*}[tb]
  \centering
  \includegraphics[width=0.9\textwidth]{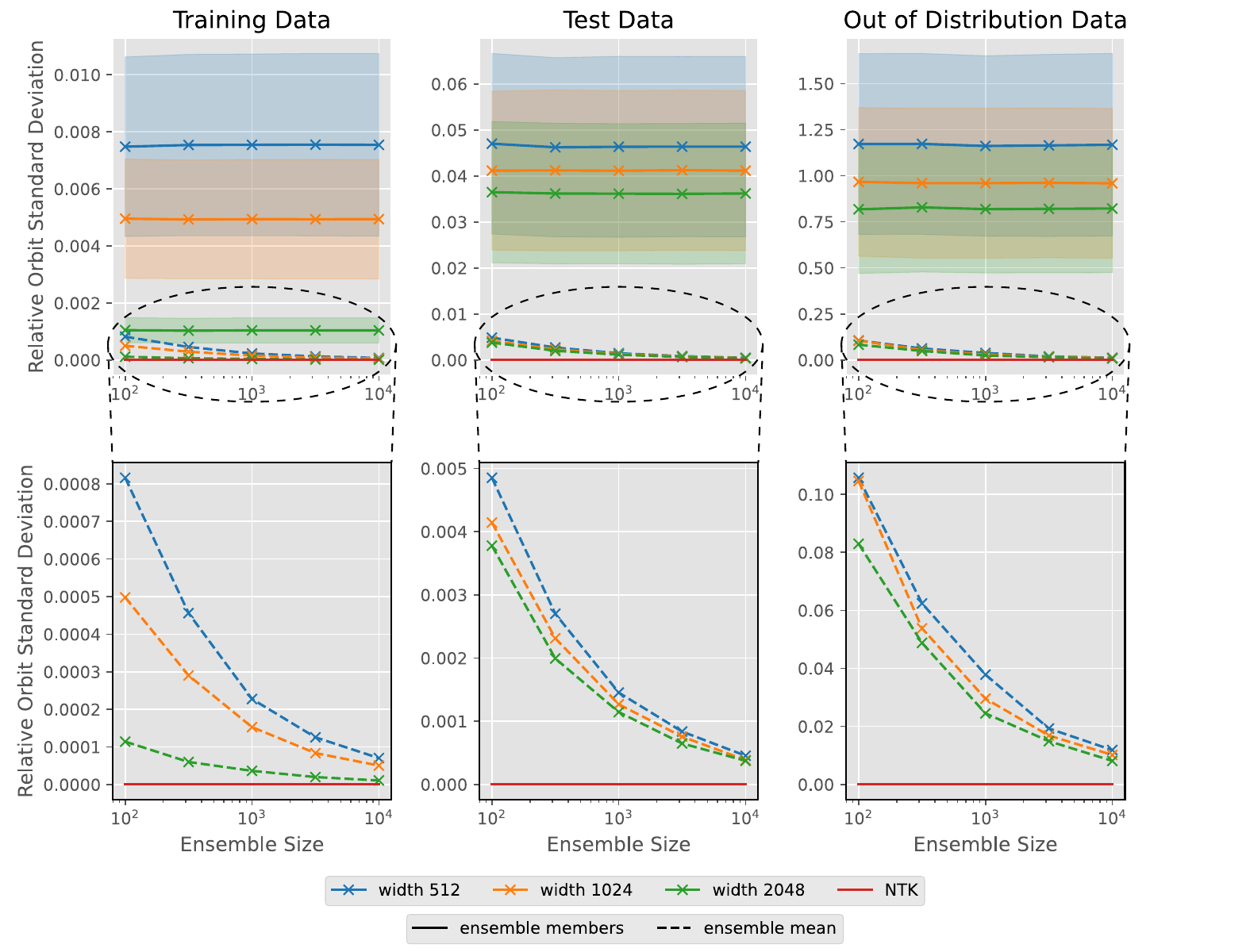}
  \caption{Invariance of predicted energies with respect to lattice rotations by $90^{\circ}$. Solid lines refer to predictions of individual ensemble members and their standard deviation, dashed lines refer to mean predictions of the ensemble. Zoom-ins in the second row show that the invariance of mean predictions converges to NTK invariance for large ensembles and network widths.}
  \label{fig:orbit_stds}
\end{figure*}

\section{Emergent Equivariance for Large-Width Deep Ensembles}
In this section, we prove that any large-width deep ensemble is emergently equivariant when data augmentation is used. After stating our assumptions, the sketch the proof in three steps.  

\vspace{-0.5em}
\paragraph{Assumptions.}  We consider a finite group $G$ with representations $\rho_X$ and $\rho_Y$ as well as data augmentation with respect to these representations, as discussed above. The case of continuous groups will be discussed subsequently. If the input or output have spatial axes $a$, the representations $\rho_X$ and $\rho_Y$ act via a representation $\rho$ on that domain,
\begin{align}
    \rho_X(g)x_i^a &= \tau_X(g)x^{\rho^{-1}(g)a}\label{eq:rhoX}\\
    \rho_Y(g)y_i^a &= \tau_Y(g)y^{\rho^{-1}(g)a}\,.\label{eq:rhoY}
\end{align}
The representations $\tau_{X,Y}$ are assumed to be orthogonal and act on the channel dimensions of the input and output. E.g.\ for rotations on images in the input, $\tau_X=\mathbb{I}$ and $\rho$ is the fundamental representation of $SO(2)$ in terms of $2\times 2$ rotation matrices. For graph neural networks, we consider orthogonal transformations of the node features, $\rho_X(g)x^v=\tau_X(g)x^v$ with node index $v$. Hence, in this case $\rho=\mathbb{I}$ is trivial. Our results hold for fairly general architectures consisting of convoluational, fully-connected, and flattening layers as well as local aggregation layers in graph neural networks trained on the MSE loss. To illustrate the underlying techniques of the proof, we will prove each step using the simple example of a MLP with a single channel dimension before stating the general results derived in the appendix.

\paragraph{Step 1:} 
The representation $\rho_X$ acting on the input space $X$ induces a canonical transformation of the NTK and NNGP kernel
\begin{align}
    \Theta(x, x') \;\;  &\to \;\; \Theta(\rho_X(g) x, \rho_X(g) x') \label{eq:ntk_trafo} \\
    \mathcal{K}(x, x') \;\;  &\to \;\; \mathcal{K}(\rho_X(g) x, \rho_X(g) x') \label{eq:nngp_trafo}\,.
\end{align}  
For a representation $\rho_X$ acting on the input space $X$, this canonical transformation induces a transformation of the output indices as specified by the following theorem:
\begin{restatable}[Kernel transformation]{theorem}{firstlemma}
\label{lemma1}
Let $G$ be a group and $\rho_X$ a representation of $G$ acting on the input space $X$ as in \eqref{eq:rhoX}. Then, the neural tangent kernel $\Theta$, as defined in~\eqref{eq:ntk_def}, as well as the NNGP kernel $\mathcal{K}$, as defined in~\eqref{eq:nngp_def}, of a neural network satisfying the assumptions above transform according to
\begin{align}
    \Theta(\rho_X(g) x, \rho_X(g) x') = \rho_{K}(g)\Theta (x, x')\rho_{K}^{\top}(g) \,, \\
    \mathcal{K}(\rho_X(g) x, \rho_X(g) x') = \rho_{K}(g)\mathcal{K}(x, x')\rho_{K}^{\top}(g) \,,
\end{align}
for all $g \in G$ and $x,x' \in X$, where $\rho_{K}$ is a transformation acting on the spatial dimensions of the kernels according to $\rho_{K}(g) K^{a}=K^{\rho^{-1}(g)a}$. If the kernels do not have spatial axes, $\rho_K=\mathbb{I}$.
\end{restatable}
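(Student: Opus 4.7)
The plan is to absorb the input transformation $\rho_X(g)$ into a transformation of the first-layer weights and then exploit the orthogonal invariance of the Gaussian initialization. For the MLP warm-up case with a scalar input transformation $R = \rho_X(g)$, I would first show that $f_W(Rx) = f_{W'}(x)$ where $W'^{(1)} = W^{(1)} R$ and $W'^{(l)} = W^{(l)}$ for $l\geq 2$. Since the initialization $p$ is an isotropic i.i.d.\ Gaussian on each layer and $R$ is orthogonal (by the assumption on $\tau_X$), $W \mapsto W'$ is measure-preserving, so $\mathbb{E}_W[\phi(W)]=\mathbb{E}_W[\phi(W')]$ for any measurable $\phi$. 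Applied to $\phi(W)=f_W(x)f_W(x')$ this gives $\mathcal{K}(Rx,Rx')=\mathcal{K}(x,x')$ at once, matching the claim with trivial $\rho_K$.

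For the NTK I would use the chain rule: $\partial_{W^{(l)}} f_W(Rx) = \partial_{W'^{(l)}} f_{W'}(x)\cdot J_l$, where $J_l$ is the Jacobian of $W'^{(l)}$ with respect to $W^{(l)}$. For $l\geq 2$, $J_l$ is the identity, while $J_1$ carries a factor of $R$. When forming the inner product $(\partial_{w^{(1)}} f_W(Rx))^{\top}\partial_{w^{(1)}} f_W(R x')$, a factor $R^{\top}R=\mathbb{I}$ appears and drops out by orthogonality, so the integrand becomes the same expression evaluated on $W'$. Changing variables then gives $\Theta(Rx,Rx')=\Theta(x,x')$, i.e.\ the claim with $\rho_K=\mathbb{I}$ in the MLP case.

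For architectures with a spatial domain, the action $\rho_X(g)x^a=\tau_X(g)x^{\rho^{-1}(g)a}$ splits into an orthogonal channel rotation $\tau_X(g)$ and a spatial permutation $a\mapsto \rho^{-1}(g)a$. The channel rotation is handled exactly as above (absorbed into $W^{(1)}$, dissolved by orthogonality). The spatial permutation I would push through each allowed layer type in turn — convolutions, local graph aggregations, pointwise nonlinearities, fully-connected/flatten layers at the end — showing that in each case it either commutes with the layer (for translation-equivariant operations) or is annihilated by the final readout. Whatever permutation survives at the output defines $\rho_K(g)$, acting only on the spatial output indices of the kernel. Because $\rho_K$ comes from a permutation it is also orthogonal, which is why it conjugates the kernel rather than appearing asymmetrically.

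The clean part is the MLP/channel argument, since it is just Gaussian invariance plus one use of the chain rule. The main obstacle is the bookkeeping in the spatial case: one must verify layer by layer that the spatial permutation commutes appropriately, keep track of where $\rho$ contracts with convolution filters, and confirm that output-side contractions do not re-introduce channel indices of $\tau_X$ (so that only the permutation $\rho_K$ appears in the final statement). I would relegate that layerwise verification to the appendix, and in the main text argue only the MLP case together with a general schematic $f_W(\rho_X(g) x)=\rho_K(g)\, f_{W'(g)}(x)$ with $W'(g)\stackrel{d}{=}W$, from which both kernel identities follow by taking expectations and, for $\Theta$, by differentiating before taking the expectation.
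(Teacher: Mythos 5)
Your proposal is correct, but it takes a genuinely different route from the paper's. The paper proves Theorem~\ref{lemma1} by induction over layers using the kernel \emph{forward equations}: it writes down the layer-wise recursion relations for $\mathcal{K}_\ell$ and $\Theta_\ell$ (for inputs, nonlinearities, dense, conv, flatten, and graph aggregation layers) and shows that the covariance property $\mathcal{K}_\ell(\rho_X(g)x,\rho_X(g)x')=\rho_K(g)\mathcal{K}_\ell(x,x')\rho_K^\top(g)$ (and likewise for $\Theta_\ell$) propagates through each recursion, starting from orthogonality of $\tau_X$ in the base case $\mathcal{K}_1^{a,a'}(x,x')=x_a^\top x'_{a'}$. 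You instead work at the level of the network function itself: you exhibit an explicit, measure-preserving map $W\mapsto W'(g)$ on parameters (absorbing $\tau_X$ into the first layer's weights and pushing the spatial permutation through the architecture) such that $f_W(\rho_X(g)x)=\rho_K(g)f_{W'(g)}(x)$, and then derive both kernel identities by a change of variables in the expectation, with the $R^\top R=\mathbb{I}$ cancellation handling the extra Jacobian factors in the NTK case. Both arguments are sound and rely on the same ingredients (orthogonality of $\tau_X$, implicit $G$-invariance of convolutional filter supports, and the structure of the allowed layers). Your route is arguably more conceptual --- a single change-of-variables identity at finite width that survives the infinite-width limit, making clear that the kernel symmetry is really a symmetry of the initialization ensemble --- whereas the paper's route is more mechanical but sidesteps any need to reason about the pre-limit network, since it manipulates the limiting kernels directly via their known recursions. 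Note two points your sketch would need to make explicit to match the paper's level of rigor: (i) for the NTK identity, either argue almost-sure convergence of the empirical NTK for both $W$ and $W'(g)$ or simply take expectations using the definition~\eqref{eq:ntk_def}, and (ii) in the convolutional step, the weight relabeling $W^{(1)\,\tilde a}\mapsto W^{(1)\,\rho(g)\tilde a}$ is measure-preserving only if the filter support is closed under $\rho(g)$, an assumption the paper also uses implicitly when reindexing the sum over $\tilde a$.
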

\begin{proof}\let\qed\relax
    See Appendix~\ref{app:proofs}.
\end{proof}

Note that Theorem~\ref{lemma1} states in particular that MLP-kernels are invariant since they do not have spatial axes. While this kernel invariance is shared by many standard kernels, such as RBF or linear kernels, this property is non-trivial for NTK and NNGP since they are not simply functions of the norm of the difference or inner product of the two input values $x$ and $x'$. Furthermore, this result holds irrespective of whether a group is of finite or infinite order.

\paragraph{Step 2:} Data augmentation allows to rewrite the group action as a permutation (see~\eqref{eq:perm_group_action}). For the Gram matrix, acting with $\pi_g$ is equivalent to multiplication by a permutation matrix $\Pi(g)$. Combining this with the invariance of the MLP-kernels derived above, we can shift a permutation from the first to the second index of the Gram matrix, i.e., for MLPs,
\begin{align}
  \Pi(g)\Theta(\mathcal{X},\mathcal{X})&=\Theta(\rho_{X}(g)\mathcal{X},\mathcal{X})\\
  &=\Theta(\mathcal{X},\rho_{X}^{-1}(g)\mathcal{X})\\
  &=\Theta(\mathcal{X},\mathcal{X})(\Pi^{-1}(g))^{\top}\\
  &=\Theta(\mathcal{X},\mathcal{X})\Pi(g)\,,\label{eq:shift_permutation_ntk}
\end{align}%
where we have used the Theorem~\ref{lemma1} for the second equality. This property can be extended to more general architectures and analytical functions of the kernels as stated in the following lemma.

\begin{restatable}[Shift of permutation]{lemma}{secondlemma}
\label{lemma2}
Data augmentation implies that the permutation group action $\Pi$ commutes with any matrix-valued analytical function $F$ involving the Gram matrices of the NNGP and NTK as well as their inverses:
\begin{align}
  &\Pi(g)F(\Theta, \Theta^{-1}, \mathcal{K}, \mathcal{K}^{-1}) \nonumber\\
  =&\, \rho_{K}(g)F(\Theta, \Theta^{-1}, \mathcal{K},\mathcal{K}^{-1})\Pi(g)\rho_{K}^{\top}(g) \,.
\end{align}
where $\Pi(g)$ denotes the group action in terms of training set permutations as a permutation matrix, see \eqref{eq:perm_group_action}.

\end{restatable}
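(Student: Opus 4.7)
The plan is to establish the shift identity first for the two Gram matrices $\Theta$ and $\mathcal{K}$, then extend to their inverses, then close under matrix sums and products, and finally pass to arbitrary analytical $F$ by continuity.

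For the base case with $\Theta$, I would mimic the MLP derivation in the excerpt but keep the spatial indices alive. Translating the permutation action into matrix form gives $\Pi(g)\,\Theta = \Theta(\rho_X(g)\mathcal{X},\mathcal{X})$. Applying Theorem~\ref{lemma1} to move $\rho_X(g)$ from the first argument to $\rho_X(g^{-1})$ on the second produces
\begin{equation*}
\Pi(g)\,\Theta = \rho_K(g)\,\Theta(\mathcal{X},\rho_X(g^{-1})\mathcal{X})\,\rho_K^\top(g),
\end{equation*}
and data augmentation rewrites the middle factor as $\Theta\,\Pi(g)$. The key observation is then that $\Pi(g)$ acts only on the training-set index while $\rho_K(g)$ acts only on the spatial index, so as operators on the combined index space they commute; pushing $\Pi(g)$ past $\rho_K^\top(g)$ yields $\Pi(g)\,\Theta = \rho_K(g)\,\Theta\,\Pi(g)\,\rho_K^\top(g)$, which is the lemma for $F=\Theta$. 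The same argument applies verbatim to $\mathcal{K}$ because it obeys the identical transformation law in Theorem~\ref{lemma1}.

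For the inverses, I would invert both sides of the base identity and use orthogonality of $\Pi$ and $\rho_K$ together with the commutation $\Pi\rho_K^\top=\rho_K^\top\Pi$; a short rearrangement reproduces the shift identity for $\Theta^{-1}$ and likewise for $\mathcal{K}^{-1}$. Closure under sums and scalar multiples is trivial, and for products, if $A$ and $B$ both satisfy the shift identity,
\begin{align*}
\Pi AB = \rho_K A\,\Pi\rho_K^\top B = \rho_K A\rho_K^\top\,\Pi B = \rho_K A\rho_K^\top\rho_K B\,\Pi\rho_K^\top = \rho_K AB\,\Pi\rho_K^\top,
\end{align*}
using $\Pi\rho_K^\top=\rho_K^\top\Pi$ and $\rho_K^\top\rho_K=\mathbb{I}$. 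Induction on polynomial degree therefore covers every polynomial in $\Theta,\Theta^{-1},\mathcal{K},\mathcal{K}^{-1}$, and continuity of matrix multiplication extends the identity to uniform limits, hence to any analytical $F$.

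The main obstacle will be the first step: the calculation in the excerpt reduces to $\Pi(g)\Theta=\Theta\Pi(g)$ for MLPs only because $\rho_K=\mathbb{I}$, so the work is in carefully separating the training-index action $\Pi(g)$ from the spatial-index action $\rho_K(g)$ and verifying that they commute on the combined index space. Once that decomposition is in hand, the rest of the proof is a routine closure argument under the algebraic operations that generate analytical functions.
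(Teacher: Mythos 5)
Your proof is correct and follows essentially the same route as the paper's: establish the shift identity for $\Theta$ and $\mathcal{K}$ from Theorem~\ref{lemma1} plus data augmentation, extend to inverses by inverting the base identity, close under products using orthogonality of $\rho_K$, and pass to analytic $F$ by power-series expansion. The paper's version of your key commutativity observation (that $\Pi$ acts on training indices and $\rho_K$ on spatial indices, hence they commute on the combined index space) appears as an index-chasing argument in Lemma~\ref{app:helper_lemma} in which $\rho_K^\top\rho_K$ is absorbed by relabelling spatial summation variables between adjacent Gram-matrix factors, but the underlying mechanism is identical.
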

\begin{proof}\let\qed\relax
    See Appendix~\ref{app:proofs}.
\end{proof}

\paragraph{Step 3:} Using Lemma~\ref{lemma2}, it can be shown that the deep ensemble is equivariant in the infinite width limit. Before stating the general theorem, we first illustrate the underlying reasoning by showing one particular consequence, i.e., that the mean of an MLP is invariant if the training labels $\mathcal{Y}$ are not transformed, i.e.\ $\rho_Y(g)=\mathbb{I}$. By \eqref{eq:output_mean}, the output of the deep ensemble for transformed input $x \to \rho_X(g) x$ is given by
\begin{align}
    \bar{f}_t(\rho_X(g) \, x)&=\mu_t(\rho_X(g) \, x) \\
    &= \Theta(\rho_X(g) \, x, \mathcal{X}) \left[ \Theta^{-1} \, T_{t} \right]\mathcal{Y}\,.
\end{align}
We can now use the invariance of MLP kernels from Theorem~\ref{lemma1} and write the action of $\rho_{X}$ on training samples $\mathcal{X}$ as a permutation $\Pi$. Together with $(\Pi^{-1})^{\top}=\Pi$, we obtain
\begin{align}
    \Theta(\rho_X(g) \, x, \mathcal{X}) \left[ \Theta^{-1} T_{t} \right] \mathcal{Y}=
    \Theta(x, \mathcal{X})\Pi(g) \left[ \Theta^{-1} T_{t} \right]\mathcal{Y}\,.\nonumber
\end{align}
Now we use Lemma~\ref{lemma2} to commute the permutation past $\Theta^{-1}T_{t}$,
\begin{align}
    \Theta(x, \mathcal{X})\Pi(g) \left[ \Theta^{-1} T_{t} \right]\mathcal{Y}=\Theta(x, \mathcal{X})\left[ \Theta^{-1} T_{t} \right]\Pi(g) \mathcal{Y}\,.\nonumber
\end{align}
Since the labels are invariant by assumption, $\Pi(g)\mathcal{Y}=\mathcal{Y}$ and therefore
\begin{align}
    \bar{f}_t(\rho_X(g) \, x) &= \bar{f}_t(x) \,.
\end{align}%
Using analogous reasoning, the following more general result can be derived:
\begin{restatable}[Emergent Equivariance of Deep Ensembles]{theorem}{thmemergingequiv}\label{th:main}
Under the assumptions stated above, the distribution of large-width ensemble members $f_w: X \to Y$ is equivariant with respect to the representations $\rho_X$ and $\rho_Y$ of the group $G$ if data augmentation is applied. In particular, the ensemble is equivariant,
\begin{align}
    \bar{f}_t(\rho_X(g) \, x) = \rho_Y(g) \, \bar{f}_t(x) \,,
\end{align}
for all $g \in G$. This result holds
\begin{enumerate}
    \item at any training time $t$,
    \item for any element of the input space $x \in X$.
\end{enumerate}
\end{restatable}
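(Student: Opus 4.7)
The plan is to evaluate the closed-form mean $\bar{f}_t(x) = \Theta(x,\mathcal{X})[\Theta^{-1}T_t]\mathcal{Y}$ at the transformed input $\rho_X(g)x$ and to track how three ingredients conspire to produce $\rho_Y(g)\bar{f}_t(x)$: (i) Theorem~\ref{lemma1}, which transports the group action on a test input to a permutation of the training-set indices, up to outer factors of the spatial representation $\rho_K$; (ii) Lemma~\ref{lemma2}, which commutes that permutation through the analytic matrix function $[\Theta^{-1}T_t]$; and (iii) data augmentation on the labels, $\rho_Y(g)y_i = y_{\pi_g(i)}$, which converts the permutation acting on $\mathcal{Y}$ into a $\rho_Y(g)$ factor on the output.

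In more detail, I would first generalize the shifted-permutation identity in~\eqref{eq:shift_permutation_ntk} from the MLP sketch to arbitrary architectures. Applying Theorem~\ref{lemma1} to the pair $(\rho_X(g)x,\rho_X(g)\rho_X(g^{-1})x_i)$ and using that $\rho_X(g^{-1})x_i = x_{\pi_{g^{-1}}(i)}$ again lies in the training set (full data augmentation) yields an identity of the form $\Theta(\rho_X(g)x,\mathcal{X}) = \rho_K(g)\,\Theta(x,\mathcal{X})\,\Pi(g)\,\rho_K^\top(g)$. Substituting this into $\mu_t(\rho_X(g)x)$ places $\Pi(g)$ between $\Theta(x,\mathcal{X})$ and $[\Theta^{-1}T_t]$. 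Lemma~\ref{lemma2} then applies because $T_t = \mathbb{I}-\exp(-\eta\Theta t)$ is analytic in $\Theta$, so the permutation slides past $[\Theta^{-1}T_t]$ and meets $\mathcal{Y}$, where label-equivariance $\Pi(g)\mathcal{Y} = \rho_Y(g)\mathcal{Y}$ (contracted on the appropriate channel index) produces $\rho_Y(g)\bar{f}_t(x)$; the spatial $\rho_K(g)$ and $\rho_K^\top(g)$ factors arising from the kernel transport and from Lemma~\ref{lemma2} either cancel against each other or combine with $\tau_Y$ inside $\rho_Y$.

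To upgrade from mean equivariance to equivariance of the full Gaussian distribution, I would rerun the same three-step argument on the covariance $\Sigma_t(\rho_X(g)x,\rho_X(g)x')$ in~\eqref{eq:output_var}. Each constituent ($\mathcal{K}$, $\Theta$, and $T_t$) either transforms under Theorem~\ref{lemma1} or is an analytic function of such quantities; the labels do not enter. The $\rho_K$ and $\Pi(g)$ factors propagate through and cancel internally, leaving $\Sigma_t(\rho_X(g)x,\rho_X(g)x') = \rho_Y(g)\,\Sigma_t(x,x')\,\rho_Y^\top(g)$. Together with the mean transformation, this certifies that the infinite-width ensemble law is $G$-equivariant for every training time $t$ and every $x\in X$.

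The main obstacle I anticipate is purely bookkeeping: three representations --- $\rho_K$ on spatial axes, $\tau_Y$ on output channels, and $\Pi$ on augmented training indices --- act simultaneously, and every intended cancellation must land in the correct index slot. A secondary subtlety is invoking Lemma~\ref{lemma2} only on genuinely analytic expressions; for the mean this is immediate, while for $\Sigma_t^{(1)}$ one must view the whole product $\Theta^{-1}T_t\mathcal{K}T_t\Theta^{-1}$ as a single analytic function of the kernel matrices. Beyond these, the proof should be a direct assembly of Theorem~\ref{lemma1}, Lemma~\ref{lemma2}, and the definition of data augmentation.
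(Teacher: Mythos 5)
Your proposal is correct and follows essentially the same route as the paper's proof: transport the group action on the test input into a permutation of the Gram-matrix training index via Theorem~\ref{lemma1} (and its corollary), commute the permutation through the analytic matrix function $\Theta^{-1}T_t$ using Lemma~\ref{lemma2}, absorb $\Pi(g)\mathcal{Y}=\rho_Y(g)\mathcal{Y}$, and let the $\rho_K$ factors cancel or recombine with $\tau_Y$, then repeat the argument on $\mathcal{K}$, $\Sigma_t^{(1)}$, $\Sigma_t^{(2)}$ to show $\Sigma_t(\rho_X(g)x,\rho_X(g)x')=\rho_Y(g)\Sigma_t(x,x')\rho_Y^\top(g)$. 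The bookkeeping subtleties you flag (simultaneous action of $\rho_K$, $\tau_Y$, and $\Pi$, and treating $\Theta^{-1}T_t\mathcal{K}T_t\Theta^{-1}$ as a single analytic function) are exactly where the paper's argument spends its care, and your plan handles them correctly.
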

\begin{proof}\let\qed\relax
See Appendix~\ref{app:proofs}.
\end{proof}
We stress that this results holds even off the data manifold, i.e., for out-of-distribution data, and in the early stages of training as well as at initialization. As a result, it is not a trivial consequence of the training. Furthermore, we do not need to make any restrictions on the architectures of the ensemble members. In particular, the individual members will generically not be equivariant. However, their averaged prediction will be (at least in the large width limit). In this sense, the equivariance is emergent.
\begin{figure*}[tb]
  \centering
  \includegraphics[width=0.48\linewidth]{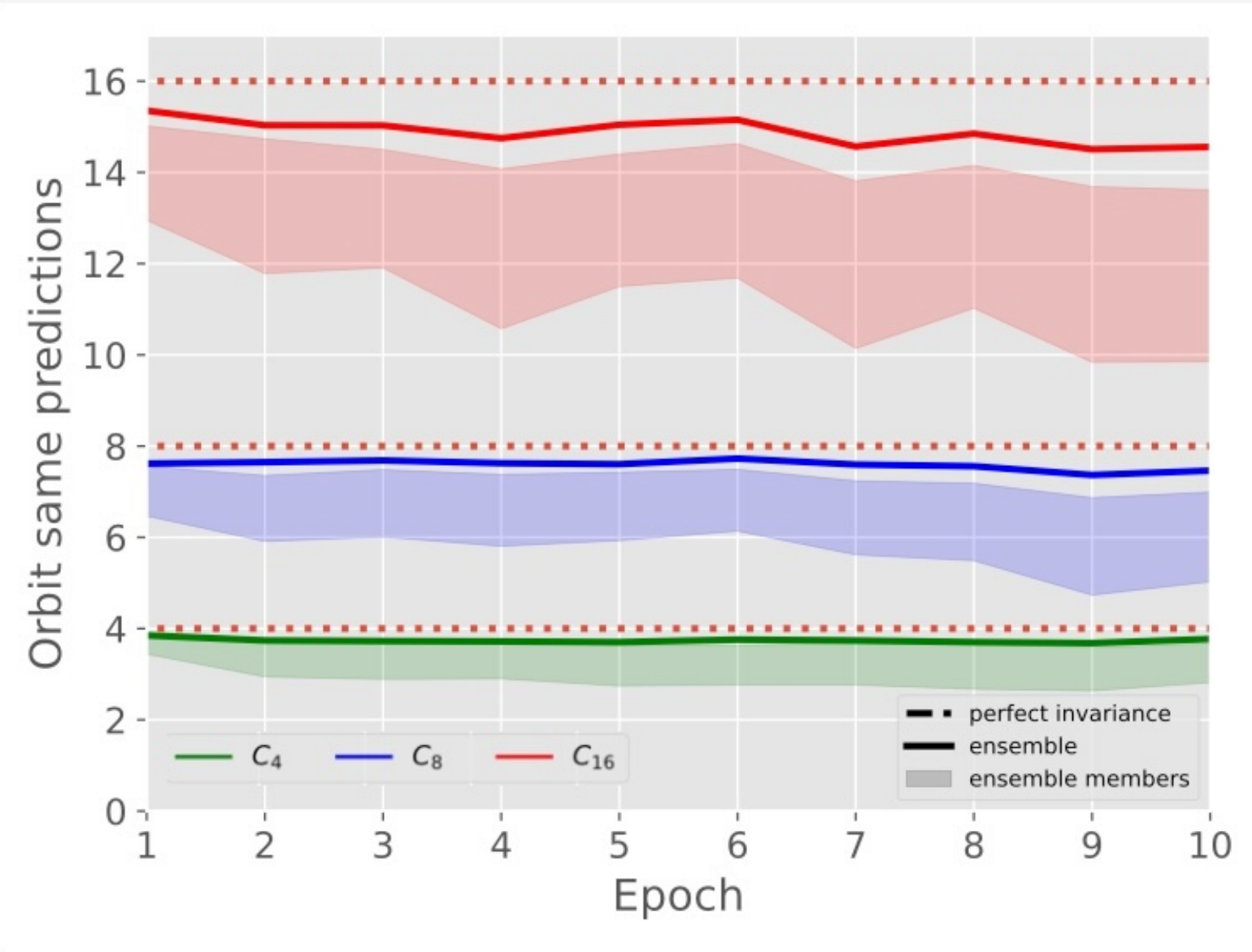}
  \hfill
  \includegraphics[width=0.48\linewidth]{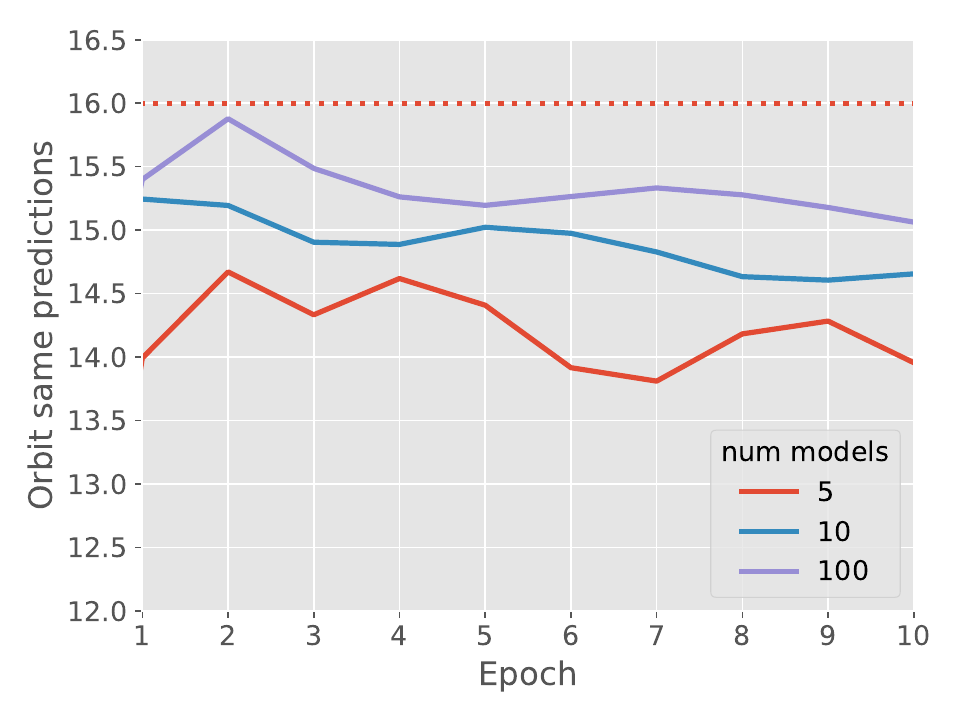}
  \caption{Emergent invariance for FashionMNIST \textbf{Left:} Number of out-of-distribution MNIST samples with the same prediction across a symmetry orbit for group orders 4 (green), 8 (blue), and 16 (red) versus training epoch. The models were trained on augmented FashionMNIST. Solid lines show the ensemble prediction. Shaded area is between the 25\textsuperscript{th} and 75\textsuperscript{th} quantile of the predictions of individual members of the ensemble. \textbf{Right:} Out of distribution invariance in the same setup as on the left-hand-side at group order 16. As the number of ensemble members increases, the prediction becomes more invariant, as expected.}
  \label{fig:fmnist}
\end{figure*}

\begin{figure}[tb]
  \centering
  \includegraphics[width=0.98\linewidth]{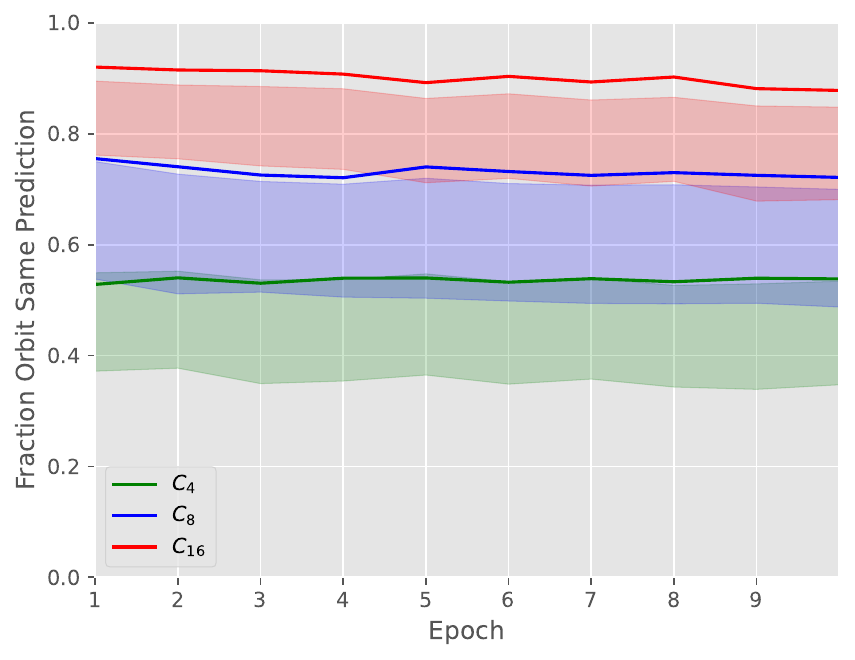}
  \caption{Equivariance extends to $SO(2)$ symmetry. Fraction of randomly sampled rotations that leave the prediction invariant is reported. Data augmentation with group order 4 (green), 8 (blue), 16 (red) is used. As expected, the equivariance increases with the group order.}
  \label{fig:so2}
\end{figure}

\begin{figure*}[tb]
  \centering
  \includegraphics[width=0.48\linewidth]{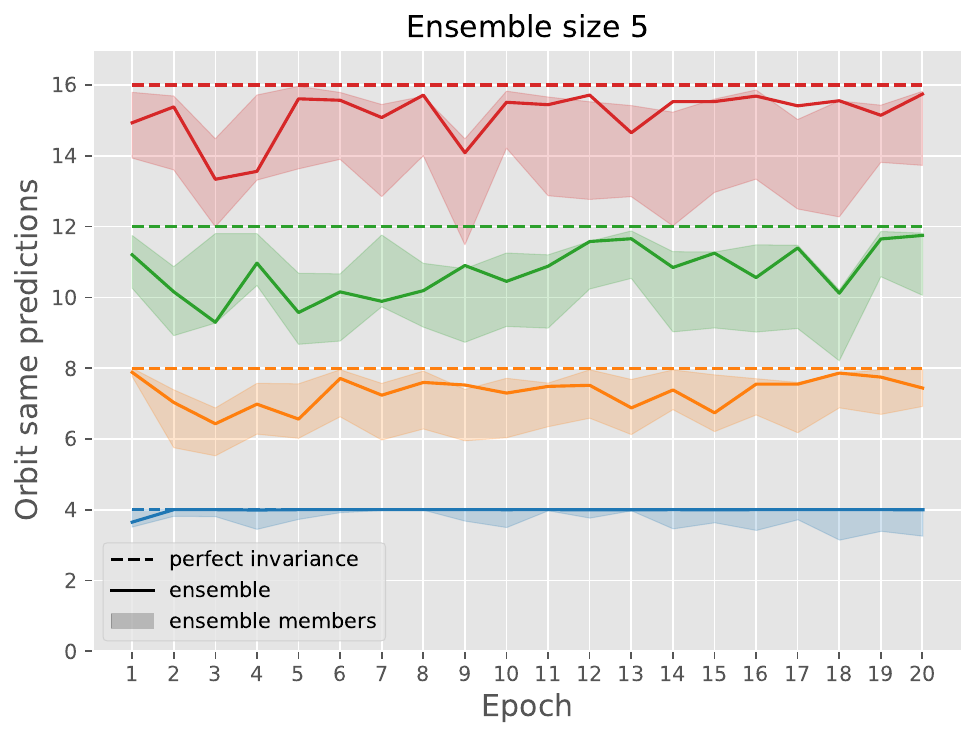}
  \hfill
  \includegraphics[width=0.48\linewidth]{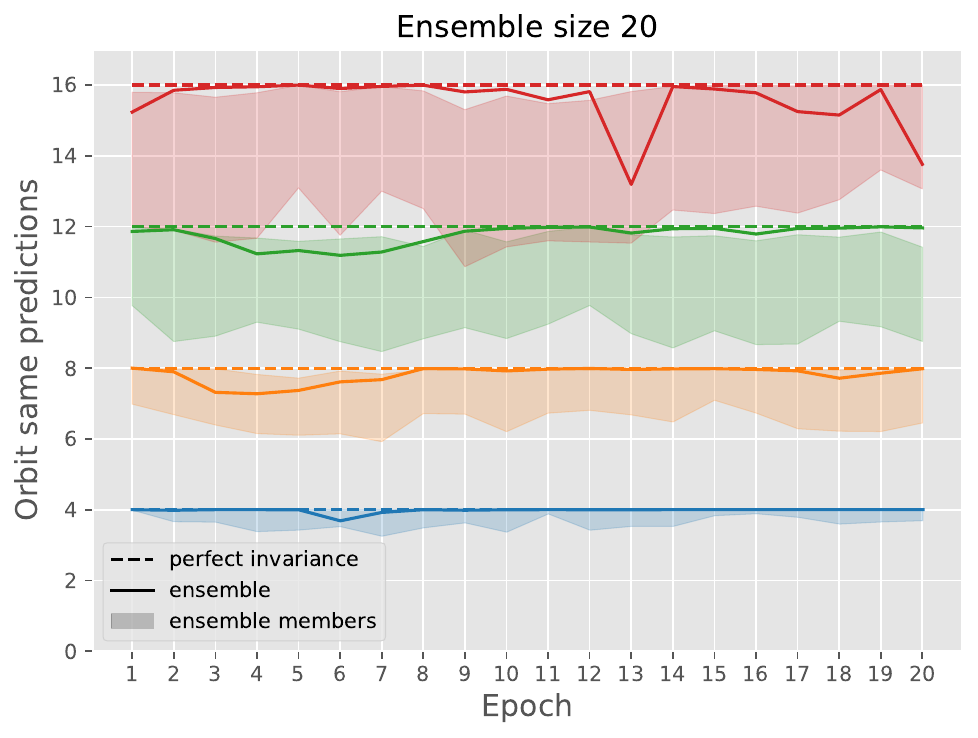}
  \caption{Ensemble invariance on OOD data for ensembles trained on histological data. Number of OOD samples with the same prediction across a symmetry orbit for group orders 4 (blue), 8 (orange), 12 (green) and 16 (red) versus training epoch. Even for ensemble size 5 (left), the ensemble predictions (solid line) are more invariant than the ensemble members (shaded region corresponding to 25\textsuperscript{th} to 75\textsuperscript{th} percentile of ensemble members). The effect is larger for ensemble size 20 (right).}
  \label{fig:hist_ood_plots}
\end{figure*}

\section{Limitations: Approximate Equivariance}
In the following, we discuss the breaking of equivariance due to i) statistical fluctuations of the estimator due to the finite number of ensemble members, ii) continuous symmetry groups which do not allow for complete data augmentation, and iii) finite width corrections in NTK theory.

\vspace{-0.25em}
\paragraph{Finite Number of Ensemble Members.}
We derive the following bound for estimates of deep ensembles in the infinite width limit:
\begin{restatable}[Bound for finite ensemble members]{lemma}
{lemboundfiniteensemble}
    The deep ensemble $\bar{f}_t$ and its estimate $\hat{f}_t$ do not differ by more than threshold $\delta$, 
    \begin{align}
        | \bar{f}_t(x) - \hat{f}_t(x) | < \delta \,,
    \end{align}
    with probability $1 - \epsilon$ for ensemble sizes $M$ that obey
    \begin{align}
       M >  - \frac{2 \Sigma_t(x)}{\delta^2} \ln \left( \sqrt{\pi}  \epsilon \right)\,.
    \end{align}
\end{restatable}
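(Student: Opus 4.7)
The plan is to reduce the problem to a one-dimensional Gaussian tail estimate and then invert it for $M$. The starting point is the infinite-width characterization recalled in the remark following~\eqref{eq:output_var}: each trained network $f_{\mathcal{L}_t w}(x)$ is a draw from the Gaussian process with mean $\mu_t(x) = \bar{f}_t(x)$ and variance $\Sigma_t(x) = \Sigma_t(x,x)$, so the $M$ independent initializations yield iid samples from this one-dimensional Gaussian.

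Using the fact that a linear combination of iid Gaussians is Gaussian, the Monte-Carlo error becomes
\begin{align*}
\hat{f}_t(x) - \bar{f}_t(x) \;=\; \frac{1}{M}\sum_{i=1}^M \bigl(f_{\mathcal{L}_t w_i}(x)-\mu_t(x)\bigr) \;\sim\; \mathcal{N}\!\left(0,\;\tfrac{\Sigma_t(x)}{M}\right),
\end{align*}
so the probability that the estimator deviates from the ensemble by at least $\delta$ can be written exactly as $\mathrm{erfc}\bigl(\delta\sqrt{M/(2\Sigma_t(x))}\bigr)$. I would then invoke the Mills-ratio inequality $\mathrm{erfc}(a)\leq e^{-a^{2}}/(a\sqrt{\pi})$, absorb the prefactor $1/a$ using that it is at most $1$ once $M\geq 2\Sigma_t(x)/\delta^{2}$, and so obtain the clean bound $P\leq \tfrac{1}{\sqrt{\pi}}\exp\!\bigl(-M\delta^{2}/(2\Sigma_t(x))\bigr)$. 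Demanding this upper bound to be at most $\epsilon$ and solving for $M$ reproduces the stated threshold exactly.

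The only real subtlety is the choice of tail inequality. The specific factor $\sqrt{\pi}$ inside the logarithm is produced by Mills' ratio, whereas the more familiar sub-Gaussian bound $P(|Z|\geq\delta)\leq 2\exp(-\delta^{2}/(2\sigma^{2}))$ would give $\ln(\epsilon/2)$ instead; pinning down the Mills constant and acknowledging the implicit regime in which the $1/a$ prefactor can be dropped is where any care is needed. Everything else is a handful of routine Gaussian manipulations, so I do not anticipate any technical obstacle.
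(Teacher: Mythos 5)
Your proposal is correct and follows essentially the same route as the paper: both reduce to a one-dimensional Gaussian tail for $\hat f_t(x)-\bar f_t(x)\sim\mathcal N(0,\Sigma_t(x)/M)$, bound the complementary error function by $e^{-a^2}/(a\sqrt\pi)$, drop the $1/a$ prefactor under the implicit assumption $M\ge 2\Sigma_t(x)/\delta^2$, and solve for $M$. The only cosmetic difference is that you invoke the Mills-ratio inequality by name where the paper rederives it inline.
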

%\begin{proof}\let\qed\relax
%    See Appendix~\ref{app:proofs}.
%\end{proof}
\vspace{-0.5em}
We stress that the covariance $\Sigma$ is known in closed form, see \eqref{eq:output_var}. As such, the right-hand-side can be calculated exactly. We note that we also derive a somewhat tighter bound in Appendix~\ref{app:proofs} which however necessitates to numerically solve for $M$.

\vspace{-0.25em}
\paragraph{Continuous Groups.}
For a continuous group $G$, consider a finite subgroup $A \subset G$ which is used for data augmentation. We quantify the discretization error of using $A$ instead of $G$ by
\begin{align}
    \epsilon = \max_{g \in G} \, \min_{g'\in A} \|\rho_X(g) - \rho_X(g')\|_{\mathrm{op}} \,.\label{eq:disc_error}
\end{align}
Then, the invariance error of the mean \eqref{eq:output_mean} is bounded by $\epsilon$:
\begin{restatable}[Bound for continuous groups]{lemma}{thboundcont}\label{th:continuous}
Consider a deep ensemble of neural networks with Lipschitz continuous derivatives with respect to the parameters. For an approximation $A\subset G$ of a continuous symmetry group G with discretization error $\epsilon$, the prediction of the ensemble trained on $A$ deviates from invariance by
\begin{align*}
    |\bar{f}_t(x) - \bar{f}_t(\rho_X(g) \, x)| \le \epsilon \, C(x)\,,\qquad\forall g\in G\,,
\end{align*}
where $C$ is independent of $g$.
\end{restatable}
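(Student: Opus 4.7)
The plan is to combine the exact equivariance of Theorem~\ref{th:main} on the finite subgroup $A$ with a Lipschitz argument in the input, so that the error of approximating an arbitrary $g\in G$ by the closest $g'\in A$ is transferred into a quantitative deviation from invariance of $\bar{f}_t$.

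First, I would reduce to a perturbation on the input side. Since data augmentation is carried out on the finite subgroup $A\subset G$, Theorem~\ref{th:main} implies exact invariance of the ensemble mean under every $g'\in A$. Picking, for each $g\in G$, an element $g_\star=g_\star(g)\in A$ that attains the minimum in \eqref{eq:disc_error}, we can write
\begin{align*}
|\bar{f}_t(x)-\bar{f}_t(\rho_X(g)x)|
&=|\bar{f}_t(\rho_X(g_\star)x)-\bar{f}_t(\rho_X(g)x)|\,.
\end{align*}
So the task reduces to bounding the variation of $\bar{f}_t$ under an input perturbation of size $\|\rho_X(g)-\rho_X(g_\star)\|_{\mathrm{op}}\,\|x\|\le \epsilon\,\|x\|$.

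Next, I would Lipschitz-bound the mean function using its closed form \eqref{eq:output_mean}, $\mu_t(x)=\Theta(x,x_i)[\Theta^{-1}T_t]_{ij}y_j$. Since the training-data factor $[\Theta^{-1}T_t]_{ij}y_j$ does not depend on $x$, it suffices to control the Lipschitz constant of $x\mapsto \Theta(x,x_i)$ for each fixed training point $x_i$. By \eqref{eq:ntk_def}, $\Theta$ is an expectation over initializations of an inner product of the parameter gradients $\partial f_w/\partial w^{(l)}$, which are Lipschitz in $x$ by assumption. A standard product-rule/Cauchy--Schwarz estimate then shows that $\Theta(\cdot,x_i)$ is Lipschitz with a constant $L_i$ expressible in terms of Lipschitz and moment bounds on the network's parameter gradients at initialization. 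Combining with the previous step yields
\begin{align*}
|\bar{f}_t(x)-\bar{f}_t(\rho_X(g)x)|
&\le \epsilon\,\|x\|\sum_i L_i\,\bigl|[\Theta^{-1}T_t]_{ij}y_j\bigr|\,,
\end{align*}
and the right-hand side is of the desired form $\epsilon\,C(x)$, with $C(x)$ independent of $g$.

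The main obstacle is the Lipschitz bound on $\Theta(\cdot,x_i)$ itself. The paper's assumption is phrased in terms of Lipschitz continuity of parameter-derivatives of $f_w$, and turning this into a Lipschitz bound on the expectation $\mathbb{E}_{w\sim p}[(\partial_w f_w(x))^\top \partial_w f_w(x_i)]$ requires enough integrability of the gradients under $p$ (uniform in $x$ on the orbit of interest) for the expectation and the Lipschitz estimate to commute. Once this technical point is settled, every other step is a direct application of Theorem~\ref{th:main}, the operator-norm definition of $\epsilon$, and the closed-form expression \eqref{eq:output_mean}; the $g$-independence of $C(x)$ is automatic because $g$ enters only through $\rho_X(g)x-\rho_X(g_\star)x$, which is controlled purely by $\epsilon$ and $\|x\|$.
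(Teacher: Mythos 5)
Your proof is correct, and it takes a genuinely different (slightly cleaner) route than the paper. The paper does not invoke Theorem~\ref{th:main} as a black box; instead, starting from the explicit formula~\eqref{eq:output_mean}, it uses label invariance $y_k=y_{\pi_{g'}(k)}$ together with the permutation-shift machinery of Lemma~\ref{lemma2} to transfer the permutation onto the \emph{training} arguments, yielding $\Theta(x,x_{\pi_{g'}^{-1}(i)})-\Theta(x,\rho_X^{-1}(g)x_i)$, and then applies the Lipschitz bound to the \emph{second} slot of $\Theta$. You instead use Theorem~\ref{th:main} directly to replace $\bar{f}_t(x)$ by $\bar{f}_t(\rho_X(g_\star)x)$ and apply the Lipschitz bound to the \emph{first} (test-point) slot. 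Both are valid; the concrete consequence is that the paper's constant scales with $\sqrt{\sum_i\|x_i\|^2}$ (training-data norms) while yours scales with $\|x\|$, but both are independent of $g$ and so yield the claimed $\epsilon\,C(x)$. Your approach is more modular; the paper's is more self-contained (it does not rely on the full statement of the equivariance theorem, only on Lemma~\ref{lemma2}). The technical caveat you raise about commuting the Lipschitz/Cauchy--Schwarz estimate with $\mathbb{E}_{w\sim p}$ applies equally to the paper's proof, which silently writes the layer-wise Lipschitz constant $L(w^{(l)})$ inside the expectation defining $\hat C(x)$ and assumes integrability; so this is not a gap specific to your argument.
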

%\begin{proof}\let\qed\relax
%    See Appendix~\ref{app:proofs}.
%\end{proof}

\paragraph{Random Augmentations.}
In practice, very often the augmentation is not performed over an entire subgroup $A$ of the symmetry group as assumed in Lemma~\ref{th:continuous}, but rather batches are augmented randomly. That is, $A$ is not a subgroup, only a subset of $G$. In this case, the error \eqref{eq:disc_error} of using $A$ rather than $G$ for augmentation can be defined in terms of an expectation value over the distribution of the augmentations. The statement of Lemma~\ref{th:continuous} can then only be expected to hold in expectation. However, note that the solution of the training dynamics derived using NTKs in the infinite width limit assumes that the training set is the same in each epoch. Normally this assumption will be broken by random data augmentation. This effect cannot be controlled by Lemma~\ref{th:continuous}.

\paragraph{Finite Width.}
Convergence of the ensemble output to a Gaussian distribution only holds in the infinite width limit. There has been substantial work on finite-width corrections to the NTK limit~\citep{huang2020,yaida2020,halverson2021,erbin2022} which could in principle be used to quantify the resulting violations of exact equivariance. This is however of significant technical difficulty and therefore beyond the scope of this work. In the experimental section, we nevertheless demonstrate that even finite-width ensembles show emergent equivariance to good approximation.

\section{Experiments}
In this section, we empirically study the emergent equivariance of finite width deep ensembles for several architectures (fully connected and convolutional), tasks (regression and classification), and application domains (computer vision and physics). 

\paragraph{Ising Model.}

We validate our analytical computations with experiments on a problem for which we can compute the NTK exactly: the two-dimensional Ising model on a 5x5 lattice with energy function $\mathcal{E} = -J \sum_{\langle i, j \rangle} s_i s_j$, with the spins $s_{i}\in\{+1,-1\}$, $J$ a coupling constant and the sum runs over all adjacent spins. 
The energy of the Ising model is invariant under the cyclic group $C_4$ of rotations of the lattice by $90^{\circ}$. We train ensembles of five different sizes with $100$ to $10\mathrm{k}$ members of fully-connected networks with hidden-layer widths $512$, $1024$ and $2048$ to approximate the energy function using all rotations in $C_4$ as data augmentation. In this setting, we can compute the NTK exactly on the given training data using the JAX package \texttt{neural-tangents}~\citep{neuraltangents2020}. We verify that the ensembles converge to the NTK for large widths, see Appendix~\ref{app:ising_model}.

To quantify the invariance of the ensembles, we measure the standard deviation of the predicted energy across the group orbit averaged over all datapoints of i) training set, ii) test set, and iii) out-of-distribution set. The latter is generated randomly drawing spins from a Gaussian distribution with mean zero and variance $400$. For better interpretability, we divide by the mean of $\mathcal{E}$, so that for a \emph{relative standard deviation (RSD)} across orbits of one, the deviation from invariance is as large as a typical ground truth energy. For an exactly equivariant model, we would obtain an RSD of zero.

Figure~\ref{fig:orbit_stds} shows that the deep ensemble indeed exhibit the expected emergent invariance. 
As expected, the NTK features very low RSD compatible with numerical error. The RSD of the mean predictions of the ensembles are larger but still very small and converge to the NTK results for large ensembles and network widths, cf.\ dashed lines in Figure~\ref{fig:orbit_stds}. In contrast, the RSD computed for individual ensemble members is much higher and varies considerably between ensemble members, cf.\ solid lines in Figure~\ref{fig:orbit_stds}. Even out of distribution, the ensemble means deviate from invariance only by about $0.8\%$ for large ensembles and network widths, compared to $82\%$ for individual ensemble members.

\begin{table}[h!]
\centering
\begin{tabular}{c c c c}
 \toprule
 & $C_4$ & $C_8$ & $C_{16}$  \\ [0.5ex] 
 \midrule
%$\substack{\text{DeepEns}\\\text{+DA}}$ & 3.85 $\pm$ 0.12  & \bf{7.72 $\pm$ 0.34} & \bf{15.24 $\pm$ 0.69} \\
\hspace{-0.1cm}DeepEns+DA\hspace{-0.1cm} & 3.85$\pm$0.12  & \bf{7.72$\pm$0.34} & \bf{15.24$\pm$0.69} \\
 only DA & 3.41$\pm$0.18 &6.73$\pm$0.24 & 12.77$\pm$0.71 \\ 
 E2CNN & 
\bf{4$\pm$0.0} & \bf{7.71$\pm$0.21} & \bf{15.08$\pm$0.34} \\
 Canon & \bf{4$\pm$0.0} & \bf{7.45$\pm$0.14} & 12.41$\pm$0.85  \\
 \bottomrule
\end{tabular}
\caption{Deep Ensembles (ensemble size 100) show competitive degree of equivariance. Mean and standard deviation over training of orbit same prediction on the out-of-distribution MNIST validation set. All methods use roughly the same number of parameters. Best methods, taking into account statistical uncertainty, are shown in bold.}
\label{table:method_compare}
\end{table}

\paragraph{FashionMNIST.} We train convolutional neural networks augmenting the original dataset~\citep{xiao2017/online} by all elements of the group orbit of the cyclic group $C_k$, i.e., all rotations of the image by any multiple of $360/k$ degrees with $k=4, 8, 16$ and choose ensembles of size $M=5, 10, 100$. We then evaluate the \emph{orbit same prediction (OSP)}, i.e., how many of the images in a given group orbit have on average the same classification result as the unrotated image. We evaluate the OSP metric both on the validation set of FashionMNIST as well as on various out-of-distribution (OOD) datasets. Specifically, we choose the validation sets of MNIST, grey-scaled CIFAR10 as well as images for which each pixel is drawn iid from $\mathcal{N}(0,1)$. Figure~\ref{fig:fmnist} shows the OSP metric for OOD data from MNIST. The ensemble prediction becomes more invariant as the number of ensemble members increases. Furthermore, the ensemble prediction is significantly more invariant as the individual ensemble members, i.e., the invariance is emergent. As the group order $k$ increases, more ensemble members are needed to achieve a high degree of invariance. Figure~\ref{fig:so2} illustrates that the deep ensembles can also capture continuous rotation symmetry. Specifically, we train using data augmentation with respect to various discrete $C_k$ groups and check that they lead to increasing invariance with respect to $SO(2)$. For $k=16$, over 90 percent of the orbit elements have the same prediction as the untransformed input establishing that the model is approximately invariant under the continuous symmetry as well, as is expected from Lemma~\ref{th:continuous}. We also compare to a manifestly equivariant E2CNN \cite{cesa2022a, e2cnn} and canonicalized model \cite{kaba2023equivariance}. Interestingly, the manifest equivariance of these models is slightly broken for groups $C_k$ with group order $k>4$ due to interpolation artifacts. As result, finite deep ensembles are competitive with these manifestly equivariant models, see Table~\ref{table:method_compare}. Note also that using data augmentation without any ensembling leads to significantly less equivariant models. More details about the experiments as well as plots showing results for the other OOD datasets can be found in Appendix~\ref{app:fmnist}.

\paragraph{Histological Data.}

A realistic task, where rotational invariance is of key importance, is the classification of histological slices. We trained ensembles of CNNs on the NCT-CRC-HE-100K dataset~\citep{hist_dataset} which comprises of stained histological images of human colorectal cancer and normal tissue with a resolution of $224\times 224$ pixels in nine classes.

As for our experiments on FashionMNIST, we verify that the ensemble is more invariant as a function of its input than the ensemble members by evaluating the OSP on OOD data. In order to arrive at a sample of OOD data on which the network makes non-constant predictions, we optimize the input of the untrained ensemble to yield balanced predictions of high confidence. Using this specifically generated dataset for each ensemble, we observe the same increase in invariance also outside of the training domain as predicted by our theoretical considerations, cf.\ Figure~\ref{fig:hist_ood_plots}. For further results on validation data as well as examples of our OOD data see Appendix~\ref{app:hist_data}

\section{Conclusions}
Equivariant neural networks are a central ingredient in many machine learning setups, in particular in the natural sciences. However, constructing manifestly invariant models can be difficult. Deep ensembles are an important tool which can straightforwardly boost the performance and estimate uncertainty of existing models, explaining their widespread use in practice. 

In this work, using the theory of neural tangent kernels, we proved that infinitely wide ensembles show emergent equivariance when trained on augmented data. We furthermore discussed implications of finite width and ensemble size as well as the effect of approximating a continuous symmetry group. Experiments on several different datasets support our theoretical insights.

The extension of our proof to additional layers like attention, pooling or dropout is straightforward. In future work, it would be interesting to incorporate the effects of finite width corrections and include a more detailed model of data augmentation, for instance along the lines of~\citet{dao2019}.

\ifthenelse{\boolean{print_impact_statement}}{
\section*{Impact statement}
This paper presents work whose goal is to advance the field of Machine Learning. There are many potential societal consequences of our work, none which we feel must be specifically highlighted here.
}
{}

\ifthenelse{\boolean{print_acknowledgements}}{
\section*{Acknowledgements}
The work of J.G. is supported by the Wallenberg AI, Autonomous Systems and Software Program (WASP) funded by the Knut and Alice Wallenberg Foundation. P.K. wants to thank Shin Nakajima and Andreas Loukas for discussions, J.G. wants to thank Max Guillen and Philipp Misof for discussions.
}
{}

\small

\bibliography{main}

\begin{thebibliography}{43}
\providecommand{\natexlab}[1]{#1}
\providecommand{\url}[1]{\texttt{#1}}
\expandafter\ifx\csname urlstyle\endcsname\relax
  \providecommand{\doi}[1]{doi: #1}\else
  \providecommand{\doi}{doi: \begingroup \urlstyle{rm}\Url}\fi

\bibitem[Arora et~al.(2019)Arora, Du, Hu, Li, Salakhutdinov, and
  Wang]{arora2019}
Arora, S., Du, S.~S., Hu, W., Li, Z., Salakhutdinov, R.~R., and Wang, R.
\newblock On {{Exact Computation}} with an {{Infinitely Wide Neural Net}}.
\newblock In \emph{Advances in {{Neural Information Processing Systems}}},
  volume~32. {Curran Associates, Inc.}, 2019.

\bibitem[Basu et~al.(2023{\natexlab{a}})Basu, Katdare, Sattigeri,
  Chenthamarakshan, Driggs-Campbell, Das, and Varshney]{basu2023equivariant}
Basu, S., Katdare, P., Sattigeri, P., Chenthamarakshan, V., Driggs-Campbell,
  K., Das, P., and Varshney, L.~R.
\newblock Equivariant few-shot learning from pretrained models.
\newblock \emph{arXiv preprint arXiv:2305.09900}, 2023{\natexlab{a}}.

\bibitem[Basu et~al.(2023{\natexlab{b}})Basu, Sattigeri, Ramamurthy,
  Chenthamarakshan, Varshney, Varshney, and Das]{basu2023equi}
Basu, S., Sattigeri, P., Ramamurthy, K.~N., Chenthamarakshan, V., Varshney,
  K.~R., Varshney, L.~R., and Das, P.
\newblock Equi-tuning: Group equivariant fine-tuning of pretrained models.
\newblock In \emph{Proceedings of the AAAI Conference on Artificial
  Intelligence}, volume~37, pp.\  6788--6796, 2023{\natexlab{b}}.

\bibitem[Bekkers et~al.(2018)Bekkers, Lafarge, Veta, Eppenhof, Pluim, and
  Duits]{bekkers2018}
Bekkers, E.~J., Lafarge, M.~W., Veta, M., Eppenhof, K. A.~J., Pluim, J. P.~W.,
  and Duits, R.
\newblock Roto-{{Translation Covariant Convolutional Networks}} for {{Medical
  Image Analysis}}.
\newblock In Frangi, A.~F., Schnabel, J.~A., Davatzikos, C.,
  {Alberola-L{\'o}pez}, C., and Fichtinger, G. (eds.), \emph{Medical {{Image
  Computing}} and {{Computer Assisted Intervention}} \textendash{} {{MICCAI}}
  2018}, Lecture {{Notes}} in {{Computer Science}}, pp.\  440--448, {Cham},
  2018. {Springer International Publishing}.
\newblock ISBN 978-3-030-00928-1.
\newblock \doi{10.1007/978-3-030-00928-1_50}.

\bibitem[Cesa et~al.(2022)Cesa, Lang, and Weiler]{cesa2022a}
Cesa, G., Lang, L., and Weiler, M.
\newblock A program to build {E(N)}-equivariant steerable {CNN}s.
\newblock In \emph{International Conference on Learning Representations}, 2022.
\newblock URL \url{https://openreview.net/forum?id=WE4qe9xlnQw}.

\bibitem[Dao et~al.(2019)Dao, Gu, Ratner, Smith, Sa, and Re]{dao2019}
Dao, T., Gu, A., Ratner, A., Smith, V., Sa, C.~D., and Re, C.
\newblock A {{Kernel Theory}} of {{Modern Data Augmentation}}.
\newblock In \emph{Proceedings of the 36th {{International Conference}} on
  {{Machine Learning}}}, pp.\  1528--1537. {PMLR}, May 2019.

\bibitem[Du et~al.(2019)Du, Hou, P{\'o}czos, Salakhutdinov, Wang, and
  Xu]{du2019}
Du, S.~S., Hou, K., P{\'o}czos, B., Salakhutdinov, R., Wang, R., and Xu, K.
\newblock Graph {{Neural Tangent Kernel}}: {{Fusing Graph Neural Networks}}
  with {{Graph Kernels}}.
\newblock In \emph{Advances in {{Neural Information Processing Systems}}},
  volume~32. Curran Associates, Inc., November 2019.

\bibitem[Elesedy \& Zaidi(2021)Elesedy and Zaidi]{elesedy2021a}
Elesedy, B. and Zaidi, S.
\newblock Provably {{Strict Generalisation Benefit}} for {{Equivariant
  Models}}.
\newblock In \emph{Proceedings of the 38th {{International Conference}} on
  {{Machine Learning}}}, pp.\  2959--2969. {PMLR}, July 2021.

\bibitem[Erbin et~al.(2022)Erbin, Lahoche, and Samary]{erbin2022}
Erbin, H., Lahoche, V., and Samary, D.~O.
\newblock Nonperturbative renormalization for the neural network-{{QFT}}
  correspondence.
\newblock \emph{Machine Learning: Science and Technology}, 3\penalty0
  (1):\penalty0 015027, March 2022.
\newblock ISSN 2632-2153.
\newblock \doi{10.1088/2632-2153/ac4f69}.

\bibitem[Flinth \& Ohlsson(2023)Flinth and Ohlsson]{flinth2023}
Flinth, A. and Ohlsson, F.
\newblock Optimization {{Dynamics}} of {{Equivariant}} and {{Augmented Neural
  Networks}}.
\newblock \penalty0 (arXiv:2303.13458), March 2023.
\newblock \doi{10.48550/arXiv.2303.13458}.

\bibitem[Franceschi et~al.(2022)Franceschi, B{\'e}zenac, Ayed, Chen, Lamprier,
  and Gallinari]{franceschi2022}
Franceschi, J.-Y., B{\'e}zenac, E.~D., Ayed, I., Chen, M., Lamprier, S., and
  Gallinari, P.
\newblock A {{Neural Tangent Kernel Perspective}} of {{GANs}}.
\newblock In \emph{Proceedings of the 39th {{International Conference}} on
  {{Machine Learning}}}, pp.\  6660--6704. {PMLR}, June 2022.
\newblock \doi{10.48550/arXiv.2106.05566}.

\bibitem[Ganaie et~al.(2022)Ganaie, Hu, Malik, Tanveer, and
  Suganthan]{ganaie2022}
Ganaie, M.~A., Hu, M., Malik, A.~K., Tanveer, M., and Suganthan, P.~N.
\newblock Ensemble deep learning: {{A}} review.
\newblock \emph{Engineering Applications of Artificial Intelligence},
  115:\penalty0 105151, October 2022.
\newblock ISSN 09521976.
\newblock \doi{10.1016/j.engappai.2022.105151}.

\bibitem[Gerken et~al.(2022)Gerken, Carlsson, Linander, Ohlsson, Petersson, and
  Persson]{gerken2022}
Gerken, J.~E., Carlsson, O., Linander, H., Ohlsson, F., Petersson, C., and
  Persson, D.
\newblock Equivariance versus {{Augmentation}} for {{Spherical Images}}.
\newblock In \emph{Proceedings of the 39th {{International Conference}} on
  {{Machine Learning}}}, pp.\  7404--7421. {PMLR}, June 2022.
\newblock \doi{10.48550/arXiv.2202.03990}.

\bibitem[Gerken et~al.(2023)Gerken, Aronsson, Carlsson, Linander, Ohlsson,
  Petersson, and Persson]{gerken2023}
Gerken, J.~E., Aronsson, J., Carlsson, O., Linander, H., Ohlsson, F.,
  Petersson, C., and Persson, D.
\newblock Geometric deep learning and equivariant neural networks.
\newblock \emph{Artificial Intelligence Review}, June 2023.
\newblock ISSN 1573-7462.
\newblock \doi{10.1007/s10462-023-10502-7}.

\bibitem[Golikov et~al.(2022)Golikov, Pokonechnyy, and Korviakov]{golikov2022}
Golikov, E., Pokonechnyy, E., and Korviakov, V.
\newblock Neural {{Tangent Kernel}}: {{A Survey}}.
\newblock \penalty0 (arXiv:2208.13614), August 2022.
\newblock \doi{10.48550/arXiv.2208.13614}.

\bibitem[Halverson et~al.(2021)Halverson, Maiti, and Stoner]{halverson2021}
Halverson, J., Maiti, A., and Stoner, K.
\newblock Neural {{Networks}} and {{Quantum Field Theory}}.
\newblock \emph{Machine Learning: Science and Technology}, 2\penalty0
  (3):\penalty0 035002, September 2021.
\newblock ISSN 2632-2153.
\newblock \doi{10.1088/2632-2153/abeca3}.

\bibitem[Hayase \& Oh(2022)Hayase and Oh]{hayase2022}
Hayase, J. and Oh, S.
\newblock Few-shot {{Backdoor Attacks}} via {{Neural Tangent Kernels}}.
\newblock In \emph{The {{Eleventh International Conference}} on {{Learning
  Representations}}}, September 2022.
\newblock \doi{10.48550/arXiv.2210.05929}.

\bibitem[Huang \& Yau(2020)Huang and Yau]{huang2020}
Huang, J. and Yau, H.-T.
\newblock Dynamics of {{Deep Neural Networks}} and {{Neural Tangent
  Hierarchy}}.
\newblock In \emph{Proceedings of the 37th {{International Conference}} on
  {{Machine Learning}}}, pp.\  4542--4551. {PMLR}, November 2020.
\newblock \doi{10.48550/arXiv.1909.08156}.

\bibitem[Jacot et~al.(2018)Jacot, Gabriel, and Hongler]{jacot2018}
Jacot, A., Gabriel, F., and Hongler, C.
\newblock Neural {{Tangent Kernel}}: {{Convergence}} and {{Generalization}} in
  {{Neural Networks}}.
\newblock In \emph{Advances in {{Neural Information Processing Systems}}},
  volume~31. {Curran Associates, Inc.}, 2018.

\bibitem[Kaba et~al.(2023)Kaba, Mondal, Zhang, Bengio, and
  Ravanbakhsh]{kaba2023equivariance}
Kaba, S.-O., Mondal, A.~K., Zhang, Y., Bengio, Y., and Ravanbakhsh, S.
\newblock Equivariance with learned canonicalization functions.
\newblock In \emph{International Conference on Machine Learning}, pp.\
  15546--15566. PMLR, 2023.

\bibitem[Kather et~al.(2018)Kather, Halama, and Marx]{hist_dataset}
Kather, J.~N., Halama, N., and Marx, A.
\newblock {100,000 histological images of human colorectal cancer and healthy
  tissue}.
\newblock April 2018.
\newblock \doi{10.5281/zenodo.1214456}.

\bibitem[Lakshminarayanan et~al.(2017)Lakshminarayanan, Pritzel, and
  Blundell]{lakshminarayanan2017simple}
Lakshminarayanan, B., Pritzel, A., and Blundell, C.
\newblock Simple and scalable predictive uncertainty estimation using deep
  ensembles.
\newblock \emph{Advances in neural information processing systems}, 30, 2017.

\bibitem[Lee et~al.(2018)Lee, Bahri, Novak, Schoenholz, Pennington, and
  {Sohl-Dickstein}]{lee2018}
Lee, J., Bahri, Y., Novak, R., Schoenholz, S.~S., Pennington, J., and
  {Sohl-Dickstein}, J.
\newblock Deep {{Neural Networks}} as {{Gaussian Processes}}.
\newblock In \emph{International {{Conference}} on {{Learning
  Representations}}}, February 2018.

\bibitem[Lee et~al.(2019)Lee, Xiao, Schoenholz, Bahri, Novak, {Sohl-Dickstein},
  and Pennington]{lee2019a}
Lee, J., Xiao, L., Schoenholz, S., Bahri, Y., Novak, R., {Sohl-Dickstein}, J.,
  and Pennington, J.
\newblock Wide {{Neural Networks}} of {{Any Depth Evolve}} as {{Linear Models
  Under Gradient Descent}}.
\newblock In \emph{Advances in {{Neural Information Processing Systems}}},
  volume~32. {Curran Associates, Inc.}, 2019.
\newblock \doi{10.1088/1742-5468/abc62b}.

\bibitem[Li et~al.(2019)Li, Wang, Yu, Du, Hu, Salakhutdinov, and
  Arora]{li2019a}
Li, Z., Wang, R., Yu, D., Du, S.~S., Hu, W., Salakhutdinov, R., and Arora, S.
\newblock Enhanced {{Convolutional Neural Tangent Kernels}}.
\newblock \emph{arXiv:1911.00809 [cs, stat]}, November 2019.

\bibitem[Mei et~al.(2021)Mei, Misiakiewicz, and Montanari]{mei2021}
Mei, S., Misiakiewicz, T., and Montanari, A.
\newblock Learning with invariances in random features and kernel models.
\newblock In \emph{Proceedings of {{Thirty Fourth Conference}} on {{Learning
  Theory}}}, pp.\  3351--3418. {PMLR}, July 2021.

\bibitem[Mondal et~al.(2023)Mondal, Panigrahi, Kaba, Rajeswar, and
  Ravanbakhsh]{mondal2023equivariant}
Mondal, A.~K., Panigrahi, S.~S., Kaba, S.-O., Rajeswar, S., and Ravanbakhsh, S.
\newblock Equivariant adaptation of large pre-trained models.
\newblock \emph{arXiv preprint arXiv:2310.01647}, 2023.

\bibitem[Mroueh et~al.(2015)Mroueh, Voinea, and Poggio]{mroueh2015}
Mroueh, Y., Voinea, S., and Poggio, T.~A.
\newblock Learning with {{Group Invariant Features}}: {{A Kernel Perspective}}.
\newblock In \emph{Advances in {{Neural Information Processing Systems}}},
  volume~28. {Curran Associates, Inc.}, 2015.

\bibitem[Neal(1996)]{neal1996}
Neal, R.~M.
\newblock \emph{Bayesian {{Learning}} for {{Neural Networks}}}.
\newblock {Springer Science \& Business Media}, 1996.
\newblock ISBN 978-1-4612-0745-0.

\bibitem[Novak et~al.(2020)Novak, Xiao, Hron, Lee, Alemi, {Sohl-Dickstein}, and
  Schoenholz]{neuraltangents2020}
Novak, R., Xiao, L., Hron, J., Lee, J., Alemi, A.~A., {Sohl-Dickstein}, J., and
  Schoenholz, S.~S.
\newblock Neural {{Tangents}}: {{Fast}} and {{Easy Infinite Neural Networks}}
  in {{Python}}.
\newblock In \emph{Eighth {{International Conference}} on {{Learning
  Representations}}}, April 2020.

\bibitem[Puny et~al.(2021)Puny, Atzmon, Ben-Hamu, Misra, Grover, Smith, and
  Lipman]{puny2021frame}
Puny, O., Atzmon, M., Ben-Hamu, H., Misra, I., Grover, A., Smith, E.~J., and
  Lipman, Y.
\newblock Frame averaging for invariant and equivariant network design.
\newblock \emph{arXiv preprint arXiv:2110.03336}, 2021.

\bibitem[Raj et~al.(2017)Raj, Kumar, Mroueh, Fletcher, and Schoelkopf]{raj2017}
Raj, A., Kumar, A., Mroueh, Y., Fletcher, T., and Schoelkopf, B.
\newblock Local {{Group Invariant Representations}} via {{Orbit Embeddings}}.
\newblock In \emph{Proceedings of the 20th {{International Conference}} on
  {{Artificial Intelligence}} and {{Statistics}}}, pp.\  1225--1235. {PMLR},
  April 2017.

\bibitem[Ruffolo et~al.(2023)Ruffolo, Chu, Mahajan, and Gray]{ruffolo2023fast}
Ruffolo, J.~A., Chu, L.-S., Mahajan, S.~P., and Gray, J.~J.
\newblock Fast, accurate antibody structure prediction from deep learning on
  massive set of natural antibodies.
\newblock \emph{Nature communications}, 14\penalty0 (1):\penalty0 2389, 2023.

\bibitem[Saib et~al.(2020)Saib, Sengeh, Dlamini, and
  Singh]{saib2020hierarchical}
Saib, W., Sengeh, D., Dlamini, G., and Singh, E.
\newblock Hierarchical deep learning ensemble to automate the classification of
  breast cancer pathology reports by icd-o topography.
\newblock \emph{arXiv preprint arXiv:2008.12571}, 2020.

\bibitem[Sannai et~al.(2021)Sannai, Imaizumi, and Kawano]{sannai2021}
Sannai, A., Imaizumi, M., and Kawano, M.
\newblock Improved generalization bounds of group invariant / equivariant deep
  networks via quotient feature spaces.
\newblock In \emph{Proceedings of the {{Thirty-Seventh Conference}} on
  {{Uncertainty}} in {{Artificial Intelligence}}}, pp.\  771--780. {PMLR},
  December 2021.
\newblock \doi{10.48550/arXiv.1910.06552}.

\bibitem[Thomas et~al.(2018)Thomas, Smidt, Kearnes, Yang, Li, Kohlhoff, and
  Riley]{thomas2018}
Thomas, N., Smidt, T., Kearnes, S., Yang, L., Li, L., Kohlhoff, K., and Riley,
  P.
\newblock Tensor field networks: {{Rotation-}} and translation-equivariant
  neural networks for {{3D}} point clouds.
\newblock \emph{arXiv:1802.08219 [cs]}, May 2018.

\bibitem[Wang et~al.(2022)Wang, Yu, and Perdikaris]{wang2022b}
Wang, S., Yu, X., and Perdikaris, P.
\newblock When and why {{PINNs}} fail to train: {{A}} neural tangent kernel
  perspective.
\newblock \emph{Journal of Computational Physics}, 449:\penalty0 110768,
  January 2022.
\newblock ISSN 0021-9991.
\newblock \doi{10.1016/j.jcp.2021.110768}.

\bibitem[Weiler \& Cesa(2019)Weiler and Cesa]{e2cnn}
Weiler, M. and Cesa, G.
\newblock {General E(2)-Equivariant Steerable CNNs}.
\newblock In \emph{Conference on Neural Information Processing Systems
  (NeurIPS)}, 2019.
\newblock URL \url{https://arxiv.org/abs/1911.08251}.

\bibitem[Xiao et~al.(2017)Xiao, Rasul, and Vollgraf]{xiao2017/online}
Xiao, H., Rasul, K., and Vollgraf, R.
\newblock Fashion-mnist: a novel image dataset for benchmarking machine
  learning algorithms, 2017.

\bibitem[Yaida(2020)]{yaida2020}
Yaida, S.
\newblock Non-{{Gaussian}} processes and neural networks at finite widths.
\newblock In \emph{Proceedings of {{The First Mathematical}} and {{Scientific
  Machine Learning Conference}}}, pp.\  165--192. {PMLR}, August 2020.

\bibitem[Yang(2020)]{yang2020a}
Yang, G.
\newblock Scaling {{Limits}} of {{Wide Neural Networks}} with {{Weight
  Sharing}}: {{Gaussian Process Behavior}}, {{Gradient Independence}}, and
  {{Neural Tangent Kernel Derivation}}.
\newblock \emph{arXiv:1902.04760 [cond-mat, physics:math-ph, stat]}, April
  2020.

\bibitem[Yang \& Hu(2022)Yang and Hu]{yang2022d}
Yang, G. and Hu, E.~J.
\newblock Feature {{Learning}} in {{Infinite-Width Neural Networks}}.
\newblock \penalty0 (arXiv:2011.14522), July 2022.
\newblock \doi{10.48550/arXiv.2011.14522}.

\bibitem[Yang \& Wang(2023)Yang and Wang]{yang2023a}
Yang, H. and Wang, Z.
\newblock On the {{Neural Tangent Kernel Analysis}} of {{Randomly Pruned Neural
  Networks}}.
\newblock In \emph{Proceedings of {{The}} 26th {{International Conference}} on
  {{Artificial Intelligence}} and {{Statistics}}}, pp.\  1513--1553. {PMLR},
  April 2023.

\end{thebibliography}
\bibliographystyle{icml2024}
%%% Local Variables:
%%% mode: latex
%%% TeX-master: "main_icml"
%%% End:

%%%%%%%%%%%%%%%%%%%%%%%%%%%%%%%%%%%%%%%%%%%%%%%%%%%%%%%%%%%%%%%%%%%%%%%%%%%%%%%
%%%%%%%%%%%%%%%%%%%%%%%%%%%%%%%%%%%%%%%%%%%%%%%%%%%%%%%%%%%%%%%%%%%%%%%%%%%%%%%
% APPENDIX
%%%%%%%%%%%%%%%%%%%%%%%%%%%%%%%%%%%%%%%%%%%%%%%%%%%%%%%%%%%%%%%%%%%%%%%%%%%%%%%
%%%%%%%%%%%%%%%%%%%%%%%%%%%%%%%%%%%%%%%%%%%%%%%%%%%%%%%%%%%%%%%%%%%%%%%%%%%%%%%
%\newpage
%\appendix
\onecolumn

\appendix

\section{Introduction to neural tangent kernels}\label{app:ntk-intro}
In this appendix, we will give a brief review of the theory of neural tangent kernels (NTKs) for readers who are not familiar with it. For a more comprehensive review, see e.g.~\citet{golikov2022}.

\subsection{The empirical NTK}
To understand how the NTK arises in the training dynamics of neural networks, consider a neural network $f_{w}:X\rightarrow\mathbb{R}$ with parameters $w$. Under continuous gradient descent, the parameters are updated according to
\begin{align}
  \frac{\partial w}{\partial t}=-\eta \frac{\partial \mathcal{L}(f_{w}(\mathcal{X}),\mathcal{Y})}{\partial w}\,, \label{eq:3}
\end{align}
where $t$ is the training time, $\eta$ is the learning rate and $\mathcal{L}(f_{w}(\mathcal{X}),\mathcal{Y})$ is the loss function which depends on the training set predictions $f_{w}(\mathcal{X})$ and the training labels $\mathcal{Y}$. Since $\mathcal{L}$ depends on $w$ only through $f_{w}(\mathcal{X})$, we can use the chain-rule to rewrite \eqref{eq:3},
\begin{align}
  \frac{\partial w}{\partial t}=-\eta \sum_{i=1}^{|\mathcal{T}|}\frac{\partial f_{w}(x_{i})}{\partial w}\frac{\partial\mathcal{L}(f_{w}(\mathcal{X}),\mathcal{Y})}{\partial f_{w}(x_{i})}\,.\label{eq:4}
\end{align}
Similarly, $f_{w}$ depends on $t$ only through $w$, so we obtain
\begin{align}
  \frac{\partial f_{w}(x)}{\partial t}=\left(\frac{\partial f_{w}(x)}{\partial w}\right)^{\top}\frac{\partial w}{\partial t}\,.
\end{align}
And hence, using \eqref{eq:4},
\begin{align}
  \frac{\partial f_{w}(x)}{\partial t}=-\eta \sum_{i=1}^{|\mathcal{T}|}\left(\frac{\partial f_{w}(x)}{\partial w}\right)^{\top}\frac{\partial f_{w}(x_{i})}{\partial w}\frac{\partial\mathcal{L}(f_{w}(\mathcal{X}),\mathcal{Y})}{\partial f_{w}(x_{i})}
  =-\eta\sum_{i=1}^{|\mathcal{T}|}\Theta^{w}(x,x_{i})\frac{\partial\mathcal{L}(f_{w}(\mathcal{X}),\mathcal{Y})}{\partial f_{w}(x_{i})}\,,\label{eq:5}
\end{align}
where we have introduced the \emph{empirical neural tangent kernel}
\begin{align}
  \Theta^{w}(x,x')=\left(\frac{\partial f_{w}(x)}{\partial w}\right)^{\top}\frac{\partial f_{w}(x')}{\partial w}\,.\label{eq:6}
\end{align}
This quantity depends on the parameters and hence evolves during training. As can be seen in \eqref{eq:5}, it is the NTK which induces the complicated non-linear evolution in the training dynamics since the derivative of the loss with respect to the predictions is independent of the parameters. Since it depends on the parameters, we can think of the empirical NTK at initialization as a random variable over the initialization distribution.

\subsection{Infinite width limit}

At infinite width, the dynamics \eqref{eq:5} simplify dramatically. Firstly, it is known for a long time that the preactivations at initialization become a mean-zero Gaussian process (GP) as the width of the layers tend to infinity~\cite{neal1996}. The covariance function of this GP is known as the \emph{neural network gaussian process (NNGP) kernel} $\mathcal{K}(x,x')$. Therefore, in particular
\begin{align}
  f_{w_{0}}(x)\sim\mathcal{N}(0,\mathcal{K}(x,x))
  \qquad \forall x \in X
  \label{eq:7}
\end{align}
at initialization. The NNGP kernel can be computed recursively layer-by-layer with the recursion relations given in the proof of Theorem~\ref{lemma1} in Appendix~\ref{app:proofs} below.

Secondly, it was realized more recently~\cite{jacot2018} that in the infinite width limit, the empirical NTK \eqref{eq:6} converges in probability to its expectation value and therefore becomes a deterministic quantity
\begin{align}
  \Theta^{w}(x,x')\xrightarrow{\text{width}\rightarrow\infty}\mathbb{E}_{w}\left[\left(\frac{\partial f_{w}(x)}{\partial w}\right)^{\top}\frac{\partial f_{w}(x')}{\partial w}\right]=\Theta(x,x')\,,\label{eq:9}
\end{align}
which is the definition of the NTK we used above in \eqref{eq:ntk_def}. This limiting quantity can be computed again recursively layer-by-layer.

In~\cite{jacot2018}, the authors introduced a slightly different parametrization of neural network layers. For fully connected layers of input width $n$, instead of using
\begin{align}
  z^{(\ell)}(x)=W f^{(\ell)}(x) \qquad \text{with} \qquad W_{ij}\sim\mathcal{N}\Big(0,\frac{1}{n}\Big)
\end{align}
they suggest to use
\begin{align}
  z^{(\ell)}(x)=\frac{1}{\sqrt{n}}W f^{(\ell)}(x) \qquad \text{with} \qquad W_{ij}\sim\mathcal{N}(0,1)\,.\label{eq:8}
\end{align}
Note that at initialization, the distribution of $z^{(\ell)}(x)$ is the same in both parametrizations. During training, the derivatives with respect to $W$ will however be rescaled in \eqref{eq:8}. An important result of \cite{jacot2018} was that under mild assumptions on the nonlinearity, in the infinite width limit the NTK does not only become a deterministic, but also constant throughout training when using the parametrization~\eqref{eq:8}.

\subsection{Training dynamics}
In the parametrization \eqref{eq:8}, the training dynamics \eqref{eq:5} therefore simplify dramatically when taking the layer widths to infinity. For the MSE loss, \eqref{eq:5} becomes
\begin{align}
  \frac{\partial f_{t}(x)}{\partial t}=-\eta\sum_{i=1}^{|\mathcal{T}|}\Theta(x,x_{i})(f_{t}(x_{i})-y_{i})=-\eta\ \Theta(x,\mathcal{X})(f_{t}(\mathcal{X})-\mathcal{Y})\,,\label{eq:10}
\end{align}
where we have introduced the notation $f_{t}$ for the neural network with parameters from training time $t$, $\Theta$ is the deterministic and constant NTK \eqref{eq:9} and we have used the compact notation for summations over the training data employed in many places of this paper.

The differential equation \eqref{eq:10} can be solved analytically in two steps. Since the right-hand side depends on $f_{t}$ evaluated on the training set, whereas the left-hand side depends on $f_{t}$ evaluated at an arbitrary point, we first solve \eqref{eq:10} evaluated on the training set,
\begin{align}
  \frac{\partial f_{t}(\mathcal{X})}{\partial t}=-\eta\ \Theta(\mathcal{X},\mathcal{X})(f_{t}(\mathcal{X})-\mathcal{Y})\,.
\end{align}
The solution to this equation is, in terms of the initial training-set predictions $f_{0}(\mathcal{X})$, given by
\begin{align}
  f_{t}(\mathcal{X})=e^{-\eta\Theta(\mathcal{X},\mathcal{X})t}(f_{0}(\mathcal{X})-\mathcal{Y})+\mathcal{Y}\,.
\end{align}
Next, we can plug this solution back into \eqref{eq:10} to obtain an equation for $f_{t}(x)$,
\begin{align}
  \frac{\partial f_{t}(x)}{\partial t}=-\eta\ \Theta(x,\mathcal{X})e^{-\eta\Theta(\mathcal{X},\mathcal{X})t}(f_{0}(\mathcal{X})-\mathcal{Y})\,.
\end{align}
The right-hand side depends on $t$ only through the exponential factor. Integration therefore yields a solution for $f_{t}(x)$ which we write in terms of the initial prediction $f_{0}(x)$
\begin{align}
  f_{t}(x)=\Theta(x,\mathcal{X})\Theta(\mathcal{X},\mathcal{X})^{-1}(e^{-\eta\Theta(\mathcal{X},\mathcal{X})t}-\mathbb{I})(f_{0}(\mathcal{X})-\mathcal{Y})+f_{0}(x)\,.\label{eq:11}
\end{align}
This completely solves the training dynamics and we can predict the output at an arbitrary test point after an arbitrary amount of training time.

The prediction at time $t$ is still a random variable of the initialization distribution. However, the initialization only enters via $f_{0}$ which in the infinite width limit is a GP as noted above. Therefore, $f_{t}$ is as a linear combination of GPs itself a GP. Since the mean function of $f_{0}$ is identically zero \eqref{eq:7}, it is straightforward to compute the mean function $\mu_{t}$ of $f_{t}$
\begin{align}
  \mu_{t}(x)=\mathbb{E}[f_{t}(x)]=\Theta(x,\mathcal{X})\Theta(\mathcal{X},\mathcal{X})^{-1}(\mathbb{I}-e^{-\eta\Theta(\mathcal{X},\mathcal{X})t})\mathcal{Y}\,.
\end{align}
This is just the expression given above in \eqref{eq:output_mean}. Similarly, one can compute the covariance function $\Sigma_{t}$ of $f_{t}$
\begin{align}
  \Sigma_{t}(x,x')&=\mathbb{E}[(f_{t}(x)-\mu_{t}(x))(f_{t}(x')-\mu_{t}(x'))]\\
  &=\mathbb{E}\left[\left( \Theta(x,\mathcal{X})\Theta(\mathcal{X},\mathcal{X})^{-1}(e^{-\eta\Theta(\mathcal{X},\mathcal{X})t}-\mathbb{I})f_{0}(\mathcal{X})+f_{0}(x) \right)\right.\nonumber\\
  &\quad\left.\times\left( f_{0}(\mathcal{X})^{\top} (e^{-\eta\Theta(\mathcal{X},\mathcal{X})t}-\mathbb{I})\Theta(\mathcal{X},\mathcal{X})^{-1}\Theta(\mathcal{X},x')+f_{0}(x') \right)\right]\\
  &=\mathcal{K}(x,x')-\mathcal{K}(x,\mathcal{X})\Theta(\mathcal{X},\mathcal{X})^{-1}(\mathbb{I}-e^{-\eta\Theta(\mathcal{X},\mathcal{X})t})\Theta(\mathcal{X},x')\nonumber\\
  &\quad-\Theta(x,\mathcal{X})\Theta(\mathcal{X},\mathcal{X})^{-1}(\mathbb{I}-e^{-\eta\Theta(\mathcal{X},\mathcal{X})t})\mathcal{K}(\mathcal{X},x')\nonumber\\
  &\quad+\Theta(x,\mathcal{X})\Theta(\mathcal{X},\mathcal{X})^{-1}(\mathbb{I}-e^{-\eta\Theta(\mathcal{X},\mathcal{X})t})\mathcal{K}(\mathcal{X},\mathcal{X})(\mathbb{I}-e^{-\eta\Theta(\mathcal{X},\mathcal{X})t})\Theta(\mathcal{X},\mathcal{X})^{-1}\Theta(\mathcal{X},x')\,, \label{eq:cov_fct_long}
\end{align}
where we have used that the expectation value of $f_{0}$ vanishes and that the covariance function of $f_{0}$ is the NNGP $\mathcal{K}$. The final expression is \eqref{eq:output_var} from above. Since the predictions on arbitary test points are a GP, by providing explicit expressions for the mean- and convariance functions, we have determined the statistics of the predictions entirely.

\section{Proofs}\label{app:proofs}
\firstlemma*
\begin{proof}

We will prove the transformation properties by induction over the layer, using the forward equations for the kernels of the different layer types considered. In this prescription, both NNGP and NTK are defined recursively per layer. In the first layer, they are simply given by
 \begin{align}
    \mathcal{K}_1^{a,a'}(x,x')&=x_{a}^\top x'_{a'}\\
    \Theta_1^{a,a'}(x,x')&=\mathcal{K}_1^{a,a'}(x,x')\label{eq:cntk_1}\,,
 \end{align}
 where $a$ and $a'$ both denote a spatial axis, e.g.\ in two dimensions, $a$ is a multi-index $a=(h,w)$ and for graphs, $a$ is the node index. The $a,a'$ axes can be absent for inputs without spatial axes.

The forward equations which account for the nonlinearities are given by
\begin{align}
    \Lambda_{\ell}^{a,a'}(x,x')&=\begin{pmatrix}
        \mathcal{K}_\ell^{a,a}(x,x) & \mathcal{K}_\ell^{a,a'}(x,x')\\
        \mathcal{K}_\ell^{a',a}(x',x) & \mathcal{K}_\ell^{a',a'}(x',x')
    \end{pmatrix}\label{eq:lambeda_forward}\\
    \mathcal{K}_{\ell}^{a,a'}(x,x')&=\mathbb{E}_{(u,v)\sim\mathcal{N}(0,\Lambda_{\ell-1}^{a,a'}(x,x'))}[\sigma(u)\sigma(v)]\label{eq:nngp_forward_nonlin}\\
    \dot{\mathcal{K}}_{\ell}^{a,a'}(x,x')&=\mathbb{E}_{(u,v)\sim\mathcal{N}(0,\Lambda_{\ell-1}^{a,a'}(x,x'))}[\sigma'(u)\sigma'(v)]\\
    \Theta^{a,a'}_{\ell}(x,x')&=\dot{\mathcal{K}}_{\ell}^{a,a'}(x,x')\Theta^{a,a'}_{\ell-1}(x,x')\,,
\end{align}
where $\sigma$ is the nonlinearity and $\sigma'$ its derivative. Note that throughout, we drop numerical prefactors which depend on the prefactors in the layer definitions and initialization variances and are irrelevant for our argument.

For fully connected layers, the forward equation for the kernels is given by
\begin{align}
    \mathcal{K}_{\ell}(x,x')&=\mathcal{K}_{\ell-1}(x,x')\label{eq:sigma_forward_dense}\\ 
    \Theta_\ell(x,x')&=\mathcal{K}_\ell(x,x')+\Theta_{\ell-1}(x,x')\,.\label{eq:ntk_forward_dense}
\end{align}
For convolutional layers, the forward equation for the kernels is given by~\cite{arora2019}
\begin{align}
    \mathcal{K}^{a,a'}_{\ell}(x,x')&=\sum_{\tilde{a}}\mathcal{K}_{\ell-1}^{a+\tilde{a},a'+\tilde{a}}(x,x')\\ 
    \Theta_{\ell}^{a,a'}(x,x')&=\mathcal{K}^{a,a'}_{\ell}(x,x')+\sum_{\tilde{a}}\Theta_{\ell-1}^{a+\tilde{a},a'+\tilde{a}}(x,x')\,.
\end{align}
For flattening layers, the forward equation for the kernels is given by~\cite{neuraltangents2020}
\begin{align}
    \mathcal{K}^{a,a'}_{\ell}(x,x')&=\sum_{\tilde{a}}\mathcal{K}^{\tilde{a},\tilde{a}}_{\ell-1}(x,x')\label{eq:sigma_forward_flattening}\\
    \Theta_\ell(x,x')&=\mathcal{K}^{a,a'}_{\ell}(x,x')+\sum_{\tilde{a}}\Theta^{\tilde{a},\tilde{a}}_{\ell-1}(x,x')\,.
\end{align}

In graph neural networks we consider graphs with node features $x_a\in\mathbb{R}^n$ at node $a$. In a local aggregation layer, we sum the node features over a neighborhood $\mathcal{N}(a)$ of $a$,
\begin{align}
    z_{\ell}^{a}(x)=\sum_{\tilde{a}\in\mathcal{N}(a)}z_{\ell-1}^{\tilde{a}}(x)\,.
\end{align}
For local aggregation layers, the forward equations are given by~\cite{du2019}
\begin{align}
    \mathcal{K}^{a,a'}_{\ell}(x,x')&=\sum_{\tilde{a}\in\mathcal{N}(a)}\sum_{\tilde{a}'\in\mathcal{N}(a')}\mathcal{K}^{\tilde{a},\tilde{a}'}_{\ell-1}(x,x')\label{eq:nngp_loc_agg}\\
    \Theta^{a,a'}_\ell(x,x')&=\mathcal{K}^{a,a'}_{\ell}(x,x')+\sum_{\tilde{a}\in\mathcal{N}(a)}\sum_{\tilde{a}'\in\mathcal{N}(a')}\Theta^{\tilde{a},\tilde{a}'}_{\ell-1}(x,x')\label{eq:ntk_loc_agg}\,.
\end{align}
In a global aggregation layer, we sum over the entire graph instead. Correspondingly, for global aggregation layers, the sums in \eqref{eq:nngp_loc_agg} and \eqref{eq:ntk_loc_agg} run over the entire node set.

The kernels for the entire network are then given by the kernels of the last layer $L$,
\begin{align}
    \mathcal{K}^{a,a'}(x,x')=\mathcal{K}_L^{a,a'}(x,x')
    \qquad\text{and}\qquad
    \Theta^{a,a'}(x,x')=\Theta_{L}^{a,a'}(x,x')\,.
\end{align}
The presence or absence of spatial indices of the kernels depends on if the final network layer has spatial dimensions or not. If additional channels are present in the final layer (as e.g.\ in multi-class classification), the NTK is proportional to the unit matrix in those dimensions~\cite{jacot2018}. The fully general NTK therefore has the structure
\begin{align}
    \Theta^{\alpha a,\alpha' a'}(x,x')=\Theta^{a,a'}(x,x')\delta^{\alpha \alpha'}\,,\label{eq:index_structure_ntk}
\end{align}
where $\delta$ is the Kronecker symbol.

\textbf{Base case} Using the definition \eqref{eq:rhoX} of $\rho_X$, the NNGP is equivariant by orthogonality of $\tau_X$,
\begin{align}
    \mathcal{K}_1^{a,a'}(\rho_X(g)x,\rho_X(g)x')&=x^\top_{\rho(g)^{-1}a}\tau_X^\top(g)\tau_X(g)x'_{\rho(g)^{-1}a'}=x^\top_{\rho(g)^{-1}a}x'_{\rho(g)^{-1}a'}\\
    &=\mathcal{K}_1^{\rho(g)^{-1}a,\rho(g)^{-1}a'}(x,x')=(\rho_{K}(g)\mathcal{K}_{1}(x,x')\rho_{K}^\top(g))^{a,a'}\,.\label{eq:cnngp_1_trafo}
\end{align}
In the first layer, NNGP and NTK are equal according to \eqref{eq:cntk_1}, so also the NTK transforms as \eqref{eq:cnngp_1_trafo}.

\textbf{Induction step} Assume that
\begin{align}
    \mathcal{K}_{\ell-1}(\rho_X(g)x,\rho_X(g)x')&=\rho_K(g)\mathcal{K}_{\ell-1}(x,x')\rho_K^\top(g)\\
    \Theta_{\ell-1}(\rho_X(g)x,\rho_X(g)x')&=\rho_K(g)\Theta_{\ell-1}(x,x')\rho_K^\top(g)\,.
\end{align}
with $\rho_K$ trivial if no spatial indices are present in layer $\ell-1$. For the nonlinearities, we have
\begin{align}
    \mathcal{K}_{\ell}^{a,a'}(\rho_X(g)x,\rho_X(g)x')
    &=\mathbb{E}_{(u,v)\sim\mathcal{N}(0,\Lambda_{\ell-1}^{a,a'}(\rho_X(g)x,\rho_X(g)x'))}[\sigma(u)\sigma(v)]\\
    &=\mathbb{E}_{(u,v)\sim\mathcal{N}(0,\Lambda_{\ell-1}^{\rho^{-1}(g)a,\rho^{-1}(g)a'}(x,x'))}[\sigma(u)\sigma(v)]\\
    &=\mathcal{K}_{\ell}^{\rho^{-1}(g)a,\rho^{-1}(g)a'}(x,x')\\
    &=(\rho_K(g)\mathcal{K}_{\ell}(x,x')\rho_K^\top(g))^{a,a'}\label{eq:nngp_ind_step}
\end{align}
and similarly for $\dot{\mathcal{K}}_{\ell}$. For the NTK, we have
\begin{align}
    \Theta^{a,a'}_{\ell}(\rho_X(g)x,\rho_X(g)x')&=\dot{\mathcal{K}}_{\ell}^{a,a'}(\rho_X(g)x,\rho_X(g)x')\Theta^{a,a'}_{\ell-1}(\rho_X(g)x,\rho_X(g)x')\\
    &=(\rho_K(g)\dot{\mathcal{K}}_{\ell}(x,x')\rho_K^\top(g))^{a,a'}(\rho_K(g)\Theta_{\ell-1}(x,x')\rho_K^\top(g))^{a,a'}\\
    &=(\rho_K(g)\dot{\mathcal{K}}_{\ell}(x,x')\Theta_{\ell-1}(x,x')\rho_K^\top(g))^{a,a'}\\
    &=(\rho_K(g)\Theta^{a,a'}_{\ell}(x,x')\rho_K^\top(g))^{a,a'}\,.
\end{align}

For fully connected layers, the induction steps for $\mathcal{K}_\ell$ and $\Theta_\ell$ are implied immediately by \eqref{eq:sigma_forward_dense} and \eqref{eq:ntk_forward_dense} and the induction assumptions.

For convolutional layers, the induction step for $\mathcal{K}_{\ell}$ is given by
\begin{align}
    \mathcal{K}_\ell^{a,a'}(\rho_X(g)x,\rho_X(g)x')
    &=\sum_{\tilde{a}}\mathcal{K}_{\ell-1}^{a+\tilde{a},a'+\tilde{a}}(\rho_X(g)x,\rho_X(g)x')\\
    &=\sum_{\tilde{a}}\mathcal{K}_{\ell-1}^{\rho^{-1}(g)(a+\tilde{a}),\rho^{-1}(g)(a'+\tilde{a})}(x,x')\\
    &=\sum_{\tilde{a}}\mathcal{K}_{\ell-1}^{\rho^{-1}(g)a+\tilde{a},\rho^{-1}(g)a'+\tilde{a}}(x,x')\\
    &=\mathcal{K}_{\ell}^{\rho^{-1}(g)a,\rho^{-1}(g)a'}(x,x')\\
    &=(\rho_K(g)\mathcal{K}_\ell(x,x')\rho_K^\top(g))^{a,a'}\,.
\end{align}
The induction step for the NTK in convolutional layers proceeds along the same lines.

For flattening layers, the induction step for $\mathcal{K}_\ell$ is given by
\begin{align}
    \mathcal{K}_{\ell}(\rho_X(g)x,\rho_X(g)x')&=\sum_{\tilde{a}}\mathcal{K}_{\ell}^{\tilde{a},\tilde{a}}(\rho_X(g)x,\rho_X(g)x')\\ 
    &=\sum_{\tilde{a}}\mathcal{K}_{\ell}^{\rho^{-1}(g)\tilde{a},\rho^{-1}(g)\tilde{a}}(x,x')\\ 
    &=\sum_{\tilde{a}}\mathcal{K}_{\ell}^{\tilde{a},\tilde{a}}(x,x')=\mathcal{K}_\ell(x,x')\,. 
\end{align}
The induction step for the NTK of flattening layers proceeds along the same lines.

For local aggregation layers in graph neural networks, the induction step for $\mathcal{K}_{\ell}$ is given by
\begin{align}
    \mathcal{K}_\ell^{a,a'}(\rho_X(g)x,\rho_X(g)x')
    &=\sum_{\tilde{a}\in\mathcal{N}(a)}\sum_{\tilde{a}'\in\mathcal{N}(a')}\mathcal{K}^{\tilde{a},\tilde{a}'}_{\ell-1}(\rho_X(g)x,\rho_X(g)x')\\
    &=\sum_{\tilde{a}\in\mathcal{N}(a)}\sum_{\tilde{a}'\in\mathcal{N}(a')}\mathcal{K}^{\tilde{a},\tilde{a}'}_{\ell-1}(x,x')\\
    &=\mathcal{K}_\ell^{a,a'}(x,x')\\
    &=(\rho_K(g)\mathcal{K}_\ell(x,x')\rho_K^\top(g))^{a,a'}\,,
\end{align}
where we have used that $\rho=\mathbb{I}$ in this case. The induction step for the NTK of local aggregation layers proceeds along the same lines.

\end{proof}

\begin{corollary}\label{cor:rhoXshift}
    By redefining $x'\rightarrow\rho_X^{-1}(g)x'$, the transformation properties of NNGP and NTK in Theorem \ref{lemma1} can equivalently be written as
    \begin{align}
        \mathcal{K}(\rho_X(g)x,x')&=\rho_K(g)\mathcal{K}(x,\rho_X^{-1}(g)x')\rho_K^\top(g)\\
        \Theta(\rho_X(g)x,x')&=\rho_K(g)\Theta(x,\rho_X^{-1}(g)x')\rho_K^\top(g)\,.
    \end{align}
\end{corollary}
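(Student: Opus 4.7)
The plan is to obtain the corollary by a direct change of variables in the identities of Theorem~\ref{lemma1}, so essentially no new analytic content is needed; the work is purely algebraic. Since Theorem~\ref{lemma1} asserts
\begin{align*}
    \mathcal{K}(\rho_X(g) y, \rho_X(g) y') &= \rho_K(g)\,\mathcal{K}(y,y')\,\rho_K^\top(g)\,,\\
    \Theta(\rho_X(g) y, \rho_X(g) y') &= \rho_K(g)\,\Theta(y,y')\,\rho_K^\top(g)\,,
\end{align*}
for all $y,y' \in X$ and all $g \in G$, I would instantiate this identity at the pair $(y,y') = (x,\,\rho_X^{-1}(g)x')$, which is permissible because $\rho_X(g)$ is invertible (it is an element of $\mathrm{GL}(X)$ by the definition of a linear representation).

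With this substitution, the first argument on the left becomes $\rho_X(g)x$, and the second argument collapses via the homomorphism property $\rho_X(g)\rho_X^{-1}(g) = \mathbb{I}$ to $x'$; the right-hand side is already in the target form. Carrying this out for both kernels yields the two claimed identities simultaneously. To keep the statement self-consistent, I would briefly note that $\rho_X^{-1}(g) = \rho_X(g^{-1})$, so that $\rho_X^{-1}(g)x'$ is itself a legitimate element of the input space on which the representation $\rho_X$ acts, which is all that Theorem~\ref{lemma1} requires.

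There is no real obstacle here; the only thing to be mildly careful about is the bookkeeping for kernels with spatial axes, where $\rho_K(g)$ acts nontrivially on output indices. One should verify that $\rho_K(g)$ on the right is unaffected by the substitution in the \emph{input} slots of the kernel (which it is, since $\rho_K$ is determined only by $g$ and not by the kernel's arguments), so the factor of $\rho_K(g)\,\cdot\,\rho_K^\top(g)$ on the right survives unchanged. With this minor check, the corollary is an immediate restatement of Theorem~\ref{lemma1}, and no induction over layers or further properties of the architecture need to be invoked.
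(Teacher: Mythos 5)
Your proposal is correct and matches the paper's reasoning exactly: the corollary is obtained by substituting $x' \to \rho_X^{-1}(g)x'$ into the identity of Theorem~\ref{lemma1}, using that $\rho_X(g)\rho_X^{-1}(g)=\mathbb{I}$. Your additional remarks on invertibility and on $\rho_K$ being independent of the kernel's arguments are accurate and harmless elaborations.
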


\begin{lemma}\label{app:helper_lemma}
Data augmentation implies that
\begin{enumerate}[label=(\alph*)]
    \item  $\Theta_{\pi_g(i), \, j} = \rho_K(g)\Theta^{}_{i, \, \pi_g^{-1}(j)}\rho_K^\top(g)$ \,,
    \item  $\Theta^{-1}_{\pi_g(i), \, j} = \rho_K(g)\Theta^{-1}_{i, \, \pi_g^{-1}(j)}\rho_K^\top(g)$ \,,
    \item $\mathcal{K}_{\pi_g(i), j} = \rho_K(g)\mathcal{K}_{i, \, \pi_g^{-1}(j)}\rho_K^\top(g)$ \,,
    \item $\mathcal{K}^{-1}_{\pi_g(i), \, j} = \rho_K(g)\mathcal{K}^{-1}_{i, \, \pi_g^{-1}(j)}\rho_K^\top(g)$ \,,
\end{enumerate}
and analogous results hold for any power of $\Theta$, $\Theta^{-1}$, $\mathcal{K}$ and $\mathcal{K}^{-1}$, respectively.
\end{lemma}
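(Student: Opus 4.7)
\textbf{Proof plan for Lemma~\ref{app:helper_lemma}.}

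The plan is to handle parts (a) and (c) directly from Theorem~\ref{lemma1} (in the form of Corollary~\ref{cor:rhoXshift}) and then to deduce (b), (d), and the statements about powers by recasting the identities as a single commutation relation on a block matrix and exploiting that commutators are preserved under inversion and polynomial operations.

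For (a) and (c), I would start from the data augmentation identity $\rho_X(g) x_i = x_{\pi_g(i)}$ and write
\[
\Theta_{\pi_g(i),\,j} = \Theta(x_{\pi_g(i)}, x_j) = \Theta(\rho_X(g) x_i, x_j).
\]
Applying Corollary~\ref{cor:rhoXshift} gives $\rho_K(g)\,\Theta(x_i, \rho_X^{-1}(g) x_j)\,\rho_K^\top(g)$. Because $g \mapsto \pi_g$ is a group homomorphism (it factors through $\rho_X$, which is itself a homomorphism), we have $\pi_{g^{-1}} = \pi_g^{-1}$, and therefore $\rho_X^{-1}(g) x_j = x_{\pi_g^{-1}(j)}$. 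Plugging this in yields (a); the identical argument applied to $\mathcal{K}$ yields (c).

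For (b) and (d), I would view the Gram matrix as a block matrix on (training index) $\times$ (spatial/channel index) and introduce two commuting operators: the training-index permutation $\mathcal{P}(g)$ (acting as the identity on spatial indices) and $\mathcal{R}(g)$ (acting as $\rho_K(g)$ on spatial indices and as the identity on training indices). These commute because they act on disjoint tensor factors, and both are orthogonal ($\mathcal{P}(g)$ as a permutation, $\mathcal{R}(g)$ because $\rho_K$ is a permutation of spatial indices by the paper's assumptions). Identity (a) can then be rearranged into the single matrix statement $U(g)\,\Theta = \Theta\,U(g)$ with $U(g) := \mathcal{P}(g)\,\mathcal{R}(g)^\top$ (one uses that permutations satisfy $\mathcal{P}(g)^\top = \mathcal{P}(g)^{-1}$ and that $\mathcal{R}$ and $\mathcal{P}$ commute to eliminate conjugations). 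Once $\Theta$ commutes with $U(g)$, any integer power $\Theta^n$ commutes with $U(g)$ (in particular $\Theta^{-1}$), and by linearity so does any matrix-analytic function of $\Theta$; the same reasoning applied to (c) handles $\mathcal{K}$, $\mathcal{K}^{-1}$, and their powers. Unwinding the matrix form of $U(g) \Theta^{-1} = \Theta^{-1} U(g)$ reproduces (b), and likewise for (d).

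The main obstacle I anticipate is purely bookkeeping: being careful that the convention for $\mathcal{P}(g)$ produces $\pi_g$ versus $\pi_g^{-1}$ correctly when acting on block-matrix rows vs.\ columns, and that the spatial conjugation $\rho_K(g)(\cdot)\rho_K^\top(g)$ on each block corresponds precisely to sandwich multiplication by $\mathcal{R}(g)$ and $\mathcal{R}(g)^\top$ on the large matrix. No substantive new idea is needed beyond Theorem~\ref{lemma1}; once the commutation relation $[U(g),\Theta]=0$ is set up, (b), (d), and the extension to powers follow at once.
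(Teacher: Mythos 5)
Your proof of (a) and (c) is identical to the paper's. For (b), (d), and the statements about powers, however, you take a genuinely different and arguably cleaner route. The paper handles inverses by starting from $\Theta(\mathcal{X},\rho_X(g)\mathcal{X})_{il}\left[\Theta(\mathcal{X},\rho_X(g)\mathcal{X})\right]^{-1}_{lj}=\delta_{ij}$, relabeling summation indices, and invoking uniqueness of the inverse matrix twice; powers of $\Theta$ are handled by a separate telescoping computation in index notation in which the $\rho_K^\top\rho_K$ factors are absorbed one contraction at a time. Your approach instead packages identity (a) as a single commutation relation $U(g)\Theta=\Theta U(g)$ on the large (training $\times$ spatial) matrix, with $U(g)=\mathcal{P}(g)\mathcal{R}(g)^\top$, using exactly the properties you name: $\mathcal{P}$ and $\mathcal{R}$ act on disjoint tensor factors so commute, $\mathcal{P}^\top=\mathcal{P}^{-1}$ because it is a permutation, and $\mathcal{R}^\top=\mathcal{R}^{-1}$ because $\rho_K$ is itself a permutation of spatial indices (so orthogonal). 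Once $[U(g),\Theta]=0$ is established, $[U(g),\Theta^{-1}]=0$ and $[U(g),\Theta^n]=0$ are immediate, and unwinding $U(g)\Theta^{-1}=\Theta^{-1}U(g)$ reproduces (b). Your version avoids the uniqueness-of-inverse argument and the index-level telescoping entirely, and it also makes clear why Lemma~2 (which the paper proves separately for analytic $F$) is essentially a corollary: commutation with $U(g)$ is preserved by any polynomial and, by continuity, by any convergent power series. The one point worth checking, which you correctly flag, is that $\pi\colon g\mapsto\pi_g$ is a homomorphism so that $\pi_{g^{-1}}=\pi_g^{-1}$; that follows because $\rho_X$ is a representation and $\pi_g$ is defined by $\rho_X(g)x_i=x_{\pi_g(i)}$ on the augmented training set. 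Your argument is correct and somewhat more economical than the paper's.
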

\begin{proof}
\textbf{(a):} By data augmentation, it follows that
\begin{align}
    \Theta_{\pi_g(i), \, j} &= \Theta(x_{\pi_g(i)}, x_j) \\
    &= \Theta(\rho_X(g) x_i, x_j) \\
    &= \rho_K(g)\Theta(x_i, \rho_X(g)^{-1} x_j)\rho_K^\top(g) \label{eq:NTKshiftcorusage}\\
    &= \rho_K(g)\Theta(x_i, x_{\pi_g^{-1}(j)})\rho_K^\top(g) \\
    &= \rho_K(g)\Theta_{i, \, \pi_g^{-1}(j)}\rho_K^\top(g) \,,
\end{align}
where we used Corollary~\ref{cor:rhoXshift} in \eqref{eq:NTKshiftcorusage}. For any power $N \in \mathbb{N}$ of the kernel, it holds therefore that
\begin{align}
    \left[\Theta^N\right]_{\pi_g(i), \, j} &= \Theta_{\pi_g(i), \, l} \left[ \Theta^{N-1} \right]_{l j}   \\
    &= \rho_K(g)\Theta_{i, \, \pi_g^{-1}(l)}\rho_K^\top(g) \left[ \Theta^{N-1} \right]_{l j} \\
    &\hspace{-2.25ex}\stackrel{l \mapsto \pi_g(l)}{=} \rho_K(g)\Theta_{i l}\rho_K^\top(g) \left[ \Theta^{N-1} \right]_{\pi_g(l) j} \\
    &= \rho_K(g)\Theta_{i l}\rho_K^\top(g) \Theta_{\pi(l)m}\left[ \Theta^{N-2} \right]_{m j} \\
    &= \rho_K(g)\Theta_{i l}\rho_K^\top(g) \rho_K(g)\Theta_{l\pi^{-1}_g(m)}\rho_K^\top(g)\left[ \Theta^{N-2} \right]_{m j} \label{eq:rhoKrohKT}\\
    &= \rho_K(g)\Theta^2_{i \pi^{-1}_g(m)}\rho_K^\top(g)\left[ \Theta^{N-2} \right]_{m j} \\
    &=\cdots=\rho_K(g)\left[\Theta^N\right]_{i, \, \pi^{-1}_g(j)}\rho_K^\top(g) \,.
    %&= \ldots = \left[ \Theta^{N-1} \right]_{i l} \Theta_{l, \pi_g^{-1}(j)} \\
    %&=\left[\Theta^N\right]_{i, \, \pi^{-1}_g(j)} \,.
\end{align}
Here, the contraction of spatial axes between adjacent kernels is implicit and in \eqref{eq:rhoKrohKT}, we have redefined these summation variables over spatial axes to absorb the action of $\rho_K^\top\rho_K$.

\textbf{(b):} We start from the equality
\begin{align}
    \Theta(\mathcal{X}, \rho_X(g) \mathcal{X})_{il} \left[ \Theta(\mathcal{X}, \rho_X(g) \mathcal{X}) \right]^{-1}_{lj} = \delta_{ij} \,,
\end{align}
where we have used the following notation for the Gram matrix $\Theta(\mathcal{X}, \mathcal{X})_{ij} := \Theta_{ij}$ and $G$ acts sample-wise on the dataset, $(\rho_X(g)\mathcal{X})_i=\rho_X(g)x_i$.
By data augmentation, this can be rewritten as
\begin{align}
    \Theta(\mathcal{X}, \mathcal{X})_{i, \,\pi_g(l)} \left[ \Theta(\mathcal{X}, \rho_X(g)\mathcal{X}) \right]^{-1}_{lj} = \delta_{ij} \,.
\end{align}
We now relabel the summation variable $l \to \pi_g^{-1}(l)$ and obtain
\begin{align}
    \Theta(\mathcal{X}, \mathcal{X})_{il} \left[ \Theta(\mathcal{X}, \rho_X(g)\mathcal{X}) \right]^{-1}_{\pi^{-1}_g(l), \,j} = \delta_{ij} \,.
\end{align}
By uniqueness of the inverse matrix, it thus follows that
\begin{align}
    \Theta(\mathcal{X}, \mathcal{X})^{-1}_{l j} = \left[ \Theta(\mathcal{X}, \rho_X(g) \mathcal{X}) \right]^{-1}_{\pi^{-1}_g(l), \,j} && \Longleftrightarrow &&
    \Theta(\mathcal{X}, \mathcal{X})^{-1}_{\pi_g(l),\, j} = \left[ \Theta(\mathcal{X}, \rho_X(g) \mathcal{X}) \right]^{-1}_{lj} \,. \label{eq:1sthelper}
\end{align}
Similarly, we can start from the expression
\begin{align}
    \left[ \Theta(\rho_X(g)^{-1}\mathcal{X}, \mathcal{X}) \right]^{-1}_{il} \Theta(\rho_X^{-1}(g) \mathcal{X}, \mathcal{X})_{l j} = \delta_{ij} \,.
\end{align}
By data augmentation, this can be rewritten as
\begin{align}
    \left[ \Theta(\rho_X(g)^{-1}\mathcal{X}, \mathcal{X}) \right]^{-1}_{il} \Theta(\mathcal{X}, \mathcal{X})_{\pi_g^{-1}(l),\, j} = \delta_{ij} \,.
\end{align}
Relabeling the summation variable $l \to \pi_g(l)$, we obtain
\begin{align}
    \left[ \Theta(\rho_X^{-1}(g) \mathcal{X}, \mathcal{X}) \right]^{-1}_{i, \pi_g(l)} \Theta(\mathcal{X}, \mathcal{X})_{l j} = \delta_{ij} \,.
\end{align}
By uniqueness of the inverse matrix, it follows again that
\begin{align}
    \Theta(\mathcal{X}, \mathcal{X})^{-1}_{i l} = \left[ \Theta(\rho_X^{-1}(g) \mathcal{X}, \mathcal{X}) \right]^{-1}_{i, \pi_g(l)} && \Longleftrightarrow && \Theta(\mathcal{X}, \mathcal{X})^{-1}_{i, \pi^{-1}_g(l)} = \left[ \Theta(\rho_X^{-1}(g)\mathcal{X}, \mathcal{X}) \right]^{-1}_{i l} \label{eq:2ndhelper}  \,.
\end{align}
Combining the results \eqref{eq:1sthelper} and \eqref{eq:2ndhelper}, the statement of the lemma follows immediately:
\begin{align}
    \Theta^{-1}_{\pi_g(i), j} = \left[\Theta(\mathcal{X}, \mathcal{X})\right]^{-1}_{\pi_g(i), j} &\hspace{-0.75ex}\stackrel{\eqref{eq:1sthelper}}{=} \left[\Theta(\mathcal{X}, \rho_X(g)\mathcal{X})\right]^{-1}_{i j} \\
    &= \rho_K(g)\left[\Theta(\rho_X^{-1}(g) \mathcal{X}, \mathcal{X})\right]^{-1}_{i j}\rho_K^\top(g) \\
    &\hspace{-0.75ex}\stackrel{\eqref{eq:2ndhelper}}{=} \rho_K(g)\Theta(\mathcal{X}, \mathcal{X})^{-1}_{i, \pi^{-1}_g(j)}\rho_K^\top(g) \\
    &= \rho_K(g)\Theta^{-1}_{i, \pi^{-1}_g(j)}\rho_K^\top(g) \,,
\end{align}
where $\rho_K$ is unaffected by the inverse since we invert the Gram matrix along the training sample axes $i,j$. The proof for any power $(\Theta^{-1})^N$ of the inverse Gram matrix follows in complete analogy to the proof of the same result for the Gram matrix $\Theta$.

\textbf{(c):} The proof for the NNGP follows in close analogy to the one for the NTK, see (a):
\begin{align}
    \mathcal{K}_{\pi_g(i), j} &= \mathcal{K}(x_{\pi_g(i)}, x_j) = \mathcal{K}(\rho_X(g) x_i, x_j)\\
    &= \rho_K(g)\mathcal{K}(x_i, \rho_X^{-1}(g) x_j)\rho_K^\top(g) = \rho_K(g)\mathcal{K}(x_i, x_{\pi^{-1}_g(j)})\rho_K^\top(g) \\
    &= \rho_K(g)\mathcal{K}_{i, \, \pi_g^{-1}(j)}\rho_K^\top(g) \,.
\end{align}
The proof for any power of the NNGP again follows in complete analogy to (a).

\textbf{(d):} Since the transformation properties of $\Theta$ and $\mathcal{K}$ under $G$ are completely identical, the proof follows the steps of (b) verbatim with the replacement $\Theta\rightarrow\mathcal{K}$. Similarly for any power of $\mathcal{K}$.
\end{proof}

Using this result, we can then show the following lemma as stated in the main part:
\secondlemma*
\begin{proof}
    As the matrix-valued function $F$ is analytic, it has the following series expansion
    \begin{align}
        F(\Theta, \Theta^{-1}, \mathcal{K}, \mathcal{K}^{-1})_{ij} = \sum_{n=1}^\infty \sum_{P_n} c_{P_n} \; P_n(\Theta, \Theta^{-1}, \mathcal{K}, \mathcal{K}^{-1})_{ij} \,,
    \end{align}
    where the inner sum is over all order $n$ polynomials involving $\Theta$ and $\mathcal{K}$ as well as their inverses and $c_{P_n}$ are coefficients.

By Lemma~\ref{app:helper_lemma}, for any such polynomial $P_n$ we have
\begin{align}
    P_n(\Theta, \Theta^{-1}, \mathcal{K}, \mathcal{K}^{-1})_{\pi_g(i) j} = \rho_K(g)P_n(\Theta, \Theta^{-1}, \mathcal{K}, \mathcal{K}^{-1})_{i \pi^{-1}_g(j)}\rho_K^\top(g) \,.
\end{align}
Applying this result to the series expansion above implies
\begin{align}
    \big[\Pi(g)F(\Theta, \Theta^{-1}, \mathcal{K}, \mathcal{K}^{-1})\big]_{ij}&=F(\Theta, \Theta^{-1}, \mathcal{K}, \mathcal{K}^{-1})_{\pi_g(i)j}\\
    &=\rho_K(g)F(\Theta, \Theta^{-1}, \mathcal{K}, \mathcal{K}^{-1})_{i\pi_g^{-1}(j)}\rho_K^\top(g)\\
    &=\rho_K(g)\big[F(\Theta, \Theta^{-1}, \mathcal{K}, \mathcal{K}^{-1})(\Pi^{-1}(g))^\top\big]_{ij}\rho_K^\top(g)\\
    &=\rho_K(g)\big[F(\Theta, \Theta^{-1}, \mathcal{K}, \mathcal{K}^{-1})\Pi(g)\big]_{ij}\rho_K^\top(g)\,.
\end{align}
\end{proof}

\thmemergingequiv*
\begin{proof}
The mean function of the output distribution on a test sample $x$ after training time $t$ is according to \eqref{eq:output_mean} given by 
\begin{align}
    \mu(\rho_X(g)x)
    &=\Theta(\rho_X(g)x,\mathcal{X})[\Theta^{-1}T_t]\mathcal{Y})\\
    &=\rho_K(g)\Theta(x,\rho^{-1}_X(g)\mathcal{X})\rho_K^\top(g)[\Theta^{-1}T_t]\mathcal{Y}\\
    &=\rho_K(g)\Theta(x,\mathcal{X})\rho_K^\top(g)\Pi(g)[\Theta^{-1}T_t]\mathcal{Y}\\
    &=\rho_K(g)\Theta(x,\mathcal{X})[\Theta^{-1}T_t]\rho_K^\top(g)\Pi(g)\mathcal{Y}\\
    &=\rho_K(g)\Theta(x,\mathcal{X})[\Theta^{-1}T_t]\rho_K^\top(g)\rho_Y(g)\mathcal{Y}\,,
\end{align}
On a label with spatial index $a$ and channel index $\alpha$, $\rho_Y$ acts according to \eqref{eq:rhoY}. Furthermore, the index structure of the NTK will match the index structure of the labels since we use the network outputs to predict the labels. If the labels carry a channel index, then $\Theta$ is proportional to the unit matrix in this index, as mentioned in \eqref{eq:index_structure_ntk} and hence the representation $\tau_Y$ commutes all the way to the left. Finally, the action of $\rho$ on the spatial indices of the labels (if present) is the same as the action of $\rho_K$, so we obtain
\begin{align}
    \mu(\rho_X(g)x)&=\tau_Y(g)\rho_K(g)\Theta(x,\mathcal{X})[\Theta^{-1}T_t]\rho_K^\top(g)\rho_K(g)\mathcal{Y}\\
    &=\tau_Y(g)\rho_K(g)\Theta(x,\mathcal{X})[\Theta^{-1}T_t]\mathcal{Y}\\
    &=\tau_Y(g)\rho_K(g)\mu(x)\\
    &=\rho_Y(g)\mu(x)\,.\label{eq:murhok_trafo}
\end{align}
The covariance function transforms according to
\begin{align}
    \Sigma_t(\rho_X(g)x,\rho_X(g)x') &= \mathcal{K}(\rho_X(g)x, \rho_X(g)x') + \Sigma_t^{(1)}(\rho_X(g)x,\rho_X(g)x') - ( \Sigma_t^{(2)}(\rho_X(g)x,\rho_X(g)x') + \textrm{h.c.} ) \,.
\end{align}
The transformation of $\mathcal{K}$ is given by Theorem \ref{lemma1}. The transformation of $\Sigma_t^{(1)}$ is given by
\begin{align}
    &\Sigma_t^{(1)}(\rho_X(g) x,\rho_X(g)x')\nonumber\\
    =&\ \Theta(\rho_X(g) x, \mathcal{X})\,(\Theta(\mathcal{X},\mathcal{X}))^{-1}\,T_t\, \mathcal{K}(\mathcal{X},\mathcal{X})\,T_t\, (\Theta(\mathcal{X},\mathcal{X}))^{-1}\,\Theta(\mathcal{X}, \rho_X(g) x') \\
    =&\ \rho_K(g)\Theta(x, \mathcal{X})\rho_K^\top(g)\Pi(g)\,(\Theta(\mathcal{X},\mathcal{X}))^{-1}\,T_t\, \mathcal{K}(\mathcal{X},\mathcal{X})\,T_t\, (\Theta(\mathcal{X},\mathcal{X}))^{-1}\,\Pi^\top(g)\rho_K(g)\Theta(\mathcal{X}, x')\rho_K^\top(g) \\
    =&\ \rho_K(g)\Theta(x, \mathcal{X})\,(\Theta(\mathcal{X},\mathcal{X}))^{-1}\,T_t\, \mathcal{K}(\mathcal{X},\mathcal{X})\,T_t\, (\Theta(\mathcal{X},\mathcal{X}))^{-1}\,\rho_K^\top(g)\Pi(g)\Pi^\top(g)\rho_K(g)\Theta(\mathcal{X}, x')\rho_K^\top(g) \\
    =&\ \rho_K(g)\Theta(x, \mathcal{X})\,(\Theta(\mathcal{X},\mathcal{X}))^{-1}\,T_t\, \mathcal{K}(\mathcal{X},\mathcal{X})\,T_t\, (\Theta(\mathcal{X},\mathcal{X}))^{-1}\,\Theta(\mathcal{X}, x')\rho_K^\top(g) \\
    =&\ \rho_K(g)\Sigma_t^{(1)}(x,x')\rho_K^\top(g)\,.
\end{align}
Similarly for $\Sigma_t^{(2)}$. In total, the covariance function transforms according to
\begin{align}
    \Sigma_t(\rho_X(g)x,\rho_X(g)x')=\rho_K(g)\Sigma_t(x,x')\rho_K^\top(g)\,.
\end{align}
Since the covariance function is also proportional to the unit matrix in possible channel dimensions, adding a transformation of $\tau_Y$ on the left and $\tau_Y^\top$ on the right does not change the expression. Therefore
\begin{align}
    \Sigma_t(\rho_X(g)x,\rho_X(g)x')=\rho_Y(g)\Sigma_t(x,x')\rho_Y^\top(g)\,.
\end{align}
Together with \eqref{eq:murhok_trafo}, this implies that the output distribution is equivariant w.r.t.\ $\rho_Y$ for any training time $t$ and for any input $x$.
\end{proof}

\subsection{Finite Number of Ensemble Members}

\begin{lemma}
    The probability that the deep ensemble $\bar{f}_t$ and its estimate $\hat{f}_t$ differ by more than a given threshold $\delta$ is bounded by
    \begin{align}
        \mathbb{P} \left[ |\hat{f}_t(x) - \bar{f}_t(x)| > \delta \right] \le \sqrt{\frac{2}{\pi}} \, \frac{\sigma_x}{\delta} \exp\left( - \frac{\delta^2}{2 \sigma_x^2} \right) \,,\label{eq:sample_prob_bound}
    \end{align}
    where we have defined
    \begin{align}
        \sigma_x^2 := \textrm{Var}(\hat{f}_t)(x) = \frac{\Sigma_t(x)}{M}
    \end{align}
    with the output variance $\Sigma_t(x)=\Sigma_t(x,x)$ defined in \eqref{eq:output_var}.
\end{lemma}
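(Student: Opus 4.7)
The plan is to reduce this to a standard Gaussian tail bound. In the infinite-width limit, the output of any single trained ensemble member $f_{\mathcal{L}_t w_i}(x)$ at a fixed test point $x$ is a draw from the Gaussian process described by $(\mu_t,\Sigma_t)$, so it is distributed as $\mathcal{N}(\mu_t(x),\Sigma_t(x))$. Since $\bar f_t(x)=\mu_t(x)$ by the remark after \eqref{eq:output_var}, and since the $M$ members are obtained from independent initialisations $w_i\sim p$, the estimator $\hat f_t(x)$ is an average of $M$ i.i.d.\ Gaussians. Hence
\begin{align*}
\hat f_t(x)-\bar f_t(x)\ \sim\ \mathcal{N}\!\left(0,\ \tfrac{\Sigma_t(x)}{M}\right)\ =\ \mathcal{N}(0,\sigma_x^2).
\end{align*}

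First I would state this reduction cleanly, identifying the variance of the error as $\sigma_x^2=\Sigma_t(x)/M$. Then I would invoke the elementary Mills-ratio bound for the standard normal: for any $z>0$,
\begin{align*}
\int_z^{\infty}\!e^{-u^2/2}\,du\ \le\ \frac{1}{z}\,e^{-z^2/2},
\end{align*}
which follows from the trivial inequality $e^{-u^2/2}\le (u/z)e^{-u^2/2}$ for $u\ge z$ and direct integration. Applying this with $z=\delta/\sigma_x$ and doubling to account for the two-sided tail gives
\begin{align*}
\mathbb{P}\!\left[|\hat f_t(x)-\bar f_t(x)|>\delta\right]
\ =\ 2\,\mathbb{P}\!\left[\mathcal{N}(0,1)>\tfrac{\delta}{\sigma_x}\right]
\ \le\ \sqrt{\tfrac{2}{\pi}}\,\tfrac{\sigma_x}{\delta}\,\exp\!\left(-\tfrac{\delta^2}{2\sigma_x^2}\right),
\end{align*}
which is the claim.

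There is no real obstacle: the only non-trivial input is the fact that in the NTK limit the ensemble members really are i.i.d.\ Gaussian with the prescribed variance, which is exactly the content of the NTK/NNGP result of \citet{lee2019a} reviewed in Section~3. Everything else is a one-line Mills-ratio estimate. The explicit $M$-threshold stated in the main-text lemma is then obtained by setting the right-hand side equal to $\epsilon$, using $\sigma_x^2=\Sigma_t(x)/M$, solving for $M$, and dropping the logarithmically small $\ln(\sigma_x/\delta)$ correction to obtain the clean form $M>-2\Sigma_t(x)\ln(\sqrt{\pi}\,\epsilon)/\delta^2$; this relaxation (ignoring the sublogarithmic factor) explains the remark in the main text that a slightly tighter, implicit bound is available.
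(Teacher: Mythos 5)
Your proof is correct and takes essentially the same route as the paper: both reduce to a Gaussian tail bound for $\hat f_t(x)-\bar f_t(x)\sim\mathcal{N}(0,\sigma_x^2)$ and apply the Mills-ratio estimate (the paper phrases it as inserting $1\le\tau/\min(\tau)$ into the Gaussian integral after the substitution $\tau=t/(\sigma_x\sqrt{2})$, which is the same inequality). Your closing remark about how the main-text $M$-threshold follows is also consistent with the paper's derivation, which drops the factor $1/z\le 1$ for $z\ge 1$ — equivalent to discarding the $\ln z$ correction you mention.
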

\begin{proof}
   The probability of such deviations is given by
    \begin{align}
       \mathbb{P} \left[ |\hat{f}_t(x) - \bar{f}_t(x)| 
 > \delta \right] = \frac{2}{\sqrt{2 \pi} \sigma_x} \int_{\delta}^{\infty} \exp\left(-\frac{t^2}{2\sigma^2_x} \right)  \mathrm{d}t 
    \end{align}
    We now change the integration variable to $\tau = \frac{t}{\sigma_x \sqrt{2}}$ and obtain
    \begin{align}
        \mathbb{P} \left[ |\hat{f}_t(x) - \bar{f}(_tx)| > \delta \right] = \frac{2}{\sqrt{\pi}} \int_{\frac{\delta}{\sqrt{2} \sigma_x}}^\infty \exp(-\tau^2) \mathrm{d}\tau \le \frac{1}{\sqrt{\pi}} \, \frac{\sqrt{2} \sigma_x}{\delta} \, \int_{\frac{\delta}{\sqrt{2} \sigma_x}}^\infty (2\tau) \, \exp(-\tau^2) \mathrm{d}\tau \,,
    \end{align}
    where we have used that $1 \le \frac{2\tau}{2\min(\tau)}$ for $\tau \ge \min(\tau)$ to obtain the last inequality. The integral can be straightforwardly evaluated by rewriting the integrand as a total derivative and we thus obtain 
    \begin{align}
        \mathbb{P} \left[ |\hat{f}_t(x) - \bar{f}_t(x)| > \delta \right] \le \sqrt{\frac{2}{\pi}} \, \frac{\sigma_x}{\delta} \exp\left( - \frac{\delta^2}{2 \sigma^2_x} \right) \,.
    \end{align}
\end{proof}
We stress that this result holds for any Monte-Carlo estimator and we therefore suspect that it could be well-known. For most MC estimators, it is however of relatively little use as the variance $\Sigma$ is not known in closed form --- in stark contrast to the deep ensemble, see \eqref{eq:output_var}, considered in this paper. This could explain why we were not able to locate this result in the literature.

For the deep ensemble, we can therefore exactly determine the necessary number of ensemble size to stay within a certain threshold $\delta$ with a given probability $1-\epsilon$. For this, one has to set the right-hand-side of the derived expression to this confidence $\epsilon$ and solve for the necessary ensemble size $M$.
However, this equation appears to have no closed-from solution and needs to be solved numerically. We advise the reader to do so if need for a tight bound arises. For the presentation in the main part, we however wanted to derive a closed-form solution for $M$ and thus had to rely on a looser bound which implies the following statement:

\lemboundfiniteensemble*
\begin{proof}
\begin{align}
    \mathbb{P} \left[ |\hat{f}_t(x) - \bar{f}_t(x)| > \delta \right] < \frac{1}{\sqrt{\pi}} \, \frac{1}{z} \exp\left( - z^2 \right) \le \frac{1}{\sqrt{\pi}} \,  \exp\left( - z^2 \right) \overset{!}{<} \epsilon
\end{align} 
with $z = \frac{\delta}{\sqrt{2} \sigma_x}$ and where we assume that $M$ is chosen sufficiently large such that $z \ge 1$. This implies that
\begin{align}
    z^2 > - \ln (\sqrt{\pi} \epsilon) \qquad \Leftrightarrow \qquad M >  - \frac{2 \Sigma_t(x)}{\delta^2} \ln (\sqrt{\pi}  \epsilon) \,.
\end{align}
    
\end{proof}

\subsection{Continuous Groups}
\thboundcont*
\begin{proof}
As described in the main text, we consider a finite subgroup $A \subset G$ which we use for data augmentation (instead of using the continuous group $G$). The discretization error for the representation $\rho_X$ is given by
\begin{align}
    \epsilon = \max_{g \in G} \, \min_{g'\in A} \|\rho_X(g) - \rho_X(g')\|_{\mathrm{op}} \,.
\end{align}
This implies that for any $g \in G$, we can find a $g' \in A$ such that
\begin{align}
    \|\rho_X(g) x_i - x_{\pi_{g'}(i)}\| = \|\rho_X(g) x_i - \rho_X(g') x_{i}\| \le \|\rho_X(g) - \rho_X(g')\|_\mathrm{op} \, \|x_i\| < \epsilon \|x_i\| \,,
\end{align}
where we have used data augmentation \eqref{eq:perm_group_action} over $A$.

We can then calculate the difference of the prediction at any test point $x$ and its transformation:
\begin{align}
    |\bar{f}_t(x) - \bar{f}_t(\rho_X(g) x)|&=|\mu_t(x) - \mu_t(\rho_X(g) x)|\\
    &= |(\Theta(x, x_i) -  \Theta(\rho_X(g) x, x_i)) \, \Theta^{-1}_{ij} \, (\mathbb{I} - \exp(- \eta \Theta t) )_{jk} \,y_k |
\end{align}
From the Lemma~\ref{lemma2}, it follows that
\begin{align}
    \Theta(x, x_i) \, \Theta^{-1}_{ij} \, (\mathbb{I} - \exp(- \eta \Theta t) )_{jk} \,y_k &= \Theta(x, x_i) \, \Theta^{-1}_{ij} \, (\mathbb{I} - \exp(- \eta \Theta t) )_{jk} \,y_{\pi_{g'}(k)} \\
    &= \Theta(x, x_{\pi_{g'}^{-1}(i)} ) \, \Theta^{-1}_{ij} \, (\mathbb{I} - \exp(- \eta \Theta t) )_{jk} \,y_{k}
\end{align}
Thus the difference can be rewritten as follows
\begin{align}
    |\bar{f}_t(x) - \bar{f}_t(\rho_X(g) x)| &= | (\Theta(x, x_{\pi_{g'}^{-1}(i)}) -  \Theta(\rho_X(g) x, x_i)) \, \Theta^{-1}_{ij} \, (\mathbb{I} - \exp(- \eta \Theta t) )_{jk} \,y_k |  \\
    &=| (\Theta(x, x_{\pi_{g'}^{-1}(i)}) -  \Theta(x, \rho_X^{-1}(g)x_i)) \, \Theta^{-1}_{ij} \, (\mathbb{I} - \exp(- \eta \Theta t) )_{jk} \,y_k |  \label{eq:intermit_continous}
\end{align}
It is convenient to define
\begin{align}
 \Delta \Theta(x', x, \bar{x}) \equiv   | \Theta(x', x) -  \Theta(x', \bar{x}) |
\end{align}
which can be bounded as follows
\begin{align}
\Delta \Theta(x', x, \bar{x}) &= \left| \sum_{l=1}^L \mathbb{E}_{w \sim p} \left[  \left(\frac{\partial f_w(x')}{\partial w^{(l)}}\right)^\top \, \left( \frac{\partial f_w(x)}{\partial w^{(l)}} - \frac{\partial f_w(\bar{x})}{\partial w^{(l)}} \right) \right] \right| \\
&\le \|x - \bar{x}\| \, \sum_{l=1}^L \mathbb{E}_{w \sim p} \left[ \left|  \left(\frac{\partial f_w(x')}{\partial w^{(l)}}\right)^\top \cdot L(w^{(l)}) \right| \right] \\
&\equiv \|x-\bar{x}\| \, \hat{C}(x) \,,
\end{align}
where $L(w^{(l)})$ is the Lipschitz constant of $\partial_{w^{(l)}} f_w$ and  we emphasize that the norm is with respect to the input space. Using this expression, we can bound the difference of the means \eqref{eq:intermit_continous} by using the triangle inequality
\begin{align*}
    |\bar{f}_t(x) - \bar{f}_t(\rho_X(g) x)| &\le \hat{C}(x) \sqrt{\sum_i \|x_{\pi_{g'}^{-1}(i)} - \rho_X(g)^{-1} x_i\|^2} \, \sqrt{\sum_i (\sum_{j,k } \Theta^{-1}_{ij} \, (\mathbb{I} - \exp(- \eta \Theta t) )_{jk} \,y_k ] )^2  } \\
    &\le \epsilon \, \hat{C}(x) \sqrt{\sum_i \|x_i\|^2} \, \sqrt{\sum_i (\sum_{j,k } \Theta^{-1}_{ij} \, (\mathbb{I} - \exp(- \eta \Theta t) )_{jk} \,y_k ] )^2  } 
    \equiv \epsilon C(x) \,.
\end{align*}
Note that this result suggests that one should choose the discretization carefully to achieve as tight of a bound as possible.
\end{proof}

\section{Experiments}
In this section, we provide further details about our experiments.

\subsection{Ising Model}
\label{app:ising_model}

\begin{figure}[tb]
  \centering
  \includegraphics[width=\textwidth]{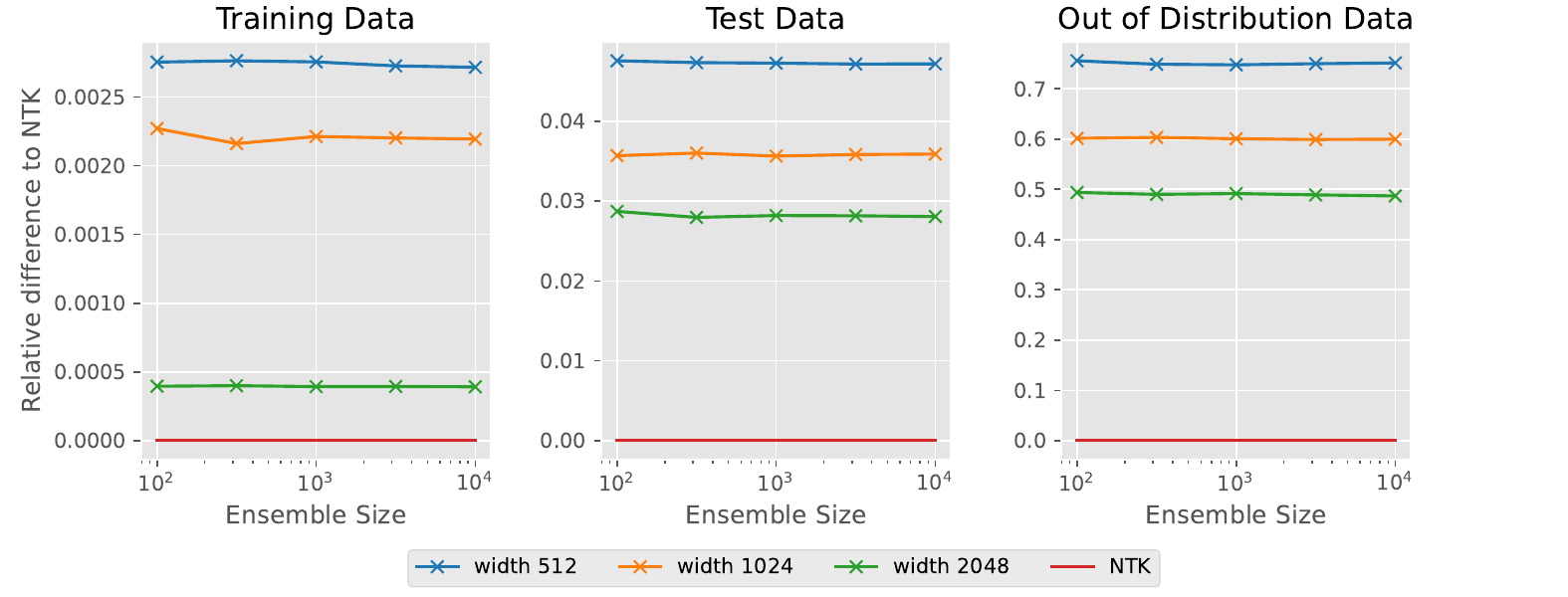}
  \caption{Difference in relative predicted total energy $\mathcal{E}$ between the ensembles and the NTK on the training data, in-distribution test data and out of distribution.}
  \label{fig:ntk_conv}
\end{figure}

\paragraph{Training details}
The energy function of the Ising model can be written as
\begin{align}
  \mathcal{E} = -\frac{J}{\mathrm{vol}(L)}\sum_{i\in L}E(i)\,,\label{eq:1}
\end{align}
where $J$ is a coupling constant which we set to one for convenience and $\mathrm{vol}(L)$ denotes the number of lattice sites. The local energy $E(i)$ is given by\footnote{Usually, one only sums over pairs of spins. Our prescription differs from that convention by an irrelevant factor of two and makes the local energy exactly equivariant under rotations of the lattice by $90^{\circ}$.}
\begin{align}
  E(i)=\sum_{j\in\mathcal{N}(i)}s_{i}s_{j}\,,\label{eq:2}
\end{align}
where $\mathcal{N}(i)$ denotes the neighbors of $i$ along the lattice axes. The expectation value of $\mathcal{E}$ vanishes and its standard deviation is $2$ for uniform sampling of spins in $\{+1,-1\}$.

The energy of the Ising model is invariant under rotations of the lattice by $90^{\circ}$, since the local energy \eqref{eq:2} stays invariant if the neighborhood is rotated and the sum in \eqref{eq:1} is just reshuffled. We train a fully-connected network with one hidden layer and a ReLU activation on $128$ samples augmented with full $C_{4}$ orbits to $512$ training samples. To obtain a sufficient training signal, we train the networks with a squared error loss on the local energies~\eqref{eq:2}. We train for 100k steps of full-batch gradient descent with learning rate $0.5$ for network widths $128$, $512$ and $1024$ and learning rate $1.0$ for network width $2048$.

\paragraph{Ensemble-convergence to the NTK}
We verify that the ensembles converge to the NTK for large widths by computing the difference in total energy $\mathcal{E}$ between the mean ensemble prediction and the predicted mean of the NTK, cf.\ Figure~\ref{fig:ntk_conv}. To make the numbers easily interpretable, we plot the relative difference, where we divide by the standard deviation of the ground truth energy, $2$, which gives a typical value for $\mathcal{E}$. We perform the comparisons on the training data, in-distribution test data and out of distribution data. As expected, agreement is highest on the training data and lowest out of distribution, but in each case, ensembles with higher-width hidden layer generate mean predictions closer to the NTK. Beyond ensemble size $1000$, the estimate of the expectation value over initializations in the NTK seems to be accurate enough that no further fluctuations can be seen in the plots.

\subsection{Rotated FashionMNIST}
\label{app:fmnist}

\paragraph{Ensemble architecture} As ensemble members, we use a simple convolutional neural network with two convolutional layers of kernel size $3$ and $6$ as well as $16$ channels respectively. Both convolutional layers are followed by a relu non-linearity as well as $2\times2$ max-pooling. This is then followed by layers fully-connected of size $(400,120)$, $(120, 84)$, and $(84, 10)$ of which the first two are fed into relu non-linearities. We choose ensembles of size $M=5, 10, 100$. 

\paragraph{OOD data} We use the validation set of greyscaled and rescaled CIFAR10, the validation set of MNIST, as well as a dataset generated by images with pixels drawn iid from $N(0,1)$ as OOD data. We also evaluate the invariance on the validation set of FMNIST, i.e., on in-distribution data. Please refer to the corresponding Figure~\ref{fig:app_fminst_rand}, \ref{fig:app_fminst_fminst}, and \ref{fig:app_fminst_cifar10} contained in this appendix for the results.

\paragraph{Data augmentation} We augment the original dataset by all elements of the group orbit of the cyclic group $C_k$, i.e., all rotations of the image by any multiple of $360/k$ degrees and ensure that each epoch contains all element of the group orbit in each epoch to closely align the experiments with our theoretical analysis. However, in exploratory analysis, we did not observe a notable difference when applying random group elements in each training step. For the cyclic group $C_k$, we choose group orders $k=4, 8, 16$.

\paragraph{Training details} We use the ADAM optimizer with the standard learning rate of pytorch lightning, i.e., 1e-3. We train for 10 epochs on the augmented dataset. We evaluate the metrics after each epoch on both the in-distribution and the out-of-distribution data. The ensembles achieve a test accuracy on the augmented datasets of between 88 to 91 percent depending on the chosen group order and ensemble size.

\paragraph{OSP metric:} To obtain the orbit same prediction, we measure
\begin{equation}
    \sum_{g \in G} \mathbb{I}(\textrm{argmax}_\alpha f^\alpha(\rho_X(g) x), \textrm{argmax}_\alpha f^\alpha(x) ) \,,
\end{equation}
where $\mathbb{I}$ denotes the indicator function. This corresponds to the number of elements in the orbit that have the same predicted class as the transformed data sample $x$. The orbit same prediction (OSP) of a dataset $\mathcal{D}$ is then this number averaged over all elements in the dataset. Note that the OSP has minimal value 1 as the identity is always part of the orbit.

\paragraph{Continuous rotations:} We analyze the generalization properties to the full two-dimensional rotation group $SO(2)$ for deep ensembles trained with data augmentation using the finite cyclic group $C_k$. To this end, we define the continuous orbit same prediction as:
\begin{align}
\frac{1}{\textrm{Vol}(SO(2))} \int_{SO(2)} \mathrm{d}g \, \mathbb{I}(\textrm{argmax}_\alpha f^\alpha(\rho_X(g) x), \textrm{argmax}_\alpha f^\alpha(x) ) \,,
\end{align}
where $\mathrm{d}g$ denotes the Haar measure. This continuous orbit same prediction thus corresponds to the percentage of elements in the orbit that are classified the same way as the untransformed element. We estimate this quantity by Monte-Carlo. The results of our analysis are shown in Figure~\ref{fig:so2_appendix} and clearly establish that for sufficiently high group order of the cyclic group used for data augmentation, the ensemble is approximately invariant with respect to the continuous symmetry as well. In particular, it is signficantly more invariant as its ensemble members. Interestingly, this is competitive with a model that is using canonicalization \cite{kaba2023equivariance} with respect to $C_k$ and the same network architecture as its predictor network.

\paragraph{Comparison to other methods:} 
For the deep ensemble, we use ten ensemble members with the same convolututional architecture as outlined above. For canonicalization, we use the same convolutional architecture as for the ensemble members and the same architecture for the canonicalization network as in the original publication \cite{kaba2023equivariance}. For E2CNN, we follow the official MNIST example. We adjust hyperparameters such that all methods use roughly the same number of parameters as the deep ensemble. As a result, all methods have roughly the same number of parameters. Figure~\ref{fig:app_method_comparison} demonstrates that all methods lead to a comparable degree of equivariance. Note that interpolation effects seem to hurt the performance of canonicalization more dramatically as compared to E2CNN. This is to be expected as canonicalization works by predicting a rotation and then undoing the rotation. This leads to another compounded source of discretization errors. 

\begin{figure}[tb]
  \centering
  \includegraphics[width=0.33\linewidth]{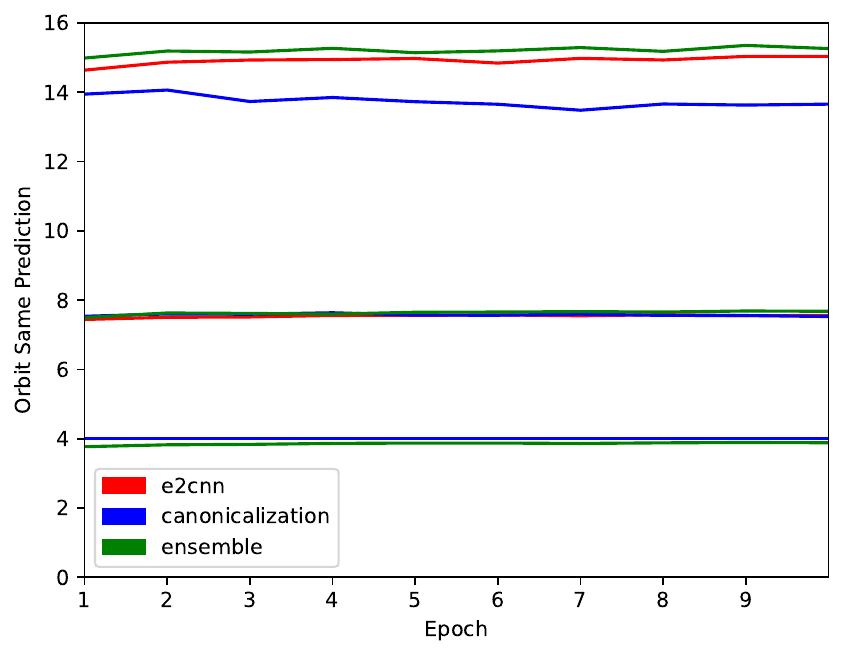}
  \hfill
  \includegraphics[width=0.33\linewidth]{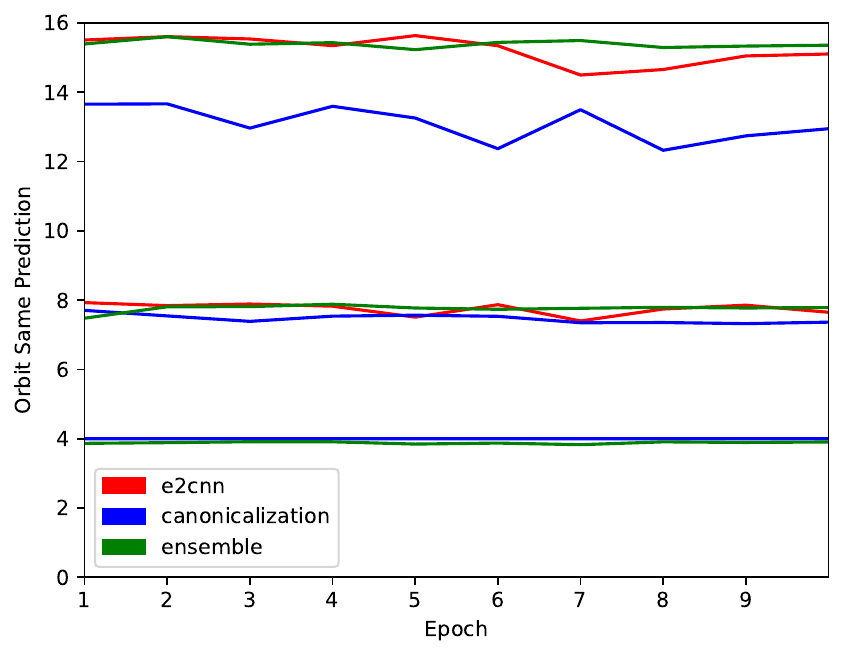}
  \hfill
  \includegraphics[width=0.33\linewidth]{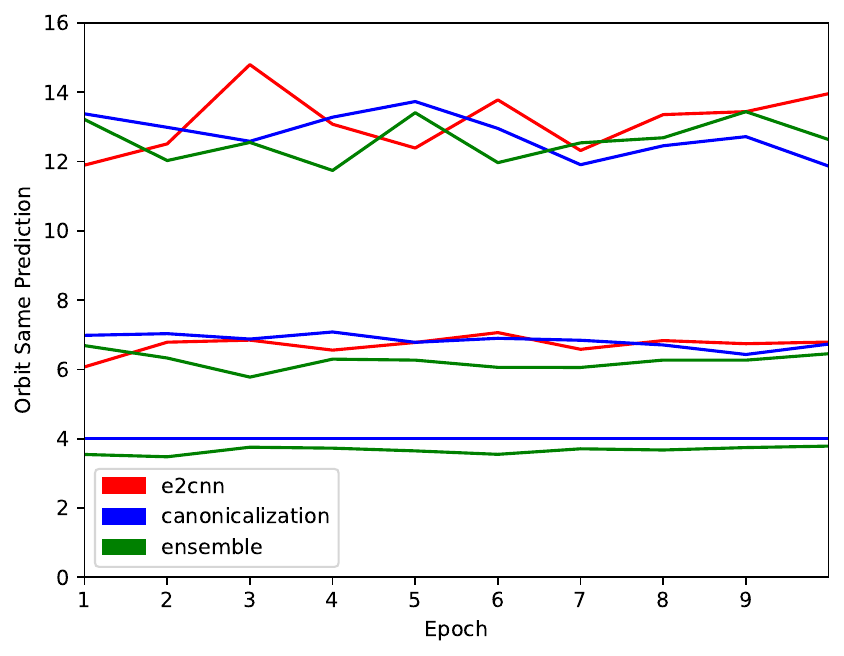}
 \caption{Comparison of various equivariance methods on in-distribution FMNIST (left), out-of-distribution MNIST (middle), out-of-distribution CIFAR10 (right). Deep ensembles are approximately equivariant due to finite-size ensembles and finite width (see discussion in main text). Canonicalization and E2CNN also do not show perfect equivariance for group orders $k>4$ because of interpolation artifacts, see, for example, discussion in \cite{kaba2023equivariance}.}
        \label{fig:app_method_comparison}
\end{figure}

\begin{figure}[t!]
  \centering
  \includegraphics[width=0.5\textwidth]{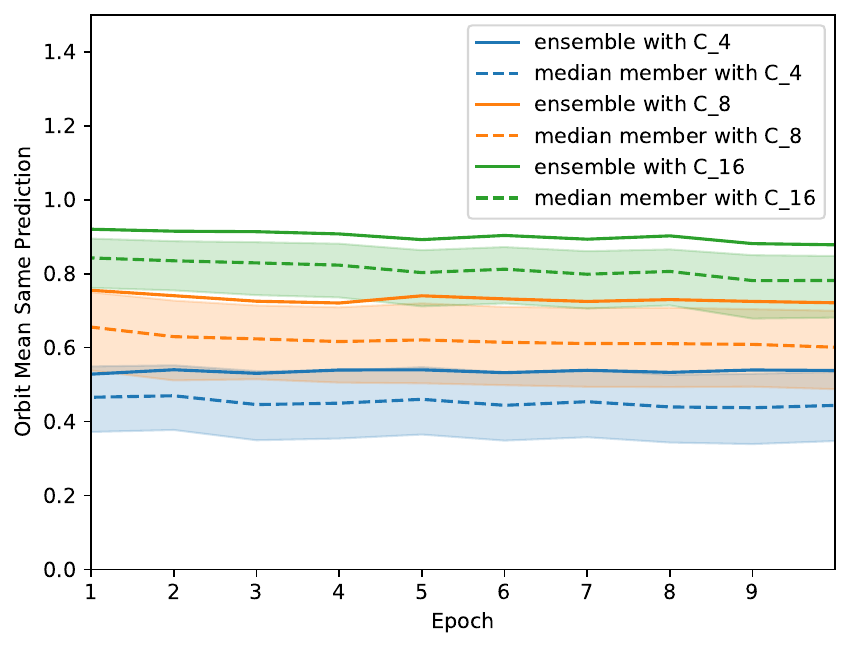}
  \caption{Mean orbit same prediction over $SO(2)$ group orbits. Solid lines show the ensemble prediction while dotted lines show the median of the ensemble members. Error band denotes the 75th and 25th percentile. As the group order $k$ of the cyclic group $C_k$ used for data augmentation increases, the mean orbit same prediction over $SO(2)$ increases. For $k=16$, over 90 percent of the orbit elements have the same prediction as the untransformed input establishing that the model is approximately invariant under the continuous symmetry as well. The invariance of the ensemble is again emergent in the sense that it is above the 75th percentile of the ensemble members.}
  \label{fig:so2_appendix}
\end{figure}

\begin{figure}[t!]
  \centering
  \includegraphics[width=0.5\textwidth]{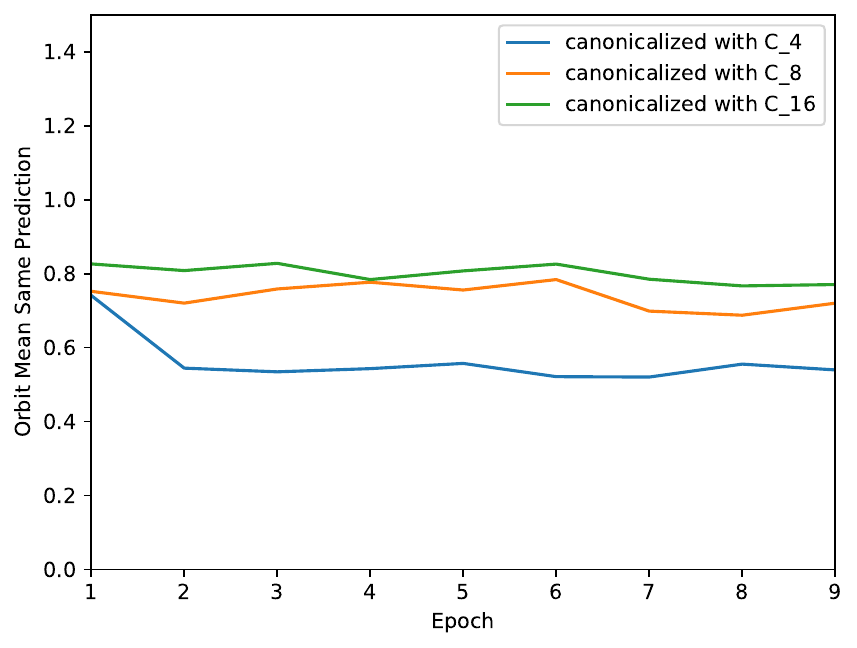}
  \caption{Mean orbit same prediction over $SO(2)$ group orbits for a model canonicalized with respect to $C_k$. As the group order $k$ of the cyclic group $C_k$ used for data augmentation increases, the mean orbit same prediction over $SO(2)$ increases.}
  \label{fig:so2_canonicalization}
\end{figure}

\begin{figure}[tb]
  \centering
  \includegraphics[width=0.48\linewidth]{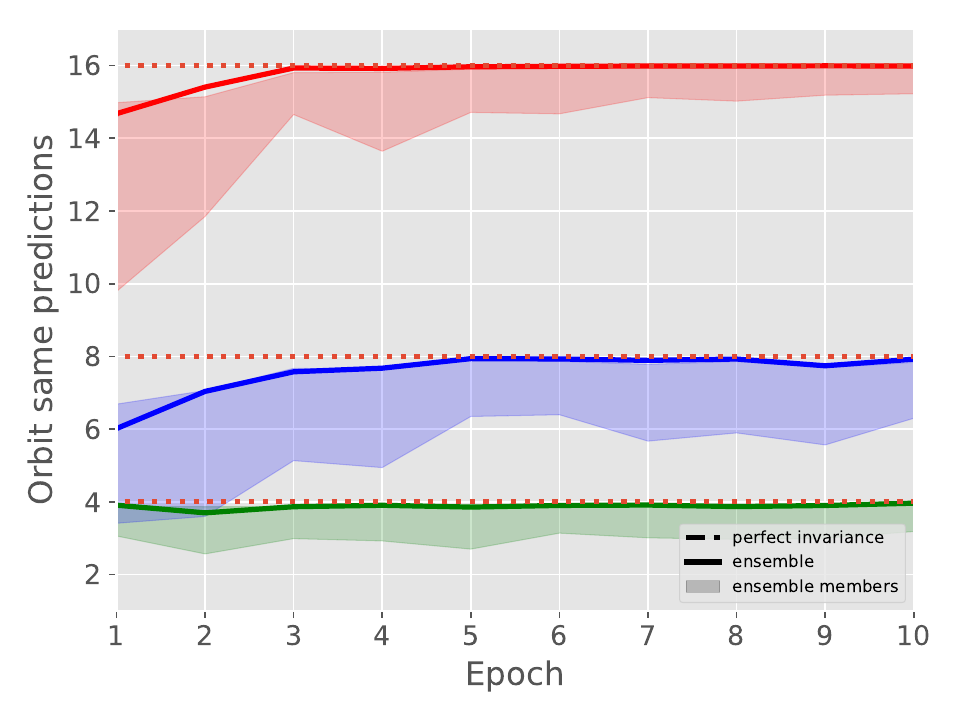}
  \hfill
  \includegraphics[width=0.48\linewidth]{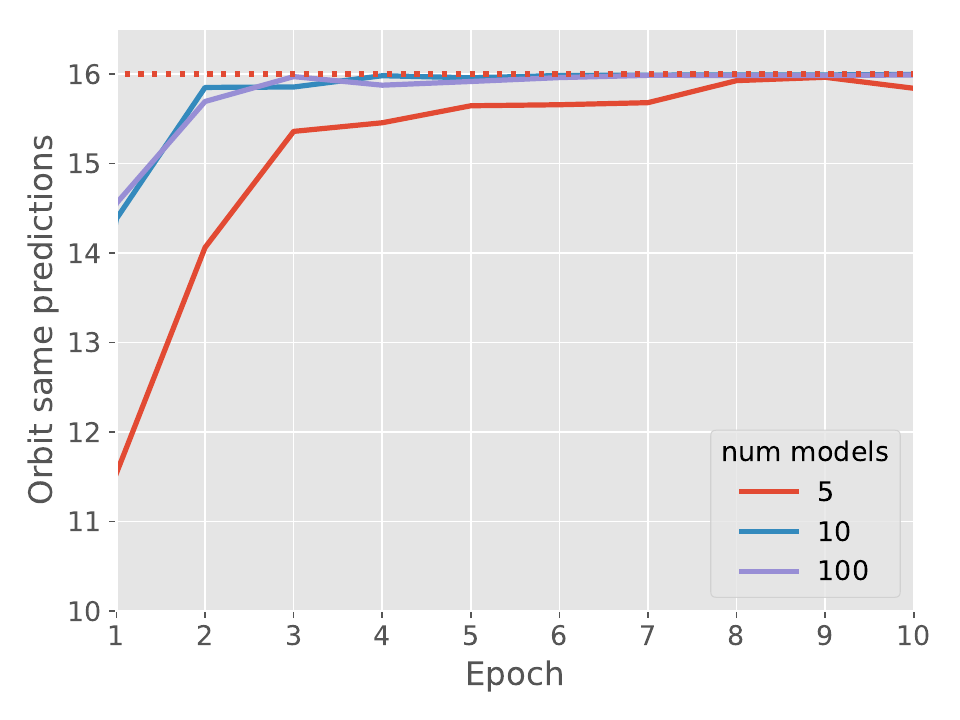}
 \caption{Same as Figure~\ref{fig:fmnist} but for OOD images with pixels drawn iid from $N(0,1)$.}
        \label{fig:app_fminst_rand}
\end{figure}

\begin{figure}[tb]
  \centering
  \includegraphics[width=0.48\linewidth]{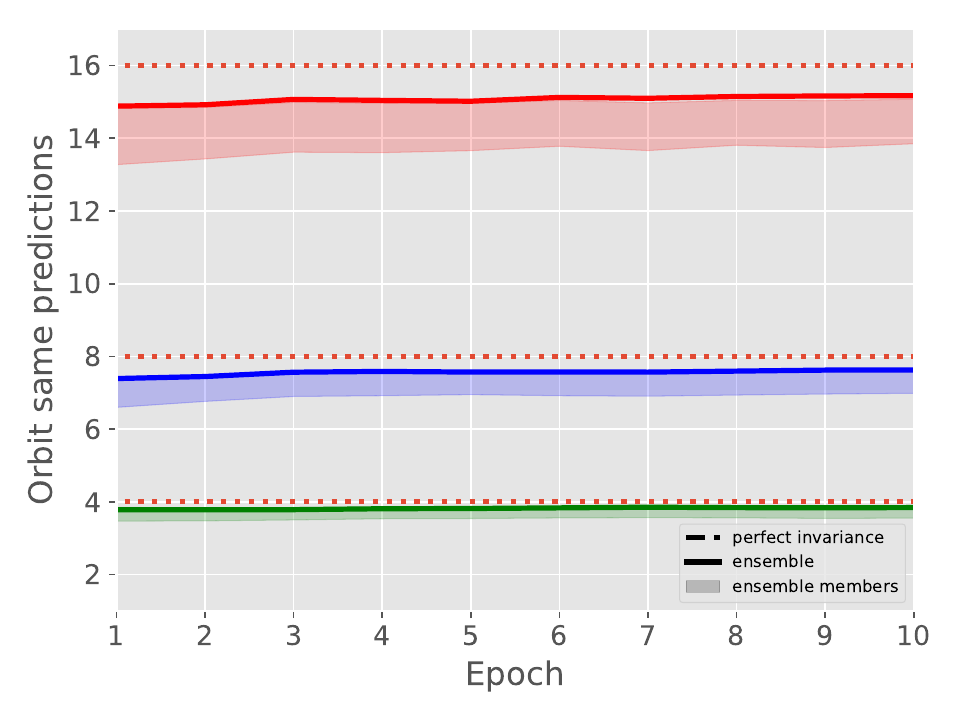}
  \hfill
  \includegraphics[width=0.48\linewidth]{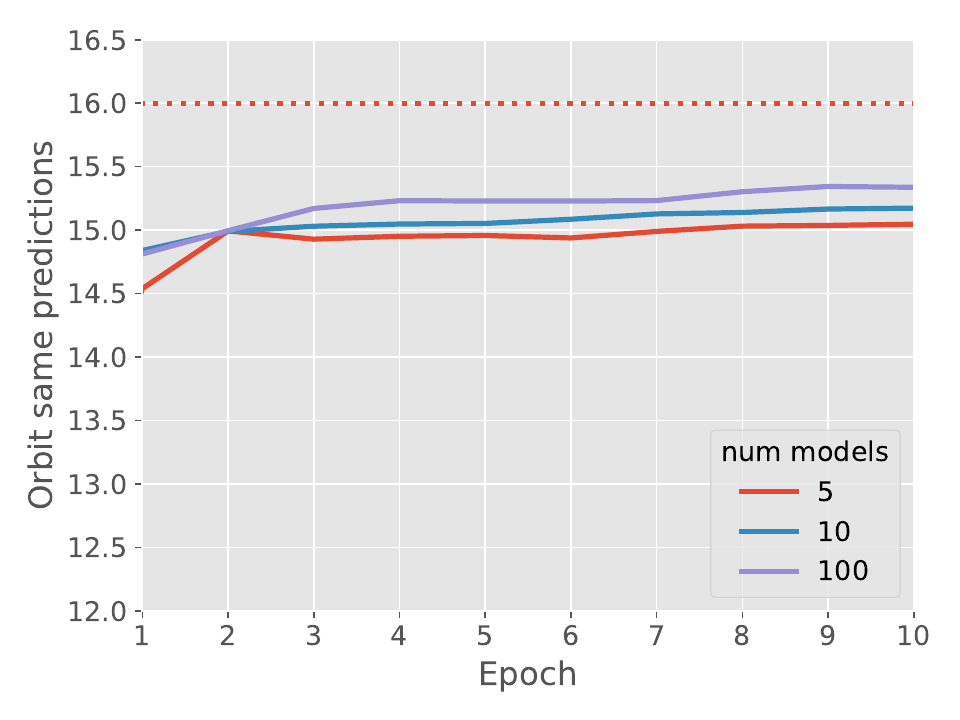}
  \caption{Same as Figure~\ref{fig:fmnist} but for FMNIST, i.e., in-distribution data.}
        \label{fig:app_fminst_fminst}
\end{figure}

\begin{figure}[tb]
  \centering
  \includegraphics[width=0.48\linewidth]{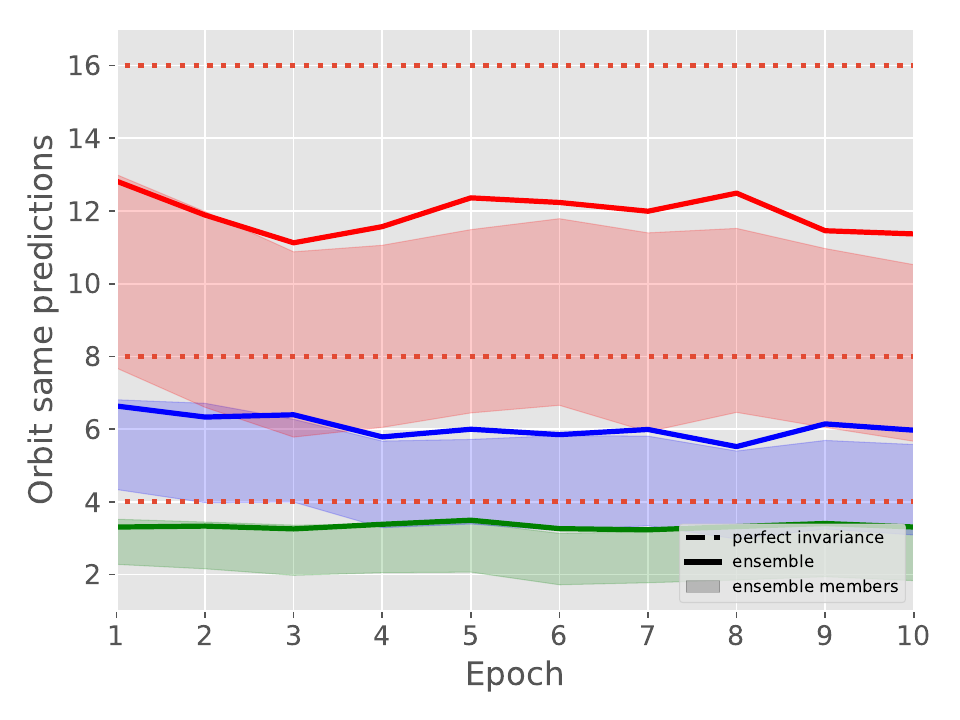}
  \hfill
  \includegraphics[width=0.48\linewidth]{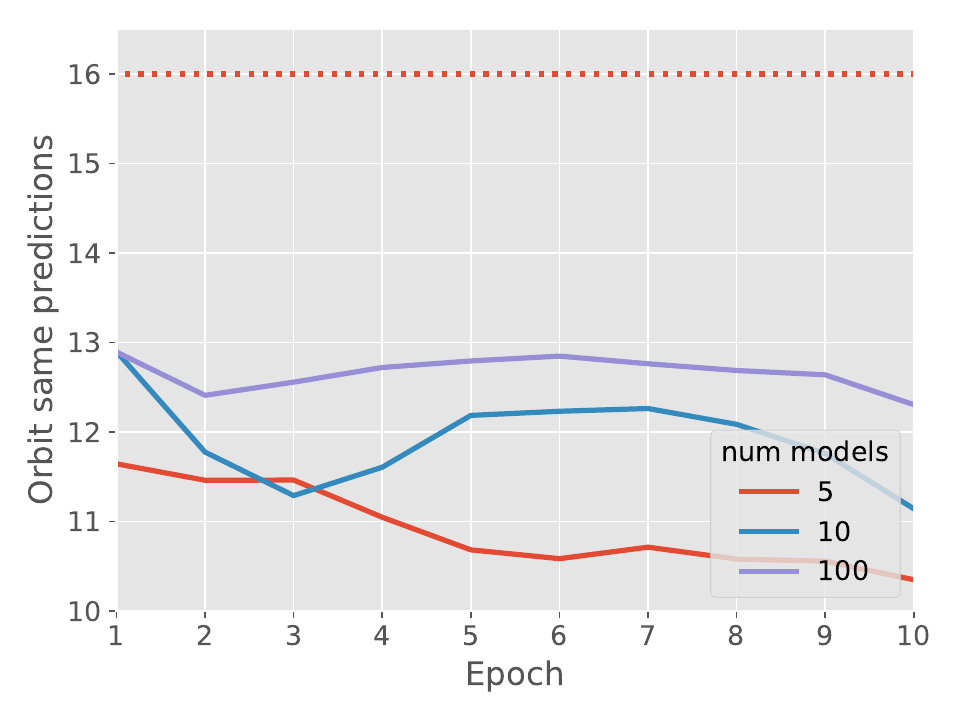}
 \caption{Same as Figure~\ref{fig:fmnist} but for rescaled and greyscaled CIFAR10 OOD data.}
        \label{fig:app_fminst_cifar10}
\end{figure}

\begin{figure}[tb]
  \centering
  \includegraphics[width=0.5\linewidth]{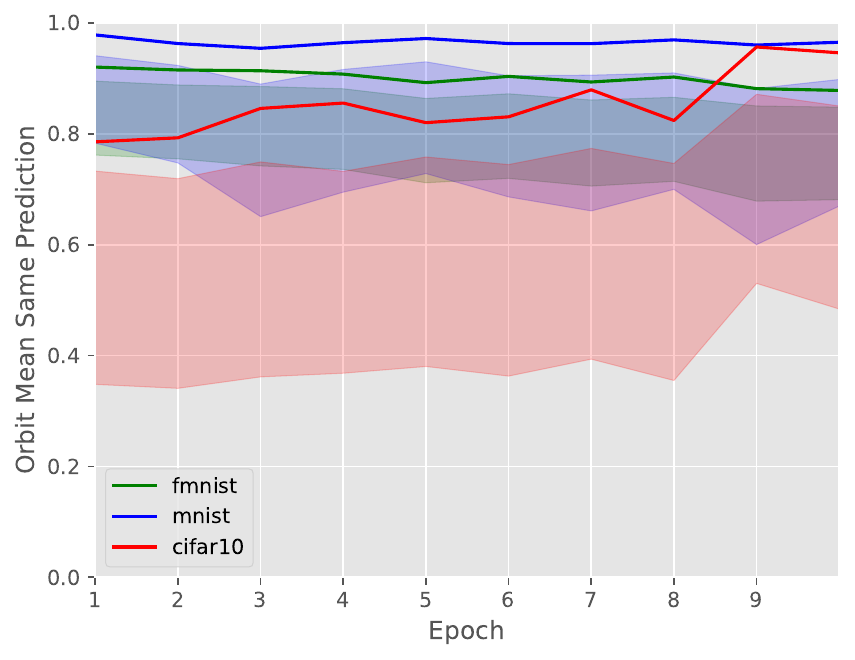}
  \caption{Equivariance extends to $SO(2)$ symmetry. Percentage of randomly sampled rotations that leave the prediction invariant is reported. Data augmentation with group order 16 is used.}
  \label{fig:so2_app}
\end{figure}

\section{Cross Product}
\begin{figure}[tb]
  \centering
  \includegraphics[width=0.8\textwidth]{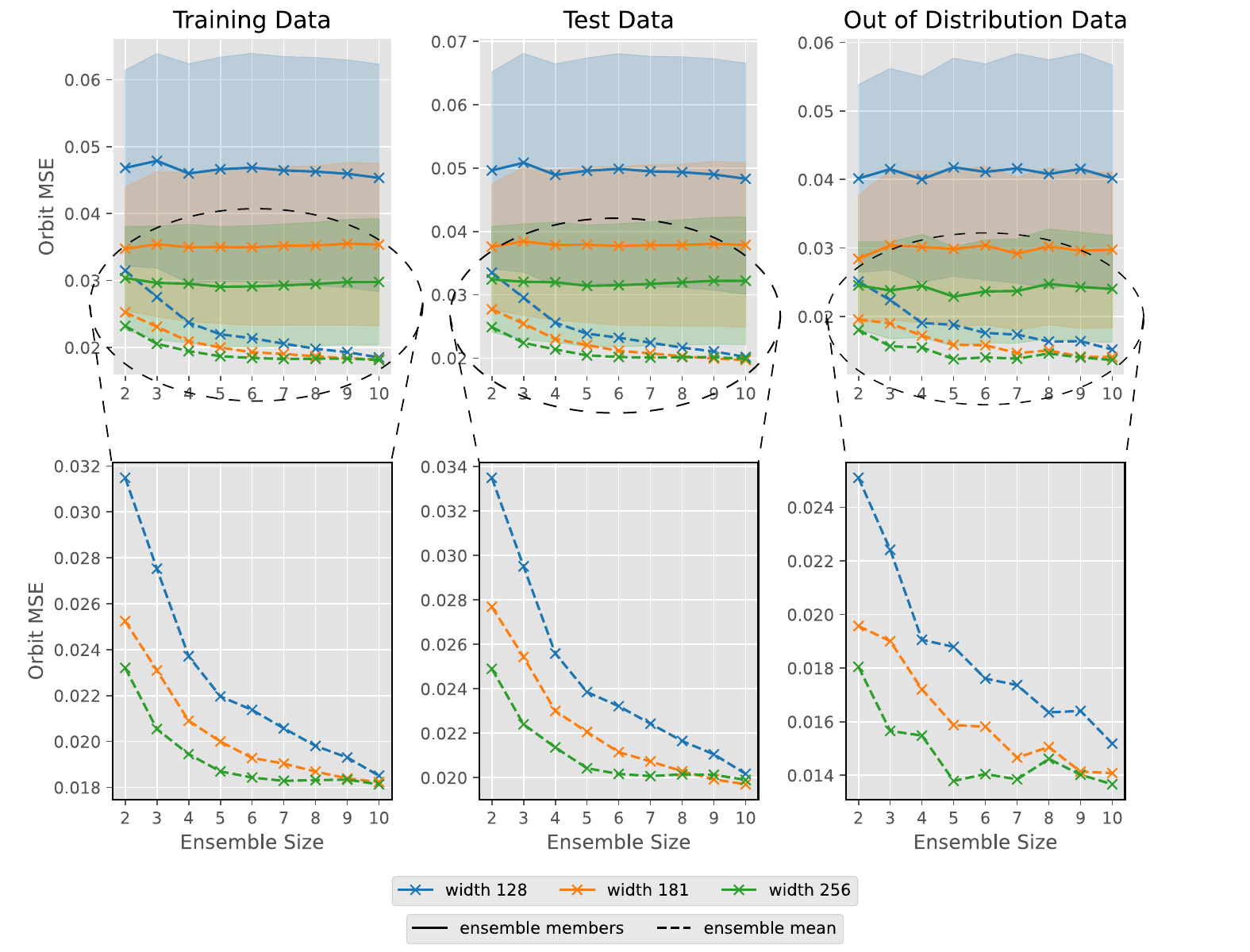}
  \caption{Emerging equivariance of ensembles predicting the cross-product. Plotted is the MSE of predictions across a random 100-element subset of the symmetry orbit of SO(3) versus ensemble size. Solid lines refer to the orbit MSE for individual ensemble members with shaded regions corresponding to $\pm$ one standard deviation, dashed lines refer to the ensemble prediction. Shown are evaluations on the training- (left), test- (middle) and out of distribution data (right). The lower row shows zoom-ins on the ensemble predictions.}
  \label{fig:cross_products}
\end{figure}

\paragraph{Training}
We train ensembles of two hidden-layer fully-connected networks to predict the cross-product $x\times y$ in $\mathbb{R}^3$ given two vectors $x$ and $y$. This task is equivariant with respect to rotations $R\in \mathrm{SO}(3)$,
\begin{align}
    Rx\times Ry=R(x\times y)\,.
\end{align}
The training data consists of 100 vector pairs with components sampled from $\mathcal{N}(0,1)$, the validation data consists of 1000 such pairs. For out of distribution data, we sample from a Poisson distribution with mean 0.5. We train using 10-fold data augmentation, i.e.\ we sample 10 rotation matrices from SO(3) and rotate the training data with these matrices, resulting in 1000 training vector pairs. We train for 50 epochs using the Adam optimizer and reach validation RMSEs of about 0.3 with exact performance depending on layer width and ensemble size.

\paragraph{Orbit MSE}
To evaluate how equivariant the ensembles trained with data augmentation are on a given dataset, we sample 100 rotation matrices from SO(3) and augment each input vector pair with their 100 rotated versions. Then, we predict the cross products on this enlarged dataset and rotate the predicted vectors back using the inverse rotations. Finally, we measure the MSE across the 100 back-rotated predictions against the unrotated prediction. The orbit MSE is averaged over the last five epochs.

 The results of our experiments on the cross-product are shown in Figure~\ref{fig:cross_products}. As above, we evaluate the orbit MSE on each ensemble member individually (solid lines and shaded region corresponding to $\pm$ one standard deviation) and for the ensemble output (dashed lines). This is true on training-, test- and out of distribution data. Also in this equivariant task is the ensemble mean about an order of magnitude more equivariant than the ensemble members. As expected from our theory, the ensemble becomes more equivariant for larger ensembles and wider networks.

\section{Histological Slices}
\label{app:hist_data}
\begin{figure}[tb]
  \centering
  \includegraphics[width=0.48\textwidth]{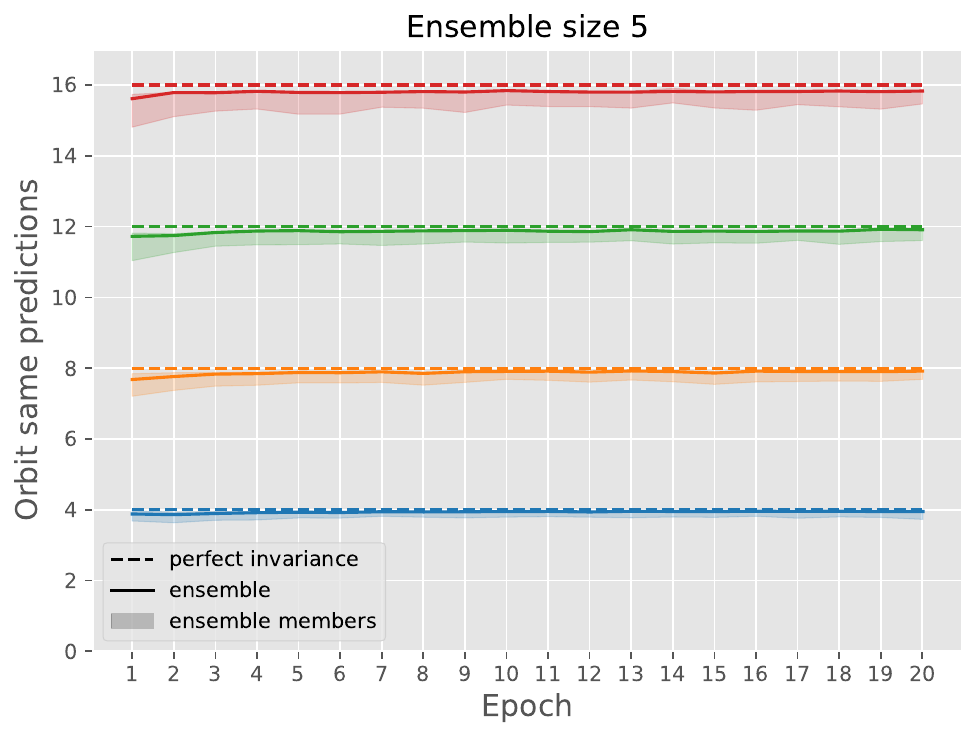}
  \hfill
  \includegraphics[width=0.48\textwidth]{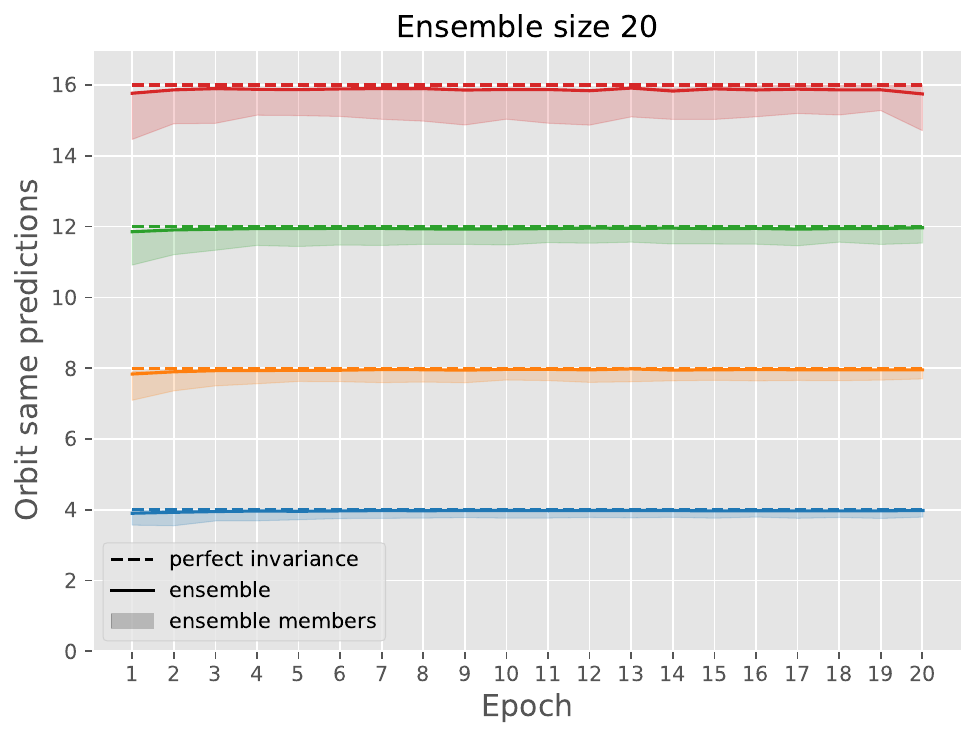}
  \caption{Ensemble invariance on validation data for ensembles trained on histological data. Number of validation samples with the same prediction across a symmetry orbit for group orders 4 (blue), 8 (orange), 12 (green) and 16 (red) versus training epoch for ensemble sizes 5 (left) and 20 (right). The ensemble predictions (solid line) are more invariant than the ensemble members (shaded region corresponding to 25\textsuperscript{th} to 75\textsuperscript{th} percentile of ensemble members). The effect is larger for ensemble size 20 (right).}
  \label{fig:hist_val_plots}
\end{figure}
\begin{figure}[tb]
  \centering
  \includegraphics[width=0.48\textwidth]{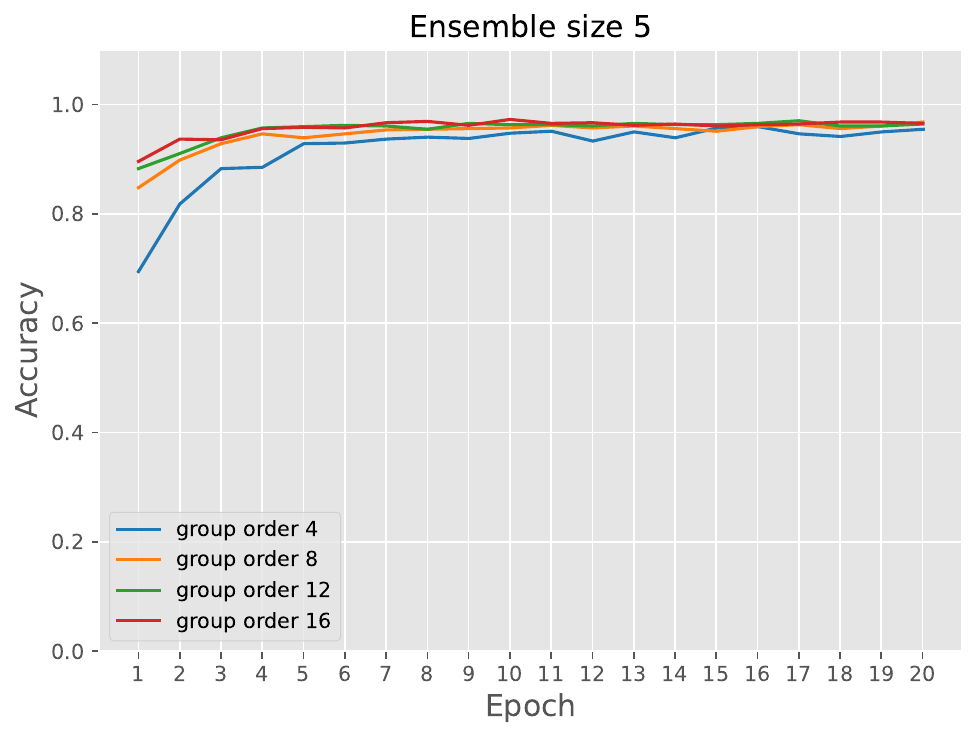}
  \hfill
  \includegraphics[width=0.48\textwidth]{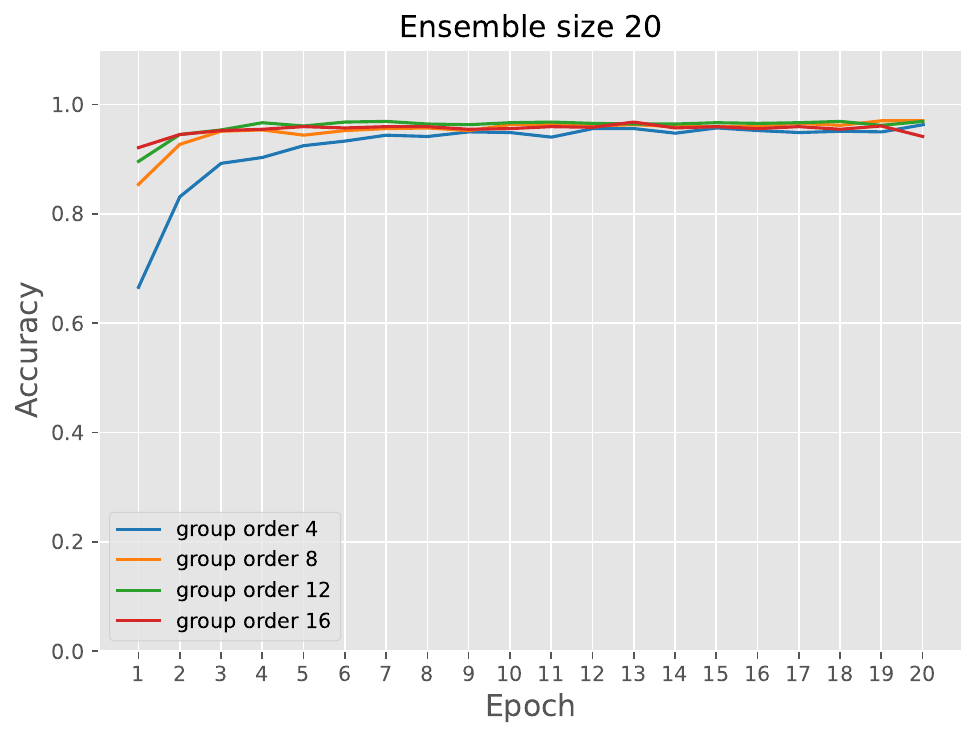}
  \caption{Validation accuracy versus training time for ensemble of size 5 (left) and 20 (right) trained on histological data.}
  \label{fig:hist_val_acc}
\end{figure}

\begin{figure}[t!]
  \centering
  \includegraphics[width=0.5\textwidth]{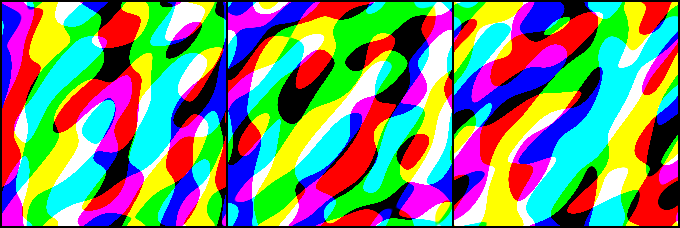}
  \caption{Three OOD data samples for the histology ensemble.}
  \label{fig:hist_ood_samples}
\end{figure}

\paragraph{Training}
The NCT-CRC-HE-100K dataset~\cite{hist_dataset} comprises 100k stained histological images in nine classes. In order to make the task more challenging, we only use 10k randomly selected samples, train on 11/12\textsuperscript{th} of this subset and validate on the remaining 1/12\textsuperscript{th}. We trained ensembles of CNNs with six convolutional layers of kernel size $3$ and $6$, $16$, $26$, $36$, $46$ and $56$ output channels, followed by a kernel size $2$, stride $2$ max pooling operation and three fully connected layers of $120$, $84$ and $9$ output channels. The models had 123k parameters each. We trained the ensembles with the Adam optimizer using a learning rate of $0.001$ on batches of size $16$. In our training setup, ensemble members reach a validation accuracy of about 96\% after 20 epochs, cf.~Figure~\ref{fig:hist_val_acc}.

\paragraph{Invariance on in-distribution data}
As for our experiments on FashionMNIST, we verify that the ensemble is more invariant as a function of its input than the ensemble members. On training- and validation data this is to be expected since the ensemble predictions have a higher accuracy than the predictions of individual ensemble members. The invariance results on validation data are depicted in Figure~\ref{fig:hist_val_plots}.

\paragraph{OOD data}
In order to arrive at a sample of OOD data on which the network makes non-constant predictions, we optimize the input of the untrained ensemble using the Adam optimizer to yield predictions of high confidence ($>99\%$), starting from 100 random normalized images for each class. We optimize only the $5\times 5$ lowest frequencies in the Fourier domain to obtain samples which can be rotated without large interpolation losses, yielding samples as depicted in Figure~\ref{fig:hist_ood_samples}.

%%% Local Variables:
%%% mode: latex
%%% TeX-master: "main_icml"
%%% End:

\end{document}